\documentclass[11pt]{article}

\usepackage{fullpage}
\usepackage{bbm}
\usepackage{mathtools}
\usepackage{natbib}
\usepackage{verbatim}
\usepackage{comment}
\usepackage[linesnumbered, ruled,vlined]{algorithm2e}

\usepackage{wrapfig}

\usepackage{microtype}
\usepackage{graphicx}
\usepackage{subfigure}
\usepackage{booktabs} %

\usepackage{collectbox}

\usepackage{hyperref}

\usepackage{tikz}
\usetikzlibrary{intersections,shapes.arrows}

\usepackage{amsthm,amsmath,amssymb, amsfonts}
\usepackage{thmtools}

\newtheorem{theorem}{Theorem}[section]

\newtheorem{lemma}[theorem]{Lemma}

\newtheorem{definition}[theorem]{Definition}
\newtheorem{remark}[theorem]{Remark}

\usepackage{comment}
\usepackage{xcolor}
\usepackage{mathrsfs}
\usepackage{enumitem}
\usepackage{bbm}
\usepackage{mdframed}
\usepackage{geometry}

\usepackage{pythonhighlight}

\usepackage{prettyref}
\newcommand{\pref}[1]{\prettyref{#1}}

\newcommand{\savehyperref}[2]{\texorpdfstring{\hyperref[#1]{#2}}{#2}}
\newrefformat{eq}{\savehyperref{#1}{\textup{(\ref*{#1})}}}
\newrefformat{eqn}{\savehyperref{#1}{Equation~\ref*{#1}}}
\newrefformat{con}{\savehyperref{#1}{Conjecture~\ref*{#1}}}
\newrefformat{lem}{\savehyperref{#1}{Lemma~\ref*{#1}}}
\newrefformat{def}{\savehyperref{#1}{Definition~\ref*{#1}}}
\newrefformat{line}{\savehyperref{#1}{line~\ref*{#1}}}
\newrefformat{thm}{\savehyperref{#1}{Theorem~\ref*{#1}}}
\newrefformat{corr}{\savehyperref{#1}{Corollary~\ref*{#1}}}
\newrefformat{sec}{\savehyperref{#1}{Section~\ref*{#1}}}
\newrefformat{app}{\savehyperref{#1}{Appendix~\ref*{#1}}}
\newrefformat{ass}{\savehyperref{#1}{Assumption~\ref*{#1}}}
\newrefformat{ex}{\savehyperref{#1}{Example~\ref*{#1}}}
\newrefformat{fig}{\savehyperref{#1}{Figure~\ref*{#1}}}
\newrefformat{tab}{\savehyperref{#1}{Table~\ref*{#1}}}
\newrefformat{alg}{\savehyperref{#1}{Algorithm~\ref*{#1}}}
\newrefformat{rem}{\savehyperref{#1}{Remark~\ref*{#1}}}
\newrefformat{conj}{\savehyperref{#1}{Conjecture~\ref*{#1}}}
\newrefformat{prop}{\savehyperref{#1}{Proposition~\ref*{#1}}}
\newrefformat{proto}{\savehyperref{#1}{Protocol~\ref*{#1}}}
\newrefformat{prob}{\savehyperref{#1}{Problem~\ref*{#1}}}
\newrefformat{claim}{\savehyperref{#1}{Claim~\ref*{#1}}}

\usepackage{cdann_definitions}

\newmdtheoremenv{test}{Test}

\definecolor{DarkRed}{rgb}{0.75,0,0}
\definecolor{DarkGreen}{rgb}{0,0.5,0}
\definecolor{DarkPurple}{rgb}{0.5,0,0.5}
\definecolor{DarkBlue}{rgb}{0,0,0.7}

\hypersetup{colorlinks=true, plainpages = false, citecolor = DarkBlue, urlcolor = DarkBlue, filecolor = black, linkcolor =DarkBlue}

\newcommand{\regret}{\mathsf{Reg}}
\newcommand{\regretbound}{R}
\newcommand{\misspidx}{\Bcal}
\newcommand{\wellspidx}{\Wcal}
\newcommand{\bestallwell}{*}

\newcommand{\oful}{\textsc{OFUL}}

\title{\bf{Regret Bound Balancing and Elimination for Model Selection in Bandits and RL}}
\author{%
   Aldo Pacchiano\\%\thanks{University of California, Berkeley}\\
   University of California, Berkeley\\
   {\small\texttt{pacchiano@berkeley.edu}}\\
  \and
  Christoph Dann\\%\thanks{Google Research}\\
  Google Research\\
  {\small\texttt{cdann@cdann.net}}\\
  \and
  Claudio Gentile\\%$^\dagger$\\
  Google Research\\
  {\small\texttt{cgentile@google.com}}\\
  \and
  Peter Bartlett\\%\thanks{University of California, Berkeley}\\
  University of California, Berkeley\\
  {\small\texttt{peter@berkeley.edu}}
}

\begin{document}

\maketitle

\begin{abstract}
We propose a simple model selection approach for algorithms in stochastic bandit and reinforcement learning problems. As opposed to prior work that (implicitly) assumes knowledge of the optimal regret, we only require that each base algorithm comes with a candidate regret bound that may or may not hold during all rounds. In each round, our approach plays a base algorithm to keep the candidate regret bounds of all remaining base algorithms balanced, and eliminates algorithms that violate their candidate bound. We prove that the total regret of this approach is bounded by the best valid candidate regret bound times a multiplicative factor. This factor is reasonably small in several applications, including linear bandits and MDPs with nested function classes, linear bandits with unknown misspecification, and LinUCB applied to linear bandits with different confidence parameters. We further show that, under a suitable gap-assumption, this factor only scales with the number of base algorithms and not their complexity when the number of rounds is large enough. 
Finally, unlike recent efforts in model selection for linear stochastic bandits, our approach is versatile enough to also cover cases where the context information is generated by an adversarial environment, rather than a stochastic one.
\end{abstract}

\newpage
\tableofcontents
\addtocontents{toc}{\protect\setcounter{tocdepth}{2}}
\clearpage

\section{Introduction}

Multi-armed bandits are a general framework of sequential decision making that has in the last two decades received a lot of attention. The main aspect of this framework is a sequence of $T$ {\em rounds} of interaction between a learning agent and an unknown environment. During each round, the learner picks an action from a set of available actions on that round, and the environment consequently generates a feedback (e.g., in the form of a {\em reward} value) associated with the chosen action.
Given a class of benchmark policies, the goal of the learning agent is to accumulate during the course of the $T$ rounds a total reward which is not much smaller than that of the best policy in hindsight within the benchmark class.

Multi-armed bandits have found applications in a wide variety of domains, like clinical trials (e.g., \cite{vbw15}), online advertising (e.g., \cite{sbf17}), recommendation systems (e.g., \cite{li+10}), and beyond. 

Since many bandit methods are often deployed at scale in industrial applications, the complexity and diversity of the involved learning solutions typically require being able to select among several alternatives, like selecting the best within a pool of algorithms, or even alternative configurations of the same algorithm (as in, e.g., hyperpararameter optimization). Hence, the problem of {\em model selection} in  bandit algorithms has become chiefly important in order to simplify the development of data processing pipelines at scale while simultaneously achieving improved statistical performance.

In this paper, we study the problem of online model selection among a set of alternative learning algorithms, these algorithms being themselves bandit algorithms. Each such algorithm %
is designed to work well only when favorable conditions are satisfied. Yet, the algorithm designer may not know in advance which one of them is more appropriate for the problem at hand.

As a simple example, many known multi-armed bandit algorithms, such as UCB (e.g., \cite[Ch. 7]{lattimore2018bandit}), rely on a confidence interval width as prescribed by a theoretical recipe. However, it has been observed multiple times in practice that setting this width smaller than theoretically suggested can lead to substantial performance improvements. On the other hand, picking too small a width can lead to a dramatic degradation in performance that may translate into a linear regret. It is therefore desirable to design theoretically sound model selection procedures that can help us find an optimal parameter setting in an online fashion.

Another simple example comes from trying to distinguish between a contextual and a non-contextual environment. In e-commerce problems, even if contextual information is available about users and the transaction at hand, it may prove more beneficial to use a simple UCB style algorithm that ignores the context or that only uses part of the context information. A model selection strategy that selects when or to what extent making use of contextual information can lead to better performance for contextual bandit algorithms.

\section{Related Work and our Contribution}
In this paper we aim to develop a general purpose model selection master algorithm (that is, aggregation approach) that can be combined with multiple base bandit algorithms, and is able to obtain regret guarantees competitive with respect to the best base algorithm. 

The problem of online model selection for bandit algorithms has received a lot of recent attention, as witnessed by a flurry of recent works (e.g., \cite{agarwal2017corralling,foster2019model,chatterji2020osom,pacchiano2020model,arora2020corralling,abbasi2020regret,foster2020adapting,lee2020online,bibaut2020rate,ghosh2020problem}).

These previous works on model selection can be broadly split into two approaches: (i) Approaches that make use of an adversarial master algorithm, and (ii) approaches that rely on a statistical test which is able to detect when a base algorithm is misspecified. Our approach, called {\em Regret Balancing and Elimination}, falls squarely in the second camp. 

Within the first category are the so-called {\em corraling} algorithms. These yield statistical guarantees of the form $\mathcal{O}(d_\star^{\alpha} T^{\beta})$ for arbitrary $\alpha \geq 1, \beta < 1$, where $d_\star$ depends generally on the complexity of the best model class or algorithm and other problem parameters. The original Corraling Algorithm of \cite{agarwal2017corralling} relies on an adversarial master algorithm based on mirror descent that can be combined with many base algorithms (both stochastic and adversarial), provided these base algorithms satisfy a stability guarantee. In this case, the base algorithms are fed with an importance-weighted estimator of the reward, hence they have to be robust to potentially wide fluctuations in the reward scaling, due to the evolving nature of the master algorithm's distribution over base algorithms. Unfortunately, in order to show that a base algorithm can be combined with the corralling master to satisfy a valid model selection regret guarantee, it is necessary to verify that the above-mentioned stability condition holds, something that has to be done on a case-by-case basis. 
The model selection guarantee is of the form $\mathcal{O}\left( \sqrt{MT} + MR_{i_\star}(T)  \right)$, where $M$ is the number of base algorithms and $R_{i_\star}(T)$ is the regret guarantee of any of the base algorithms. Yet, this is achieved only if the master's learning rate is set as a function of $R_{i_\star}(T)$, a quantity which is typically unknown. 

Some of the shortcomings of the original Corralling Algorithm have been addressed by the more recent work of ~\cite{pacchiano2020model}. The authors propose a generic model selection procedure to combine stochastic bandit algorithms with an adversarial master. As opposed to the corralling algorithm of~\cite{agarwal2017corralling}, the {\em Stochastic Corral} method in \citep{pacchiano2020model} allows the use of any stochastic bandit algorithm in stochastic contextual environments (the contexts are i.i.d.), provided it satisfies a high probability regret guarantee, thus relaxing the stability condition in \citep{agarwal2017corralling}. \cite{pacchiano2020model} obtain the following model selection guarantees: When the base algorithms have a regret bound of the form $\{d_i T^{\alpha} \}_{i=1}^M$, Stochastic Corral achieves a regret guarantee of $\widetilde{\mathcal{O}}(  \sqrt{MT} + M^\alpha T^{1-\alpha} + M^{1-\alpha} T^\alpha d_{i_\star}^{1/\alpha} )$ when using a Corralling Algorithm as master, and a rate of $\widetilde{\mathcal{O}}(  \sqrt{MT} + M^{\frac{1-\alpha}{2-\alpha} } T^{\frac{1}{2-\alpha}}d_{i_\star})$ under a forced exploration EXP3 (e.g., \cite[Ch. 11]{lattimore2018bandit}) master. Despite these advances, it remains unclear how to avoid the $\sqrt{MT}$ cost of a corralling approach. Our approach recovers and improves on the guarantees obtained by \cite{agarwal2017corralling} and \cite{pacchiano2020model} in two ways. First, we propose a general purpose stochastic master algorithm that can be used in combination with any set of stochastic bandit algorithms. As opposed to the adversarial master algorithms of \cite{agarwal2017corralling} and \cite{pacchiano2020model}, ours is much more interpretable and transparent. Second, due to the stochastic nature of our master algorithm, we are able to prove {\em gap-dependent} bounds, thereby departing from the inherent $\sqrt{T}$ limit of adversarial master approaches. Furthermore, the memory requirements of \cite{pacchiano2020model} are very onerous, since their algorithm requires to store all the policies played by the base algorithms. Our algorithm's memory requirements are minimal in comparison.

There exist other related approaches in the literature that make use of an adversarial corralling master algorithm as a means of performing model selection. \citet{arora2020corralling} propose an approach based on a Tsallis-INF adversarial master, which is able to recover gap-dependent regret guarantees for stochastic bandit problems. Nevertheless, their approach suffers from the drawback that whenever the rates of the input base algorithms are of the form $\{d_i T^{\alpha} \}_{i=1}^M$, where $d_1 \leq \cdots \leq d_M$, they obtain a regret guarantee for their master algorithm of the form $d_{M}T^{\alpha}$, a quantity that could be substantially worse than the regret achievable by the optimal base algorithm $d_{i_\star} T^{\alpha}$, since $d_{i_\star}$ might be much smaller than $d_M$. In contrast, our approach achieves a rate of $d_{i_\star}^2 T^{\alpha}$. Other related approaches that make use of a Tsallis-INF adversarial master have also been proposed, e.g., \citet{foster2020adapting} achieve optimal rates for selecting the the misspecification level in the setting of contextual linear bandits.
In the setting of stochastic linear bandits with adversarial contexts, our approach can be seen to achieve the same model selection rates as~\citet{foster2020adapting} for the problem of selecting the best level of misspecification. 

As for the approaches that rely on a statistical test to perform model selection, %
minimax-optimal guarantees have been shown under strong eigenvalue assumptions on the context distribution by leveraging the special structure of the stochastic linear contextual bandit setting~\citep{foster2019model,chatterji2020osom}. These algorithms work by maintaining a set of active base learners, and playing a low complexity algorithm/model within the set.
If enough information is gathered to conclude that a higher complexity model better describes the observed data, they eliminate the low complexity model from the active set, and proceed to play a more complex one. 
Unlike those papers, we are able to get results for the nested linear class problem (initially studied by~\cite{foster2019model}), but without resorting to eigenvalue assumptions on the context distribution, and without relying on the finiteness of the action space.

In the more general task of selecting among different stochastic bandit algorithms operating in a stochastic environment (with i.i.d. contexts), the recent work~\citep{abbasi2020regret} has taken some steps towards proposing a stochastic master algorithm that can combine multiple stochastic base bandit algorithms, and obtain regret guarantees of the same nature or better than Stochastic Corral. \cite{abbasi2020regret} introduce an intriguing new technique for model selection referred to as Regret Balancing. At a high level, the main idea is to estimate the empirical regret of the base algorithms during the rounds that the algorithms are played, and ensure that all base algorithms suffer roughly the same empirical regret. As opposed to~\citep{foster2019model,chatterji2020osom} the Regret Balancing approach of~\cite{abbasi2020regret} does not eliminate any base algorithm.\footnote
{
Technically speaking the methods in \citep{foster2019model,chatterji2020osom} do not eliminate base algorithms, but reject a statistical hypothesis on the base algorithms' model complexity. 
} 
Unfortunately, in order for this approach to work, the exact scaling of the target optimal regret guarantee is required, which is again typically unknown. Our approach to model selection expands on the fundamental insights of regret balancing but, in contrast to~\citep{abbasi2011improved}, we are able to obtain results when model selecting among multiple base algorithms with different regret guarantees.

In~\cite{lee2020online} the authors propose ECE (Explore Commit Exploit), a model selection algorithm on stochastic contextual bandit algorithms. ECE can be thought of as an epsilon-greedy approach to the problem of model selection. Correspondingly, the regret guarantees of ECE have a dependence on $T$ of the order of $T^{2/3}$, in contrast to our typical $T^{1/2}$ dependence. A regret of the form $T^{2/3}$ is the same as the one achievable by a forced exploration EXP3 master in~\cite{pacchiano2020model}. \cite{lee2020online} also present gap-dependent guarantees under the same assumptions as in~\cite{arora2020corralling} (see also \cite{bibaut2020rate}): each algorithm satisfies a valid regret guarantee w.r.t. its own policy class. Our work does not rely on this restrictive assumption, in that we only require the optimal algorithm to be well behaved and satisfy its theoretical regret guarantee. This is because we admit the presence of regret-misspecified base algorithms in the pool, and compete against the best among the well-specified ones.
When the rates of the base algorithms are of the form $\{ d_i T^{\alpha}\}_{i=1}^M$ and in the regime where $T$ is much larger than $d_i$, our approach strictly dominates ECE's rates. Other works provide model selection results for specific bandit models, most notably,~\cite{ghosh2020problem} consider the problem of selecting over nested feature structures and an unknown parameter norm in the case of contextual linear bandits over a sphere. 
Our results recover model selection rates for these problems without requiring restrictive assumptions on the nature of the contexts. %

\subsection{Content of the paper} 
Building on \citet{abbasi2020regret}, we study a general regret bound balancing and elimination algorithm (\pref{sec:balancingandelimination}) for selection among a pool of base bandit algorithms, each coming with a {\em presumed} regret bound that may or may not hold. The master algorithm does not know a priori the identity of the base algorithms whose regret bounds hold. Under these general assumptions, we show that this master algorithm enjoys general regret guarantee (\pref{sec:regretanalysis}) that can be specialized to either the gap-independent (Section \ref{s:gapindependent}) or the gap-dependent (Section \ref{s:gapdependent}) case. Then, we specialize to relevant application examples with nested model classes (\pref{sec:applications}) that consider linear contextual bandits or linear Markov decision processes as base learners (\pref{sec:linbannested} and \ref{sec:linearmdpnested}). We also consider therein the unknown misspecification case (Section \ref{sec:unknownapx}), as well as the practically relevant problem of optimally tuning linear contextual bandit algorithms like \oful\, (\pref{sec:optoful}). Finally, we specifically focus on the nested linear contextual bandit setting of \pref{sec:linbannested}, and extend our balancing and elimination technique to the case where the context information is generated adversarially (Section \pref{sec:adv}). Despite we do not show this explicitly, similar extensions can be exhibited for the scenarios we consider in \pref{sec:linearmdpnested}, \ref{sec:unknownapx}, and \ref{sec:optoful}. 

In the next section, we introduce our basic setup and notation for stochastic contexts. For the adversarial context case, further elements of the setup with be given in \pref{sec:adv}. Most of our proofs are provided in the appendix.

\section{Setup and Assumptions}\label{s:setup}
We consider contextual sequential decision making problems described by a context space $\Xcal$, an action space $\Acal$, and a policy space $\Pi = \{\pi\,:\,\Xcal \rightarrow \Acal\}$. At each round $t$, a context $x_t \in \Xcal$ is drawn\footnote
{
This assumption will actually be relaxed in \pref{sec:adv}.
} 
i.i.d. from some distribution, the learner observes this context, picks a policy $\pi_t \in \Pi$, thereby playing action $a_t = \pi_t(x_t) \in \Acal$, and receives an associated {\em reward} $r_t \in [0,1]$ drawn from some fixed distribution $\Dcal_{a_t, x_t}$ that may depend on the current action and context.

\paragraph{Base learners.}
Our learning policy in fact relies on base learner which are in turn learning algorithms operating in the same problem $\langle \Xcal, \Acal, \Pi\rangle$. Specifically, there are $M$ base learners which we index by $i \in [M] = \{1,\ldots, M\}$. In each round $t$, we select one of the base learners to play, and receive the reward associated with the action played by the policy deployed by that base learner in that round. Let us denote by $T_i(t) \subseteq \NN$ the set of rounds in which learner $i$ was selected up to time $t \in \NN$. Then 
the pseudo-regret $\regret_i$ our algorithm incurs over rounds $k \in T_i(t)$ due to the selection of base learner $i$ is
\begin{align}
    \regret_{i}(t) = \sum_{k \in T_i(t)} \left(\max_{\pi' \in \Pi} \EE[ r_k | \pi'(x_k), x_k]  - \EE[ r_k | \pi_k(x_k), x_k]\right)~,
\end{align}
and the total pseudo-regret $\regret$ of our algorithm is then
$\regret(t) = \sum_{i=1}^M \regret_{i}(t)$.

\paragraph{Candidate regret bounds.} Each base learner $i$ comes with a \emph{candidate} regret (upper) bound
$\regretbound_i \colon \NN  \rightarrow \RR_+$, which is a function of the number of rounds this base learner has been played.
This bound is typically known a-priori to us, and can also be random as long as the current value of the bound is observable, that is, we assume $\regretbound_i(n_i(t))$ is observable for all $i \in [M]$ and $t \in \NN$, being $n_{i}(t) = |T_{i}(t)|$ the number of rounds learner $i$ was played after $t$ total rounds.
Without loss of generality, we shall assume each candidate regret bound is non-decreasing, and increases by at most $1$ from one play to the next, i.e., 
\begin{align}
    0 \leq \regretbound_i(n) - \regretbound_i(n-1) \leq 1~,
\end{align}
for all number of rounds $n \in \NN$ and base learner $i \in [M]$, with
$\regretbound_i(0) =0$.

\paragraph{Well- and misspecified learners.} We call learner $i$ \emph{well-specified} if $\regret_i(t) \leq \regretbound_i(n_i(t))$ for all $t \in [T]$, with high probability over the involved random variables (see later sections for more details and examples), and otherwise \emph{misspecified} (or {\em bad}).
A well-specified base learning $i$ is then one for which the candidate regret bound $R_i(\cdot)$ is a reliable upper bound on the actual regret of that learner.

For a given set of base learners and corresponding regret upper bounds,
we denote the set bad learners 
by $\misspidx \subseteq [M]$, and the set of well-specified ones by 
\begin{align*}
\wellspidx = \left\{ i \in [M] \colon \forall t \in [T]\,\, \regret_i(t) \leq \regretbound_i(n_i(t)) \right\} = [M] \setminus \misspidx~.
\end{align*}
Notice that sets $\wellspidx$ and $\misspidx$ are random sets. As a matter of fact, these sets do also depend on the time horizon $T$, but we leave this implicit in our notation. We assume in our regret-analysis that there is always a well-specified learner, that is $\wellspidx \neq \varnothing$. We will show that in the applications we consider, this happens with high probability. The index $i^\star \in \wellspidx$ (or just $\star$ in subscripts) will be used for  any well-specified learner. 

Consistent with the previous notation, we denote the total reward accumulated by base learner $i$ after a total of $t$ rounds as
\begin{align*}
    U_i(t) = \sum_{k \in T_i(t)} r_k~,
\end{align*}
and the total sum of rewards as
$U(t) = \sum_{i \in [M]} \sum_{k \in T_i(t)} r_k$.
The expected reward of the optimal policy at the context $x_t$ at round $t$ will be denoted by
\begin{align*}
    \mu^\star_t = \max_{\pi' \in \Pi} \EE[ r | \pi'(x_t), x_t]
\end{align*}
and, when contexts are stochastic,  the expectation of $\mu^\star_t$ over contexts simply as $\mu^\star = \EE_x \left[ \mu^\star_t \right]$ which is a fixed quantity and independent of the round $t$.

\paragraph{Problem statement.} Our goal is to perform model selection in this setting: We devise sequential decision making algorithms that have access to base learners as subroutines and are guaranteed to have regret that is comparable to the smallest regret bound among all well-specified base learners despite not  knowing a-priori which base learners that are.

\section{Regret Bound Balancing and Elimination}\label{sec:balancingandelimination}
Our main algorithm follows the basic principle of regret bound {\em balancing}. The algorithm chooses the base learner in each round so as to make all presumed regret bounds evaluated at the number of rounds that the respective base learner was played to be roughly equal. To see why this achieves good total regret, assume for now all base learners are well-specified, so that they all satisfy their presumed regret bounds. Then, because the regret accrued by each base learner is bounded by its presumed regret bound, and these regret bounds are approximately equal, the total regret our algorithm incurs is at most $M$ times worse than had we only played the algorithm with the best presumed regret bound:
\begin{align*}
    \regret(T) = \sum_{i=1}^M \regret_i(T) \leq \sum_{i=1}^M \regretbound_i(n_i(T)) \approx M \min_{i \in [M]} \regretbound_i(n_i(T)) \leq M \min_{i \in [M]} \regretbound_i(T)~.
\end{align*}
Yet, the above only works if all base learners are well specified, which may not be the case. Besides, if we know all such learners are well specified, we could simply single out at the beginning of the game the learner whose regret bound is lowest at time $T$, and select that learner from beginning to end.
Our task becomes more interesting in the presence of learners that may violate their presumed regret bound, when we do not know the identity of such learners. In this case, a reasonable goal for our policy would be to compete in the regret sense against the best well-specified base learner.

In order to handle this more involved situation, we pair the above regret bound balancing principle with a misspecification test to identify and eliminate misspecified base learners. 
This test compares the time-average rewards $U_i(t)/n_i(t)$ and $U_j(t)/n_j(t)$ achieved by two base learners $i$ and $j$, and relies on the following concentration argument.
While $U_i(t)$ is random and observable, the optimal average reward $\mu^\star$ is deterministic and unknown. We consider the event where, for each base learner $i$ and each round $t$, the difference between $U_i(t)/n_i(t)$ and $\mu^*$ is close to the corresponding regret:
\begin{align*}
    \Gcal = \Bigg\{ \forall i \in [M], \,\,\forall t \in \NN & \colon |n_i(t) \mu^\star - U_i(t) - \regret_{i}(t)| 
    \leq c\sqrt{n_i(t)\,\ln \frac{M\ln n_i(t)}{\delta}} \Bigg\}~.
\end{align*}
We show in \pref{lem:mustar_conc} in the appendix that for an appropriate absolute constant $c$, this event has probability $1 - \delta$. This holds because, for each fixed $t$,
$U_i(t)$ concentrates around $\sum_{k \in T_i(t)}  \EE[ r_k | \pi_k(x_k), x_k]$, while 
$\sum_{k \in T_i(t)} \max_{\pi' \in \Pi} \EE[ r_k | \pi'(x_k), x_k]$ concentrates around $n_i(t)\,\mu^\star$, since contexts $x_k$ are generated in an i.i.d. fashion.
Now, since the pseudo-regret $\regret_i$ cannot be negative by definition, the conditions defining $\Gcal$ yield a lower-bound on $\mu^\star$ based on the rewards of each learner $i$~:
\begin{align}
    \mu^\star \geq \frac{U_i(t)}{n_i(t)} - c\sqrt{\frac{\ln (M\ln n_i(t) / \delta)}{n_i(t)}}~. \label{eqn:lower_bound_mu_star}
\end{align}
When the provided regret bound $\regret_i(t) \leq R_i(n_i(t))$ for learner $i$ holds (that is, when $i$ is well specified), then $\Gcal$ also yields an upper-bound for $\mu^\star$:
\begin{align}
    \mu^\star \leq \frac{U_i(t)}{n_i(t)} + c\sqrt{\frac{\ln (M\ln n_i(t) / \delta)}{n_i(t)}}
    + \frac{\regretbound_i(n_i(t))}{n_i(t)}~. \label{equation:upper_bound_mu_star}
\end{align}
Thus, if at any round $t$ the upper bound for $\mu^\star$ from learner $i$ contradicts the lower-bound from any other learner $j$, 
\begin{align*}
    \frac{U_i(t)}{n_i(t)} &+ c\sqrt{\frac{\ln (M\ln n_i(t) / \delta)}{n_i(t)}}
    + \frac{\regretbound_i(n_i(t))}{n_i(t)}
     < \frac{U_j(t)}{n_j(t)} - c\sqrt{\frac{\ln (M\ln n_j(t) / \delta)}{n_j(t)}},
\end{align*}
then we conclude that the upper bound on $\mu^*$ provided by learner $i$ is false, thereby showing that $i$ is misspecified, %
and can safely be eliminated. Conversely, this also shows that no well-specified learner $i \in \wellspidx$ can be eliminated.
Combining the elimination criterion with regret bound balancing yields our main algorithm, whose pseudocode is presented as \pref{alg:balanced_elim_general}. The algorithm is an action elimination scheme that maintains over time a set $\Ical_t$ of active learners/actions at time $t$, and undergoes an elimination procedure as described above. The way base learner $i_t$ is selected at each round guarantees the regret bound equalization we alluded to at the beginning of this section.
\begin{algorithm}
\SetInd{0.5em}{0.3em}
    $\Ical_1 \gets [M]$\tcp*{set of active learners}
    
    $U_i(0) = n_i(0) = 0$ for all $i \in [M]$

\For{round $t=1, 2, \dots, T$}{
  Pick the base learner as 
    $i_t \in \argmin_{i \in \Ical_t} \regretbound_i(n_{i}(t-1))$
    
  Play learner $i_t$ and receive reward $r_t$\\
  Update base learner $i$ with $r_t$\\
  Update $n_i(\cdot)$ and $U_i(\cdot)$\,:\\
    - $U_{i_t}(t) \leftarrow U_{i_t}(t-1)+r_t $\\
    - $n_{i_t}(t) \leftarrow n_{i_t}(t-1)+1 $
    
    $\mathcal I_{t+1} \gets \mathcal I_{t}$
    
    \ForEach{active base learner $i \in \mathcal I_t$}{
    Test for misspecification by checking\\
    $\frac{U_i(t)}{n_i(t)} 
    + \frac{\regretbound_i(n_i(t))}{n_i(t)} + c\sqrt{\frac{\ln (M\ln n_i(t) / \delta)}{n_i(t)}} < \max_{j \in \Ical_t} \frac{U_j(t)}{n_j(t)} - c\sqrt{\frac{\ln (M \ln n_j(t) / \delta)}{n_j(t)}}$\\
    \If{\textrm{above condition is triggered}}{
    $\Ical_{t+1} \gets \Ical_{t+1} \setminus \{ i \}$
    }
    }
}
\caption{Regret Bound Balancing and Elimination Algorithm}
\label{alg:balanced_elim_general}
\end{algorithm}

\section{Regret Analysis}\label{sec:regretanalysis}
We first derive a general upper-bound on the regret of \pref{alg:balanced_elim_general} that depends on the ratios $\frac{n_i(t_i)}{n_{\star}(t_i)}$ of how often a learner $i$ has been played compared to the best base learner. We will later bound this quantity for specific forms of candidate regret bounds $\regretbound_i$ and provide simpler and more interpretable regret bounds.

\begin{theorem}\label{thm:general_regbound}
With probability at least $1 - \delta$, the total regret of Algorithm~\ref{alg:balanced_elim_general} is bounded for all rounds $T$ as follows:
\begin{align}\label{eqn:general_regret_bound}
    \regret(T) \leq &~
    \sum_{i=1}^M \regretbound_{\star}(n_{\star}(t_i))%
   + \sum_{i \in \misspidx} \frac{n_i(t_i)}{n_{\star}(t_i)} \regretbound_{\star}(n_{\star}(t_i)) + 2M
   \nonumber
    \\
    &+ 
    2c \sum_{i \in \misspidx}  \left(1 + \sqrt{\frac{n_i(t_i)}{n_{\star}(t_i)}} \right)
    \sqrt{n_i(t_i) \ln\frac{M \ln T}{\delta}}~,
\end{align}
where $t_i$ is the last round where learner $i$ passed the elimination test, $\star \in \Wcal$ is any well-specified learner, and $c$ is a universal positive constant.
\end{theorem}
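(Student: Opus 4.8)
The plan is to condition throughout on the event $\Gcal$, which holds with probability at least $1-\delta$ by \pref{lem:mustar_conc}, and to combine two of its consequences with the structure of \pref{alg:balanced_elim_general}. From the definition of $\Gcal$ we get, for every learner $i$ and round $t$, the one-sided bound $U_i(t) \le n_i(t)\mu^\star - \regret_i(t) + c\sqrt{n_i(t)\ln(M\ln n_i(t)/\delta)}$; and, for the fixed well-specified learner $\star$, combining $\Gcal$ with $\regret_\star(t)\le\regretbound_\star(n_\star(t))$ yields the reverse one-sided bound $U_\star(t)\ge n_\star(t)\mu^\star - \regretbound_\star(n_\star(t)) - c\sqrt{n_\star(t)\ln(M\ln n_\star(t)/\delta)}$. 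Moreover, as already argued in \pref{sec:balancingandelimination}, on $\Gcal$ no well-specified learner is ever eliminated, so $\star\in\Ical_t$ for all $t$ and $t_\star = T$. I would then split $\regret(T) = \sum_{i\in\Wcal}\regret_i(T) + \sum_{i\in\misspidx}\regret_i(T)$ and bound the two sums, expressing everything in terms of $\regretbound_\star(n_\star(t_i))$.

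The balancing rule controls the candidate bounds $\regretbound_i(n_i(t_i))$ for \emph{all} $i$. Indeed, whenever $i$ is pulled at a round $t$ we have $i_t=i\in\argmin_{j\in\Ical_t}\regretbound_j(n_j(t-1))$, and since $\star\in\Ical_t$ this forces $\regretbound_i(n_i(t-1))\le\regretbound_\star(n_\star(t-1))$. Invoking this at the last round $i$ is pulled — which is no later than $t_i+1$, since a learner can be pulled at most once more after it last passes the test, in the very round it is eliminated — and using that the candidate bound is non-decreasing and grows by at most $1$ per pull, one obtains $\regretbound_i(n_i(t_i))\le\regretbound_i(n_i(T))\le\regretbound_\star(n_\star(t_i))+1$. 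In particular, for $i\in\Wcal$ this gives $\regret_i(T)\le\regretbound_i(n_i(T))\le\regretbound_\star(n_\star(t_i))+1$, handling the first sum.

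For $i\in\misspidx$ I would use the elimination test. Since $i$ passes the test at round $t_i$ and $\star\in\Ical_{t_i}$, we have $\frac{U_i(t_i)}{n_i(t_i)}+\frac{\regretbound_i(n_i(t_i))}{n_i(t_i)}+c\sqrt{\frac{\ln(M\ln n_i(t_i)/\delta)}{n_i(t_i)}}\ge\frac{U_\star(t_i)}{n_\star(t_i)}-c\sqrt{\frac{\ln(M\ln n_\star(t_i)/\delta)}{n_\star(t_i)}}$. Substituting the two one-sided bounds from the first paragraph (divided by $n_i(t_i)$ and $n_\star(t_i)$ respectively), the $\mu^\star$ terms cancel; multiplying through by $n_i(t_i)$, rewriting $\frac{n_i(t_i)}{\sqrt{n_\star(t_i)}}=\sqrt{\frac{n_i(t_i)}{n_\star(t_i)}}\sqrt{n_i(t_i)}$, and bounding $n_i(t_i),n_\star(t_i)\le T$ inside the logarithms, one gets $\regret_i(t_i)\le\regretbound_i(n_i(t_i))+\frac{n_i(t_i)}{n_\star(t_i)}\regretbound_\star(n_\star(t_i))+2c\bigl(1+\sqrt{n_i(t_i)/n_\star(t_i)}\bigr)\sqrt{n_i(t_i)\ln(M\ln T/\delta)}$. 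Adding the at-most-$1$ regret $i$ incurs in the single round it may be pulled after $t_i$, substituting $\regretbound_i(n_i(t_i))\le\regretbound_\star(n_\star(t_i))+1$ from the second paragraph, and then summing over $i\in\misspidx$ and $i\in\Wcal$ while absorbing the per-learner additive constants (at most $2$ each) into a single $2M$ term, produces exactly \pref{eqn:general_regret_bound}.

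The cancellation of $\mu^\star$ and the square-root algebra are routine. The delicate part, where I expect most of the care to go, is the bookkeeping around $t_i$: pinning down its relation to the elimination round and to the last round $i$ is pulled, ensuring the balancing inequality is always invoked at a round where $\star$ is still active, and tracking the various $+1$ slacks so that they aggregate cleanly into the $2M$ additive term.
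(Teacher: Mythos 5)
Your proposal is correct and follows essentially the same route as the paper: condition on $\Gcal$, split the regret into well-specified learners (bounded via their candidate bounds and the balancing property at $T$) and misspecified learners (bounded at $t_i$ via the test-pass inequality combined with the two one-sided consequences of $\Gcal$), which is exactly the content of \pref{lem:balancing} and \pref{lem:regbound_nonelim} that you have inlined. If anything, your bookkeeping is slightly more careful than the paper's one-line proof, since you explicitly account for the single possible pull of a misspecified learner after $t_i$ (in its elimination round) and absorb it into the $2M$ term.
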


In order to prove this statement, we first show that \pref{alg:balanced_elim_general} indeed keeps all candidate regret bounds approximately equal (\pref{lem:balancing}) and that the regret of any learner that has not been eliminated can be upper-bounded in terms of $\regretbound_{\star}(\cdot)$, the smallest regret upper bound among the well-specified learners (\pref{lem:regbound_nonelim}).

\begin{lemma}[Regret Bound Balancing]\label{lem:balancing}
In \pref{alg:balanced_elim_general}, the regret bounds of all active learners are balanced at all times, i.e.,
\begin{align*}
    \regretbound_i(n_i(t)) \leq \regretbound_j(n_j(t)) + 1
\end{align*}
for all $i,j \in \Ical_t$ and $t \in \NN \cup \{0\}$.
\end{lemma}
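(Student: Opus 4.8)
The plan is to prove the balancing property by induction on $t$, exploiting the specific greedy selection rule $i_t \in \argmin_{i \in \Ical_t} \regretbound_i(n_i(t-1))$ together with the assumption that each $\regretbound_i$ is non-decreasing and increases by at most $1$ per play. The base case $t=0$ is immediate since $\regretbound_i(0)=0$ for all $i$, so the inequality $\regretbound_i(0) \le \regretbound_j(0)+1$ holds trivially.

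For the inductive step, I would assume the claim holds at round $t-1$ for all pairs in $\Ical_{t-1}$, and argue it holds at round $t$ for all pairs in $\Ical_t \subseteq \Ical_{t-1}$. Fix $i, j \in \Ical_t$. The only base learner whose count changes between $t-1$ and $t$ is $i_t$, so I need to check the pairs involving $i_t$; all other pairs inherit the bound from the inductive hypothesis (their regret bounds are unchanged and they were active at $t-1$). Consider the pair $(i_t, j)$ for $j \in \Ical_t$, $j \neq i_t$. On one side, $\regretbound_j(n_j(t)) = \regretbound_j(n_j(t-1)) \ge \regretbound_{i_t}(n_{i_t}(t-1))$ by the choice of $i_t$ as the argmin (and $j \in \Ical_t \subseteq \Ical_{t-1}$ was a candidate). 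Since $\regretbound_{i_t}$ increases by at most $1$ from one play to the next, $\regretbound_{i_t}(n_{i_t}(t)) = \regretbound_{i_t}(n_{i_t}(t-1)+1) \le \regretbound_{i_t}(n_{i_t}(t-1)) + 1 \le \regretbound_j(n_j(t)) + 1$, which is one of the two inequalities. For the reverse direction, $\regretbound_j(n_j(t)) = \regretbound_j(n_j(t-1))$ and by the inductive hypothesis applied to the pair $(j, i_t)$ at time $t-1$ we have $\regretbound_j(n_j(t-1)) \le \regretbound_{i_t}(n_{i_t}(t-1)) + 1 \le \regretbound_{i_t}(n_{i_t}(t)) + 1$, using that $\regretbound_{i_t}$ is non-decreasing. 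This gives $\regretbound_j(n_j(t)) \le \regretbound_{i_t}(n_{i_t}(t)) + 1$, completing the pair. Pairs not involving $i_t$ are handled directly by the inductive hypothesis since neither count nor the relevant regret-bound values change, and $i,j \in \Ical_t \subseteq \Ical_{t-1}$.

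I expect the only subtlety, rather than a genuine obstacle, is bookkeeping around the indexing convention: one must be careful that $n_{i_t}(t-1)$ is the count used in the argmin while $n_{i_t}(t) = n_{i_t}(t-1)+1$ is the post-play count, and that $j \in \Ical_t$ implies $j \in \Ical_{t-1}$ so the inductive hypothesis legitimately applies to $j$ at the previous round. A small point worth stating explicitly is that when $i_t$ is the unique active learner, or when $i = j$, the claim is vacuous or trivial. Everything else is a direct consequence of the two structural assumptions on $\regretbound_i$ (monotonicity and unit-Lipschitz-per-play) and the greedy rule, so the proof is short once the induction is set up cleanly.
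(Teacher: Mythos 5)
Your proof is correct and is essentially the paper's argument: both rest on the greedy argmin rule, monotonicity, and the at-most-one increment per play, with your explicit induction being just the contrapositive phrasing of the paper's ``first round where the claim is violated'' contradiction. No gaps.
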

\begin{proof}
At $t=0$, the regret bound for all learners is $0$ and the statement holds. For the sake of contradiction, assume now the claim is violated for the first time in round $t$, i.e., there is a $i,j \in \Ical_{t}$ such that
$
\regretbound_i(n_i(t)) > \regretbound_j(n_j(t)) + 1$. Then $i,j \in \Ical_{t-1}$ and $i$ must have been played in round $t$.
Further, by assumption on the candidate regret bounds
\[
\regretbound_i(n_i(t-1)) \geq \regretbound_i(n_i(t)) - 1 > \regretbound_j(n_j(t)) = \regretbound_j(n_j(t - 1))~,
\]
where the strict inequality follows from the violated claim and the equality holds because $j$ was not played at time $t$.
The resulting inequality $\regretbound_i(n_i(t-1)) > \regretbound_j(n_j(t - 1))$ contradicts the claim that $i$ was played at round $t$.
\end{proof}

\begin{lemma}\label{lem:regbound_nonelim}
In Algorithm \ref{alg:balanced_elim_general}
For any active learner $i \in \Ical_{t+1}$ and well-specified learner $\star \in \wellspidx$, the regret of $i$ is bounded in event $\Gcal$ as
\begin{align}
    \regret_i(t) &\leq 1 + \left( \frac{n_i(t)}{n_{\star}(t)} + 1\right) \regretbound_{\star}(n_{\star}(t))  + 2c \left(1 + \sqrt{\frac{n_i(t)}{n_{\star}(t)}} \right)
    \sqrt{n_i(t) \ln \frac{M \ln t}{\delta}}~,
\end{align}
where $c$ is a universal constant.
\end{lemma}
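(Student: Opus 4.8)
The plan is to combine the elimination-test survival of learner $i$ with the two-sided concentration of $\mu^\star$ on event $\Gcal$, and then convert the resulting bound on the average reward deficit into a bound on $\regret_i(t)$. Since $i \in \Ical_{t+1}$, learner $i$ was not eliminated at round $t$, so the elimination condition failed in particular when tested against $j = \star$ (note $\star \in \wellspidx$ is never eliminated, by the discussion preceding the algorithm, so $\star \in \Ical_t$ and is a valid choice in the $\max_{j \in \Ical_t}$). Negating the test inequality gives
\begin{align*}
\frac{U_i(t)}{n_i(t)} + \frac{\regretbound_i(n_i(t))}{n_i(t)} + c\sqrt{\frac{\ln(M\ln n_i(t)/\delta)}{n_i(t)}}
\;\geq\; \frac{U_\star(t)}{n_\star(t)} - c\sqrt{\frac{\ln(M\ln n_\star(t)/\delta)}{n_\star(t)}}~.
\end{align*}

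Next I would lower-bound the right-hand side using \eqref{equation:upper_bound_mu_star} applied to the well-specified learner $\star$: on $\Gcal$, $\frac{U_\star(t)}{n_\star(t)} \geq \mu^\star - c\sqrt{\frac{\ln(M\ln n_\star(t)/\delta)}{n_\star(t)}} - \frac{\regretbound_\star(n_\star(t))}{n_\star(t)}$. Substituting and rearranging yields an upper bound on $\mu^\star - \frac{U_i(t)}{n_i(t)}$ in terms of $\frac{\regretbound_i(n_i(t))}{n_i(t)}$, $\frac{\regretbound_\star(n_\star(t))}{n_\star(t)}$, and the two square-root confidence terms (with the $n_\star$-term appearing with coefficient $2c$). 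Multiplying through by $n_i(t)$ gives a bound on $n_i(t)\mu^\star - U_i(t)$. Finally, invoking the left inequality in the definition of $\Gcal$ — namely $\regret_i(t) \leq n_i(t)\mu^\star - U_i(t) + c\sqrt{n_i(t)\ln(M\ln n_i(t)/\delta)}$ — converts this into a bound on $\regret_i(t)$ itself.

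The remaining work is bookkeeping: (i) replace $\regretbound_i(n_i(t))$ by $\regretbound_\star(n_\star(t)) + 1$ using the balancing Lemma~\ref{lem:balancing} (legitimate since both $i,\star \in \Ical_t$), which produces the $\frac{n_i(t)}{n_\star(t)}\regretbound_\star(n_\star(t))$ term after multiplying by $n_i(t)$ and a stray $+1$; (ii) collect the three square-root terms — one of the form $\sqrt{n_i(t)\ln(\cdots)}$ coming from $\Gcal$'s slack for $i$, one of the same form from the $i$-confidence term in the test, and one of the form $\frac{n_i(t)}{n_\star(t)}\cdot\sqrt{n_\star(t)\ln(\cdots)} = \sqrt{\frac{n_i(t)}{n_\star(t)}}\sqrt{n_i(t)\ln(\cdots)}$ from the $\star$-confidence terms — and bound all the logarithmic arguments $M\ln n_i(t)/\delta, M\ln n_\star(t)/\delta$ by $M\ln t/\delta$ since $n_i(t), n_\star(t) \leq t$. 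Absorbing constants into the universal $c$ gives exactly the claimed $2c\bigl(1+\sqrt{n_i(t)/n_\star(t)}\bigr)\sqrt{n_i(t)\ln\frac{M\ln t}{\delta}}$ and the leading $1 + \bigl(\frac{n_i(t)}{n_\star(t)}+1\bigr)\regretbound_\star(n_\star(t))$.

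I expect the main subtlety — rather than a genuine obstacle — to be twofold: first, making sure $\star \in \Ical_t$ so that the negated elimination test is actually available against $j = \star$ (this rests on the earlier observation that no well-specified learner is ever eliminated on $\Gcal$, which should be stated or cited cleanly); and second, the careful arithmetic of matching the square-root coefficients so that everything telescopes into the single factor $\bigl(1 + \sqrt{n_i(t)/n_\star(t)}\bigr)$ — in particular one must use $n_\star(t) \le n_i(t)$ is \emph{not} assumed, so the term $\sqrt{n_\star(t)\ln(\cdots)}$ must be rewritten as $\sqrt{n_\star(t)/n_i(t)}\cdot\sqrt{n_i(t)\ln(\cdots)} \le \sqrt{n_i(t)\ln(\cdots)}$, contributing to the "$1+$" rather than the "$\sqrt{n_i/n_\star}$" part, while the $\frac{\regretbound_\star(n_\star(t))}{n_\star(t)}$ term, after multiplication by $n_i(t)$, is what genuinely produces the ratio $n_i(t)/n_\star(t)$.
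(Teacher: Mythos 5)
Your argument is correct and is essentially the paper's own proof: negate the misspecification test against $j=\star$ (valid since, under $\Gcal$, well-specified learners are never eliminated, so $\star\in\Ical_t$), use the event $\Gcal$ together with well-specification of $\star$ (your appeal to \pref{equation:upper_bound_mu_star} is exactly this combination) to pass from average rewards to pseudo-regrets, multiply by $n_i(t)$, and finish with \pref{lem:balancing} and $\ln n_i(t),\ln n_\star(t)\le\ln t$; the coefficients come out as $2c\bigl(1+\sqrt{n_i(t)/n_\star(t)}\bigr)$ with no constant-absorption needed. The only blemish is the speculation in your final paragraph: no standalone $\sqrt{n_\star(t)\ln(\cdot)}$ term ever arises and no comparison between $n_\star(t)$ and $n_i(t)$ is required --- the two $\star$-confidence terms $c\sqrt{\ln(\cdot)/n_\star(t)}$, once multiplied by $n_i(t)$, directly yield the $2c\sqrt{n_i(t)/n_\star(t)}\,\sqrt{n_i(t)\ln(\cdot)}$ piece, exactly as your own bookkeeping in the preceding paragraph states.
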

\begin{proof}
If $i \in \Ical_{t+1}$ remains active, then it must have passed the misspecification test in round $t$ and satisfy, for all\footnote
{
Recall that, under $\Gcal$, any $\star \in \Wcal$ will remain active.
} 
$ \star \in \Wcal$,
\begin{align*}
    \frac{U_i(t)}{n_i(t)} &+ c\sqrt{\frac{\ln (M \ln n_i(t) / \delta)}{n_i(t)}}
    + \frac{\regretbound_i(n_i(t))}{n_i(t)} 
    \geq \frac{U_\star(t)}{n_{\star}(t)} - c\sqrt{\frac{\ln (M \ln n_\star(t) / \delta)}{n_\star(t)}}~.
\end{align*}
Subtracting $\mu^\star$ from both sides and rearranging terms gives
\begin{align*}
    \mu^\star - \frac{U_i(t)}{n_i(t)} &- c\sqrt{\frac{\ln (M \ln n_i(t) / \delta)}{n_i(t)}}
    - \frac{\regretbound_i(n_i(t))}{n_i(t)} 
    \leq \mu^\star - \frac{U_\star(t)}{n_\star(t)} + c\sqrt{\frac{\ln (M \ln n_\star(t) / \delta)}{n_\star(t)}}~.
\end{align*}
Applying the definition of $\Gcal$, we obtain an inequality in terms of pseudo-regrets:
\begin{align*}
    \frac{\regret_i(t)}{n_i(t)} &- 2c\sqrt{\frac{\ln (M \ln n_i(t) / \delta)}{n_i(t)}}
    - \frac{\regretbound_i(n_i(t))}{n_i(t)} 
    \leq\frac{\regret_\star(t)}{n_\star(t)} + 2c\sqrt{\frac{\ln (M \ln n_\star(t) / \delta)}{n_\star(t)}}~.
\end{align*}
Multiplying both terms by $n_i(t)$ and rearranging terms gives
\begin{align*}
        \regret_i(t) 
        \leq &~ 2c\sqrt{\ln \frac{ M\ln n_i(t)}{\delta} n_i(t)}
    + \regretbound_i(n_i(t))
      + \frac{n_i(t)}{n_\star(t)} \regret_\star(t) + 2c  \sqrt{\ln \frac{M \ln n_\star(t)}{\delta}
      }
    \sqrt{\frac{n_i(t)}{n_\star(t)}}~.
    \end{align*}
We now upper-bound the RHS by (i) replacing $\ln n_\star(t) \leq \ln t$ in the log-terms, (ii) using the fact that $\star \in \wellspidx$ is well-specified to replace the pseudo-regret $\regret_\star(\cdot)$ by $\regretbound_\star(\cdot)$, and (iii) use the balancing condition from \pref{lem:balancing} to replace $\regretbound_i(n_i(t))$ by $\regretbound_\star(n_\star(t)) + 1$. This yields
\begin{align*}
        \regret_i(t) 
        \leq &~ 1 + \left(1 + \frac{n_i(t)}{n_\star(t)}\right)  \regretbound_\star(n_\star(t))
        +
        2c\sqrt{n_i(t) \ln \frac{ M \ln t}{\delta}}\left(1 + \sqrt{\frac{n_i(t)}{n_\star(t)}}\right)~,
\end{align*}
which is the claimed bound.
\end{proof}

We are now ready to prove \pref{thm:general_regbound}.

\begin{proof}[Proof of Theorem~\ref{thm:general_regbound}]
Let $t_i$ be the last round where learner $i$ passed the elimination test. Then the total regret can be bounded as
\begin{align*}
    \regret(T) & = \sum_{i=1}^M \regret_i(T) \leq \sum_{i \in \wellspidx} \regretbound_i(n_i(T))  + \sum_{i \in \misspidx} \regret_i(t_i).
\end{align*}
Applying \pref{lem:regbound_nonelim} on $\regret_i(t_i)$ for all $i \in \misspidx$ 
and the balancing condition from \pref{lem:balancing} on the regret-bound for $i \in \wellspidx$ gives the desired bound. Finally, \pref{lem:mustar_conc} in \pref{app:stoch_proofs} shows that  event $\Gcal$ has probability at least $1 - \delta$.
\end{proof}

The general regret bound contained in \pref{thm:general_regbound} will be instantiated to more concrete cases for certain classes of candidate regret bounds. This will lead us to explicitly control the ratios $n_i(t_i) / n_\star(t_i)$. We do so in turn in Section \ref{s:gapindependent} and in Section \ref{s:gapdependent}.

\subsection{Gap-Independent Regret Bounds}\label{s:gapindependent}
The regret guarantees in this section hold whenever there is a well-specified learner. These guarantees are independent of how much misspecified learners violate their presumed regret bounds (``gap" of the learner). In the next section, we will show that tighter guarantees can be achieved in cases where the gap is large, that is, when misspecified learners exceed their presumed bounds by a significant amount.

\begin{table}    
    \centering
    \bgroup
\def\arraystretch{1.8}
    \begin{tabular}{@{\hspace{0cm}}l|ll@{\hspace{0cm}}}
    Presumed Bounds $\regretbound_i$ & Regret Guarantee of \pref{alg:balanced_elim_general} & Proof \\
    \hline
        $d_i C n^{1/3}$ & 
    $ (M + B^{2/3} d_\star^2)\textcolor{DarkGreen}{d_\star C T^{1/3}} + d_\star^{3/2} \sqrt{BT}$
    & \pref{thm:poly_reg}
    \\
    $d_i Cn^{2/3}$ &
    $(M + B^{1/3}\sqrt{d_\star}) \textcolor{DarkGreen}{d_\star C T^{2/3}}
    + d_\star^{3/4}\sqrt{BT}
    $
    & \pref{thm:poly_reg}
    \\
    $d_i C\sqrt{n}$ 
    & $ (M + \sqrt{B} d_\star) \textcolor{DarkGreen}{d_\star C\sqrt{T}}$
    & \pref{thm:poly_reg}
    \\
    \hline
        $d_i C\sqrt{n \ln\frac{n}{\delta}}$ 
    & $ (M + \sqrt{B}d_\star) \textcolor{DarkGreen}{d_\star C\sqrt{T\ln \frac{T}{\delta}}} $
    & \pref{thm:sqrt_reg} \\

    $\epsilon_i n + C \sqrt{n}$
    &
    $M (\textcolor{DarkGreen}{\epsilon_\bestallwell T + C \sqrt{T}}) + M C^2$ 
    & \pref{thm:linTmissp} \\
    \end{tabular}
    \egroup

    \caption{Summary of our gap-independent regret guarantees
    In all bounds but the one in the 4th line, log factors are omitted for readability.
    In green is the regret guarantee of the best well-specified learner.}
    \label{tab:summary_worst_case}
\end{table}

The first class of candidate regret bounds we consider is $T^{\beta}$ with $\beta \in (0,1]$. More concretely, each learner comes with a candidate regret bound of the form
\begin{align}
\label{eqn:poly_reg_candidates}
    \regretbound_i(n) = d_i C n^{\beta} \wedge n~,
\end{align}
where  $d_i \geq 1$ is some parameter and $C \geq 1$ is some term that does not depend on $n$ or $i$. Note that the minimum with $n$ is without loss of generality as any learner satisfies the regret bound $n$ by our assumption on rewards being in $[0,1]$. Consistent with our assumptions from Section \ref{s:setup}, this minimum ensures that the regret bound can increase by at most $1$ in each round.
For candidate regret bounds of this form, we can show the following regret bound:

\begin{restatable}{theorem}{polyregthm}\label{thm:poly_reg}
If \pref{alg:balanced_elim_general} is used with candidate regret bounds in Equation~\eqref{eqn:poly_reg_candidates}, then its total regret is bounded with probability at least $1- \delta$ for all $T$ as
\begin{align*}
        \regret(T) & \leq
       \left( M 
        + 2 B^{1 - \beta} d_\star^{\frac{1}{\beta}-1}\right) d_\star CT^{\beta}
        + 5 d_\star^{\frac{1}{2 \beta}} c \sqrt{B T \ln \frac{M\ln T}{\delta}} + 2M,
\end{align*}
where $\star \in \Wcal$ is any well-specified learner and $B = |\misspidx|$ is the number of misspecified learners.
\end{restatable}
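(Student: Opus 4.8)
The plan is to specialize the general guarantee of \pref{thm:general_regbound} to candidate bounds $\regretbound_i(n) = d_i C n^{\beta}\wedge n$ by controlling each occurrence of the play-count ratio $n_i(t_i)/n_\star(t_i)$. The key preliminary step is a uniform ratio bound: for every learner $i$ that has been played at least once and has $n_\star(t_i)\ge1$, I claim $n_i(t_i)/n_\star(t_i) \le (2d_\star)^{1/\beta}$. Since $i$ and $\star$ are both active at round $t_i$, \pref{lem:balancing} yields $\regretbound_i(n_i(t_i)) \le \regretbound_\star(n_\star(t_i)) + 1 \le 2\,\regretbound_\star(n_\star(t_i))$, where the last step uses $d_\star,C\ge1$ and $n_\star(t_i)\ge1$ so that $\regretbound_\star(n_\star(t_i))\ge1$. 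Splitting on which branch attains the minimum defining $\regretbound_i(n_i(t_i))$ --- if it is $d_i C n_i(t_i)^{\beta}$, cancel the common factor $C$, use $d_i\ge1$, and take $\beta$-th roots; if it is $n_i(t_i)$, bound $\regretbound_\star(n_\star(t_i))\le n_\star(t_i)$ --- gives in both cases $n_i(t_i)\le(2d_\star)^{1/\beta}n_\star(t_i)$, equivalently $n_\star(t_i)\ge(2d_\star)^{-1/\beta}n_i(t_i)$. Learners with $n_i(t_i)=0$ carry zero regret, and a misspecified learner eliminated before $\star$ has ever been played has $n_i(t_i)\le1$ and thus contributes at most $1$ to the regret, hence at most $M$ in total, which the additive $2M$ of \pref{thm:general_regbound} absorbs.

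Substituting into \pref{thm:general_regbound}, the first sum is at most $M\,d_\star C T^{\beta}$ because $\regretbound_\star(n_\star(t_i))\le d_\star C n_\star(t_i)^\beta\le d_\star C T^\beta$, and the $2M$ term carries over. For the misspecification sum I would write $\frac{n_i(t_i)}{n_\star(t_i)}\regretbound_\star(n_\star(t_i)) \le d_\star C\, n_i(t_i)\, n_\star(t_i)^{\beta-1} \le (2d_\star)^{(1-\beta)/\beta}\,d_\star C\,n_i(t_i)^{\beta}$, where the second inequality plugs in $n_\star(t_i)^{\beta-1}\le(2d_\star)^{(1-\beta)/\beta}n_i(t_i)^{\beta-1}$ from the preliminary step ($\beta-1\le0$), and then invoke the power-mean inequality $\sum_{i\in\misspidx}a_i^{\beta}\le B^{1-\beta}\big(\sum_{i\in\misspidx}a_i\big)^{\beta}$ together with $\sum_{i\in\misspidx}n_i(t_i)\le T$ to reach a term of order $B^{1-\beta}d_\star^{1/\beta-1}\cdot d_\star C T^\beta$. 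For the remaining square-root sum, I would bound $\sqrt{n_i(t_i)/n_\star(t_i)}\le(2d_\star)^{1/(2\beta)}$ and apply Cauchy--Schwarz, $\sum_{i\in\misspidx}\sqrt{n_i(t_i)}\le\sqrt{B\sum_{i\in\misspidx}n_i(t_i)}\le\sqrt{BT}$, producing a term of order $d_\star^{1/(2\beta)}\,c\,\sqrt{BT\ln\frac{M\ln T}{\delta}}$. Since \pref{thm:general_regbound} holds on a single event of probability at least $1-\delta$ uniformly over $T$, so does the assembled bound; collecting terms and absorbing numerical constants gives the statement.

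I expect the only genuinely delicate point to be the preliminary ratio bound: both the additive $+1$ slack in \pref{lem:balancing} and the $\wedge n$ truncation in the candidate bounds threaten to inflate $n_i(t_i)/n_\star(t_i)$ by a power of $C$, and the case analysis is precisely what keeps it controlled by $d_\star$ alone, which is in turn what makes the leading term scale with $C$ rather than $C^{1/\beta}$. Everything downstream is routine Hölder/power-mean and Cauchy--Schwarz bookkeeping; in particular, the power-mean step is what replaces a naive factor $B$ (one summand per misspecified learner) by the sharper $B^{1-\beta}$ appearing in the statement.
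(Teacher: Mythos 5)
Your proposal follows essentially the same route as the paper's proof: it instantiates \pref{thm:general_regbound}, controls the play ratios $n_i(t_i)/n_\star(t_i)$ through the balancing lemma with a case split on the $\wedge\, n$ truncation (this is exactly the paper's \pref{lem:general_ratio_bound}), and finishes with power-mean/H\"older and Cauchy--Schwarz as in \pref{app:stoch_proofs}. The only deviation---applying the ratio bound per learner as $(2d_\star)^{1/\beta}$ and then the power-mean inequality, instead of bounding the aggregate $\sum_{i\in\misspidx} n_i(t_i)/n_\star(t_i)$ and applying H\"older to it---affects only absolute constants (e.g.\ a factor $2^{(1-\beta)/\beta}$ in place of $2$ when $\beta<1/2$), a level of looseness comparable to the paper's own constant bookkeeping.
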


The first three entries in \pref{tab:summary_worst_case} summarize this result in the relevant cases where $\beta = \frac{1}{3}, \frac{1}{2}$ and $\frac{2}{3}$. 
When $\beta \geq 1/2$, our regret bound can recover the best $T^\beta$ rate. In particular, the bound of \pref{thm:poly_reg} recovers the regret bound guarantee of the best well-specified learner up to a multiplicative factor of the form $M + B^{1 - \beta} d_\star^{\frac{1}{\beta}-1}$. On the other hand, when $\beta < 1/2$ our bound scales sub-optimally as $\sqrt{T}$. 
This is not surprising since the lower bound by \citet{pacchiano2020model} indicates a $\Omega(\sqrt{T})$ barrier for model-selection based on observed rewards without additional assumptions.

We further show in the appendix that this result can be generalized to the case where the candidate regret bounds scale with additional logarithmic factors in the number of rounds, e.g. $\sqrt{n \ln n}$ as opposed to just $\sqrt{n}$ -- see \pref{thm:sqrt_reg} in Appendix \ref{app:stoch_proofs}.

We defer the full proof of \pref{thm:poly_reg} to \pref{app:stoch_proofs}, but provide a brief sketch of the main argument for the special case of $\beta = \frac{1}{2}$. The general case follows analogously. 
\begin{proof}[Proof sketch of Theorem~\ref{thm:poly_reg}] 
The first term of the general regret bound from \pref{thm:general_regbound} can be written as $\sum_{i=1}^M \regretbound_{\star}(n_{\star}(t_i)) \leq M \regretbound_{\star}(T) \leq M C d_\star \sqrt{T}$, the first inequality using the monotonicity of the candidate regret bound. This yields the first term in \pref{thm:poly_reg}.
The second term in \pref{thm:general_regbound} can be controlled as follows:
\begin{align*}
    & \sum_{i \in \misspidx} \frac{n_i(t_i)}{n_{\star}(t_i)} \regretbound_{\star}(n_{\star}(t_i))
    \overset{(i)}{\leq} \sum_{i \in \misspidx} \frac{n_i(t_i)}{n_{\star}(t_i)} C d_\star \sqrt{n_\star(t_i)}
    = C d_\star\sum_{i \in \misspidx} \sqrt{\frac{n_i(t_i)}{n_{\star}(t_i)}}  \sqrt{n_i(t_i)}\\
    &\overset{(ii)}{\leq}
    C d_\star \sqrt{\sum_{i \in \misspidx} \frac{n_i(t_i)}{n_{\star}(t_i)}}  \sqrt{ \sum_{i \in \misspidx} n_i(t_i)}
    \overset{(iii)}{\leq} C d_\star \sqrt{2 B d_\star^2} \sqrt{t_i} \leq C d_\star^2 \sqrt{2 B T}~, 
\end{align*}
where step $(i)$ applies the definition of the candidate regret bound, step $(ii)$ uses Cauchy-Schwarz inequality and step $(iii)$ follows from the fact that the total number of plays at round $t_i$ is $t_i$ and a bound on the sum of play ratios 
$\sum_{i \in \misspidx} \frac{n_i(t_i)}{n_\star(t_i)}  \leq 2 B d_\star^2$, which we will show below. This yields the second term in the desired regret bound. The remaining terms can handled in a similar manner.

To derive the bound on the play ratios, consider first the case where $n_i(t_i)$ is so large that $\regretbound_i(n_i(t_i)) < n_i(t_i)$. Then, by the balancing condition from \pref{lem:balancing},
\begin{align*}
    d_i C \sqrt{n_i(t_i)} = \regretbound_i(n_i(t_i)) \leq \regretbound_\star(n_\star(t_i)) + 1 \overset{(iv)}{\leq} 2\regretbound_\star(n_\star(t_i)) \leq 2 d_\star C \sqrt{n_\star(t_i)},
\end{align*}
where $(iv)$ holds because no learner can be eliminated before each learner has been played at least once and thus $\regretbound_\star(n_\star(t_i)) \geq 1$. Rearranging this inequality yields $n_i(t_i) / n_\star(t_i) \leq 2 d_\star^2 / d_i^2$. Analogously, we can show that if $n_i(t_i)$ satisfies $n_i(t_i) = \regretbound_i(n_i(t_i))$, then $n_i(t_i) / n_\star(t_i) \leq 2$. This follows from $n_i(t_i) = \regretbound_i(n_i(t_i)) \leq 2\regretbound_\star(n_\star(t_i)) \leq 2 n_\star(t_i)$. Thus, the sum of play ratios is bounded as
\begin{align*}
    \sum_{i \in \misspidx} \frac{n_i(t_i)}{n_\star(t_i)} \leq \sum_{i \in \misspidx} 2 \vee 2 \frac{d_\star^2}{d_i^2} \leq 2 B d_\star^2~.
\end{align*}
\end{proof}

\paragraph{Linear regret base learners.}
When we instantiate \pref{thm:poly_reg} to the case where candidate regret bounds are linear in $n$ ($\beta = 1$), then the total regret of \pref{alg:balanced_elim_general} is of order $\tilde O(MCd_\star T)$, which is only a factor $M$ worse than the regret bound for the best well-specified learner. The follow result shows that this is still the case when candidate regret bounds come with an additional $\sqrt{n}$ term common to all learners under the additional assumption that no misspecified algorithm has a larger candidate regret bound than the best well-specified learner. This will be useful when \pref{alg:balanced_elim_general} is used with contextual bandits or linear MDP algorithms with misspecified function classes (see \pref{sec:applications}).  

\begin{theorem}\label{thm:linTmissp}
Let the candidate regret bounds for all $M$ base learners be of the form
\begin{align}
    \regretbound_i(n) = C_1\sqrt{n} + \epsilon_i C_2 n \wedge n,
    \label{eqn:linT_regcand}
\end{align}
where $\epsilon_i \in (0, 1]$ and $C_1, C_2 > 1$ are quantities that do not depend on $\epsilon_i$ or $n$.
Then, probability at least $1 - \delta$, the total regret of \pref{alg:balanced_elim_general} is bounded for all rounds $T$ as
\begin{align*}
    \regret(T) =
    O\left(M C_1 \sqrt{T}\sqrt{\ln \frac{M\ln T}{\delta}} + M C_2 \epsilon_\bestallwell T \sqrt{\ln \frac{M\ln T}{\delta}} + B C_1^2\right)~,
\end{align*}
where $\bestallwell \in \wellspidx$ is any well-specified base learner such that $\epsilon_i \leq \epsilon_\bestallwell$ for all misspecified learners $i \in \misspidx$.
\end{theorem}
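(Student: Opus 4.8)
\emph{Proof proposal.}
The plan is to follow the proof of \pref{thm:general_regbound}, fixing the well-specified reference learner to be $\bestallwell$ (write $\star := \bestallwell$ from now on, so the notation matches \pref{thm:general_regbound} and \pref{lem:regbound_nonelim}) and specializing the candidate bounds to \pref{eqn:linT_regcand}; the one genuinely new ingredient is a careful control of the play-ratios $n_i(t_i)/n_\star(t_i)$. Work on the event $\Gcal$ of \pref{lem:mustar_conc}, of probability at least $1-\delta$. For a misspecified learner $i$ write $n_i = n_i(t_i)$, $m = n_\star(t_i)$, $R = \regretbound_\star(m)$, with $t_i$ as in \pref{thm:general_regbound}. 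I will use three elementary facts about \pref{eqn:linT_regcand}: (i) the cap ``$\wedge n$'' gives $\regretbound_j(n)\le n$, hence $R/m \le 1$; (ii) discarding the nonnegative term $\epsilon_j C_2 n$ gives $\regretbound_j(n)\ge C_1\sqrt{n}\wedge n$, which equals $C_1\sqrt{n}$ once $n > C_1^2$; (iii) since $C_1 > 1$ we have $\regretbound_j(1)=1$, and (as noted in the proof sketch of \pref{thm:poly_reg}) no learner is eliminated before each has been played once, so $m\ge 1$ and $R\ge 1$. Fact (iii) together with the balancing \pref{lem:balancing} gives $\regretbound_i(n_i)\le R+1\le 2R$ for every misspecified $i$.

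On $\Gcal$, $\regret(T)\le \sum_{i\in\wellspidx}\regretbound_i(n_i(T)) + \sum_{i\in\misspidx}\regret_i(t_i) + O(M)$, the $O(M)$ absorbing the at most one extra play of each misspecified learner after $t_i$. By \pref{lem:balancing} and monotonicity, $\regretbound_i(n_i(T))\le \regretbound_\star(n_\star(T))+1\le \regretbound_\star(T)+1\le C_1\sqrt{T}+C_2\epsilon_\star T+1$, so the well-specified sum is at most $M(C_1\sqrt{T}+C_2\epsilon_\star T+1)$. The remaining work is to bound $\regret_i(t_i)$ for the misspecified learners by a case split on $n_i$. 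If $n_i\le C_1^2$, learner $i$ sits in the trivial (capped) regime and I bound its regret crudely, using that each instantaneous regret is at most $1$: $\regret_i(t_i)\le n_i\le C_1^2$. Summed over the at most $B$ such learners this yields the $BC_1^2$ term; bypassing \pref{lem:regbound_nonelim} here is precisely what keeps this term free of logarithmic factors. If instead $n_i > C_1^2$, fact (ii) and balancing give $C_1\sqrt{n_i}\le \regretbound_i(n_i)\le 2R$, i.e. $\sqrt{n_i}\le 2R/C_1$.

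For these ``large'' misspecified learners I sub-split according to which term of $R\le C_1\sqrt{m}+\epsilon_\star C_2 m$ dominates. If $\epsilon_\star C_2 m\le C_1\sqrt{m}$ then $R\le 2C_1\sqrt{m}$, so $n_i\le 16m$ and the play-ratio is $O(1)$; substituting $n_i/m\le 16$ and $\ln\frac{M\ln t_i}{\delta}\le\ln\frac{M\ln T}{\delta}$ into \pref{lem:regbound_nonelim} bounds $\regret_i(t_i)$ by $O\bigl(C_1\sqrt{m}+\sqrt{m\ln\frac{M\ln T}{\delta}}\bigr)=O\bigl(C_1\sqrt{T}+\sqrt{T\ln\frac{M\ln T}{\delta}}\bigr)$ since $m\le T$, contributing $O\bigl(MC_1\sqrt{T\ln\frac{M\ln T}{\delta}}\bigr)$ in total. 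If instead $\epsilon_\star C_2 m> C_1\sqrt{m}$ then $R\le 2\epsilon_\star C_2 m$, and here the ``$\sqrt{n}$ plus linear'' form is used: the $m$ carried by $R$ cancels the $m$ in the denominator of the play-ratio, giving $\frac{n_i}{m}R\le \frac{n_i}{m}\cdot 2\epsilon_\star C_2 m=2\epsilon_\star C_2 n_i\le 2\epsilon_\star C_2 T$, and likewise $\frac{n_i}{\sqrt{m}}=\sqrt{n_i}\,\sqrt{n_i/m}\le \sqrt{n_i}\cdot\frac{2R}{C_1\sqrt{m}}\le \sqrt{n_i}\cdot\frac{4\epsilon_\star C_2\sqrt{m}}{C_1}\le \frac{4\epsilon_\star C_2 T}{C_1}\le 4\epsilon_\star C_2 T$ (using $n_i,m\le T$ and $C_1\ge 1$); through \pref{lem:regbound_nonelim} this gives $\regret_i(t_i)=O\bigl(\epsilon_\star C_2 n_i\sqrt{\ln\frac{M\ln T}{\delta}}+\sqrt{T\ln\frac{M\ln T}{\delta}}\bigr)$, and the leading piece telescopes: $\sum_{i}2\epsilon_\star C_2 n_i\le 2\epsilon_\star C_2\sum_{i\in\misspidx}n_i(t_i)\le 2\epsilon_\star C_2 T$. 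Collecting the well-specified sum, the $BC_1^2$ from the capped learners, the two sub-cases above, and the $O(M)$ bookkeeping, and using $B\le M$ and $C_1,C_2\ge 1$, everything collapses to $\regret(T)=O\bigl(MC_1\sqrt{T\ln\frac{M\ln T}{\delta}}+MC_2\epsilon_{\bestallwell}T\sqrt{\ln\frac{M\ln T}{\delta}}+BC_1^2\bigr)$.

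The step I expect to be the main obstacle is precisely the control of $n_i(t_i)/n_\star(t_i)$. Unlike in \pref{thm:poly_reg}, this ratio is \emph{not} $O(1)$ in general: when $\epsilon_i\ll\epsilon_\star$ and $n_\star(t_i)$ is large, balancing only forces $C_1\sqrt{n_i}\lesssim \epsilon_\star C_2\,n_\star(t_i)$, so the ratio can be of order $\epsilon_\star^2 C_2^2\,n_\star(t_i)/C_1^2$, and a naive substitution into \pref{thm:general_regbound} would generate spurious terms growing superlinearly in $T$. The resolution — and the reason the specific shape $C_1\sqrt{n}+\epsilon_i C_2 n$ of the candidate bound matters, rather than an arbitrary presumed bound — is never to estimate the ratio in isolation, but always to keep it multiplied by $\regretbound_\star(n_\star(t_i))$ or $\sqrt{n_\star(t_i)}$ and, in the regime where the linear part of $\regretbound_\star$ dominates, to cancel the offending $n_\star(t_i)$ against the denominator, leaving $\epsilon_\star C_2\,n_i(t_i)\le \epsilon_\star C_2 T$. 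The only secondary subtlety is the ``$\wedge n$'' cap: learners played at most $C_1^2$ times are in the trivial regime, and bounding their regret by the crude ``regret $\le$ number of plays'' estimate instead of via \pref{lem:regbound_nonelim} is what makes the $BC_1^2$ term logarithm-free.
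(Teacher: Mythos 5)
Your proposal is correct and is essentially the paper's own argument: the paper proves \pref{thm:linTmissp} by feeding the balancing condition into \pref{lem:regbound_nonelim} and performing a case-by-case analysis over the regimes of the candidate bounds (\pref{lem:lin_sqrt_regret_bound}), which is exactly your split into the capped regime (yielding the log-free $B C_1^2$ term), the $C_1\sqrt{n}$-dominated regime (constant play ratio), and the $\epsilon_\bestallwell C_2 n$-dominated regime where $\frac{n_i(t_i)}{n_\bestallwell(t_i)}\regretbound_\bestallwell(n_\bestallwell(t_i))$ collapses to $\epsilon_\bestallwell C_2 n_i(t_i)$ and telescopes over the misspecified learners. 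The only (harmless) difference is that the paper's Phase-III case invokes the assumption $\epsilon_i\le\epsilon_\bestallwell$ for $i\in\misspidx$, whereas your argument bypasses it by lower-bounding $\regretbound_i(n_i)$ by its $C_1\sqrt{n_i}$ part alone once $n_i>C_1^2$.
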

\begin{proof}
This statement is proven analogously to the generic bound in \pref{thm:poly_reg}, but it makes heavy use of a case-by-case analysis of the different regimes of candidate regret bounds provided in \pref{lem:lin_sqrt_regret_bound} in \pref{app:stoch_proofs}.
\end{proof}

\subsection{Gap-Dependent Regret Bounds}\label{s:gapdependent}
The regret guarantees in the previous section only depend on which learners are well- or misspecified and their presumed regret bounds. In particular, a misspecified learner may violate their presumed regret bound at any time by any amount. However, in many relevant practical cases, a base learner is either well-specified or violates their presumed regret bound by a significant amount. For example in contextual bandits where each base learner has access to a restricted policy class, a learner achieves good $\sqrt{T}$ regret when the optimal policy is contained in its policy class, but has otherwise to suffer linear regret.
We now provide tighter guarantees for \pref{alg:balanced_elim_general} in such cases.
Specifically, we assume that if a learner $j$ is misspecified, its regret is lower-bounded by
\begin{align*}
    \regret_j(t) \geq \Delta_j n_j(t)^{\alpha}
\end{align*}
for all $t$, where $\Delta_j > 0$ and $\alpha$ is strictly larger than both $\frac{1}{2}$ and the presumed regret rate $\beta$ in Eq. (\ref{eqn:poly_reg_candidates}). Since the regret of $j$ grows significantly faster than its presumed regret bound and the regret of the best well-specified learner (that is, $\regret_j$ has a large {\em gap}), we can show that the elimination test in \pref{alg:balanced_elim_general} is triggered after playing learner $i$ for a certain number of times. This allows us to prove the following gap-dependent regret-guarantee:
\begin{restatable}{theorem}{polyreggapthm}\label{thm:poly_reg_gap}
Assume \pref{alg:balanced_elim_general} is used with candidate regret bounds in Equation~\eqref{eqn:poly_reg_candidates} and that the pseudo-regret of all misspcified learners $j \in \misspidx$ is bounded for all $t$ from below as $\regret_j(t) \geq \Delta_j n_j(t)^{\alpha}$, for some constants $\Delta_j > 0$ and $\alpha > \frac{1}{2} \vee \beta$. If $0 < \beta < \frac{1}{2}$ then total regret is bounded with probability at least $1- \delta$ for all $T$ as
\begin{align*}
     \regret(T) = O\left( M d_\star C T^{\beta} + \sum_{i \in \misspidx} 
    C   \left( (2d_\star)^{\frac{1}{\beta} + \frac{1}{\beta(2\alpha - 1)}} + d_\star d_i^{\frac{1}{2\alpha - 1}}
    \right) \left[ \frac{20C }{\Delta_i} \ln \frac{M \ln T}{\delta}\right]^{\frac{1}{2\alpha - 1}}
    \right)~,
\end{align*}
where $\star \in \Wcal$ is any well-specified learner. If instead $\beta \geq \frac{1}{2}$, then the total regret is bounded with probability at least $1- \delta$ for all $T$ as
\begin{align*}
    \regret(T) = O\left( M d_\star C T^{\beta} + \sum_{i \in \misspidx} 
    C \sqrt{ \ln\frac{M \ln T}{\delta}}  \left( d_\star^{\frac{1}{\beta} + \frac{1}{\alpha - \beta}} + d_\star d_i^{\frac{\beta}{\alpha - \beta}}
    \right) \left[ \frac{20C}{\Delta_i} \right]^{\frac{\beta}{\alpha - \beta}}
    \right).
\end{align*}
\end{restatable}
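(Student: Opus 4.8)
The plan is to feed the general decomposition of \pref{thm:general_regbound} with a single extra, gap-driven ingredient: a \emph{horizon-free} bound $n_i(t_i)\le N_i$ on the number of rounds a misspecified learner survives. Indeed, in \pref{thm:general_regbound} the well-specified part $\sum_{i=1}^M \regretbound_\star(n_\star(t_i))$ is at most $M\regretbound_\star(T)\le Md_\star CT^\beta$ and already produces the leading term of both displayed bounds; what remains is a sum over $i\in\misspidx$ of the residual terms $\big(\tfrac{n_i(t_i)}{n_\star(t_i)}+1\big)\regretbound_\star(n_\star(t_i))$ and $c\big(1+\sqrt{\tfrac{n_i(t_i)}{n_\star(t_i)}}\big)\sqrt{n_i(t_i)\ln\tfrac{M\ln T}{\delta}}$, which in the gap-independent \pref{thm:poly_reg} were only controlled via \pref{lem:balancing} and the trivial $n_i(t_i)\le T$. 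Replacing $n_i(t_i)\le T$ by $n_i(t_i)\le N_i$, with $N_i$ independent of $T$, turns this residual into the gap-dependent sum in the statement.

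To obtain $N_i$, I apply \pref{lem:regbound_nonelim} at the last round $t_i$ in which learner $i$ is active (so $i\in\Ical_{t_i+1}$; put $t_i=T$ if $i$ is never eliminated) and combine it with the gap lower bound $\regret_i(t_i)\ge \Delta_i n_i(t_i)^\alpha$ to get the self-referential inequality
\begin{align*}
 \Delta_i n_i(t_i)^\alpha \;\le\; 1 + \Big(\tfrac{n_i(t_i)}{n_\star(t_i)}+1\Big)\regretbound_\star(n_\star(t_i)) + 2c\Big(1+\sqrt{\tfrac{n_i(t_i)}{n_\star(t_i)}}\Big)\sqrt{n_i(t_i)\ln\tfrac{M\ln T}{\delta}}~.
\end{align*}
Writing $n:=n_i(t_i)$ and using \pref{lem:balancing} in the regime where both candidate bounds are polynomial -- which gives $\regretbound_\star(n_\star(t_i))\le 2\regretbound_i(n)\le 2d_iCn^\beta$ and $(d_i/2d_\star)^{1/\beta}\,n\le n_\star(t_i)$, hence bounds $n_\star(t_i)^{\beta-1}$ from above -- the right-hand side becomes a function of $n$ alone, of the shape $a\,n^\beta + b\,\sqrt{n\ln\tfrac{M\ln T}{\delta}}+1$ with $a\asymp C\,\mathrm{poly}(d_\star,d_i)$ and $b\asymp c\,\mathrm{poly}(d_\star,d_i)$. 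Since $\alpha>\beta\vee\tfrac12$, the left-hand side $\Delta_i n^\alpha$ strictly outgrows every term on the right, so the inequality can hold only for
\begin{align*}
 n_i(t_i)\;\le\;N_i\;\asymp\;\Big(\tfrac{a}{\Delta_i}\Big)^{\!\frac{1}{\alpha-\beta}} \;+\; \Big(\tfrac{b}{\Delta_i}\Big)^{\!\frac{2}{2\alpha-1}}\Big(\ln\tfrac{M\ln T}{\delta}\Big)^{\!\frac{1}{2\alpha-1}}~;
\end{align*}
the ``$\wedge\,n$'' cap of the candidate bounds is dispatched by a short separate case in which the elimination test cannot fire, so $n_i$ is at most a constant depending only on $d_i,C$.

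Finally I substitute $n_i(t_i)\le N_i$ (and $n_\star(t_i)\le T$ where needed) back into \pref{thm:general_regbound}: by the same manipulations, the residual contributed by a misspecified $i$ is at most $O\big(aN_i^\beta+b\sqrt{N_i\ln\tfrac{M\ln T}{\delta}}\big)$, which by definition of $N_i$ is $\asymp\Delta_i N_i^\alpha$; summing over $i\in\misspidx$ and adding $Md_\star CT^\beta$ yields the theorem. The two cases are simply the question of which summand of $N_i$ dominates: when $\beta\ge\tfrac12$ one has $\tfrac1{\alpha-\beta}\ge\tfrac2{2\alpha-1}$, so the first (candidate-bound-driven) summand dominates, and substituting it produces the exponent $\tfrac\beta{\alpha-\beta}$ on $\tfrac C{\Delta_i}$ together with a single $\sqrt{\ln}$ factor; when $\beta<\tfrac12$ the inequality reverses, the second (concentration-driven) summand dominates, and one obtains the exponent $\tfrac1{2\alpha-1}$ with the $\ln$ pulled inside the bracket. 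The precise powers of $d_\star$ and $d_i$ in the statement emerge after collecting the $\mathrm{poly}(d_\star,d_i)$ factors in $a$ and $b$, which requires a further sub-case on whether $d_i\ge d_\star$ -- this decides, through \pref{lem:balancing}, whether $n_\star(t_i)\gtrsim n_i(t_i)$ or $n_\star(t_i)\lesssim n_i(t_i)$, and hence how $\regretbound_\star(n_\star(t_i))$ and $n_\star(t_i)^{\beta-1}$ are bounded in terms of $n_i(t_i)$.

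The step I expect to be the real work is the middle one -- turning the self-referential inequality into a genuinely self-contained bound on $n_i(t_i)$. The delicate points are: (i) carving out the linear-cap regime, since the clean balancing relation $n_\star(t_i)\asymp (d_i/d_\star)^{1/\beta}\,n_i(t_i)$ only holds when both candidate bounds sit in their polynomial part; (ii) showing that the term originating from $\regretbound_i(n_i(t_i))$ -- which, in the elimination test underlying \pref{lem:regbound_nonelim}, sits on the ``wrong'' side of the inequality -- does not swamp the gap, which is exactly what the balancing bound $\regretbound_i(n_i(t_i))\le 2\regretbound_\star(n_\star(t_i))$ buys; and (iii) the $d_\star$-versus-$d_i$ arithmetic needed to land the stated exponents. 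None of this is conceptually deep: the argument rests entirely on the single observation that a learner with a genuine $n^\alpha$ gap, $\alpha>\beta\vee\tfrac12$, cannot keep passing the elimination test for more than $\mathrm{poly}(d_\star,d_i,C,1/\Delta_i)\cdot\mathrm{polylog}(T/\delta)$ rounds.
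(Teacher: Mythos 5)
Your plan is correct and follows essentially the same route as the paper's proof: start from \pref{thm:general_regbound}, bound the well-specified part by $M d_\star C T^{\beta}$, use the elimination test together with the balancing/play-ratio bound and the gap assumption to derive exactly the self-referential inequality that caps $n_i(t_i)$ by a horizon-free quantity (this is the paper's \pref{lem:elimination_gen_obs}, obtained there via \pref{lem:suff_cond_elimination} and \pref{lem:general_ratio_bound}, with the linear-cap regime absorbed into the factor $Z = 2 \vee (2d_\star/d_i)^{1/\beta}$), then substitute back and split on $\beta \gtrless \tfrac12$. The only differences are cosmetic: you invoke \pref{lem:regbound_nonelim} rather than its contrapositive form, and your heuristic for which summand of $N_i$ dominates in each case is exactly the case analysis the paper carries out explicitly.
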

Although the argument of eventually eliminating base learners with a large gap is similar to a gap-dependent analysis is multi-armed bandits, it is important to note that the notion of gap here is a property of the base learner and not (necessarily) of the 
action space at hand.

\pref{tab:summary_gap_bounds} contains a summary of the guarantees in \pref{thm:poly_reg_gap} for the special case where $\alpha = 1$ and $\beta = \frac{1}{3}, \frac{1}{2}$ and $\frac{2}{3}$.
Comparing \pref{thm:poly_reg_gap} to \pref{thm:poly_reg} (or \pref{tab:summary_gap_bounds} to \pref{tab:summary_worst_case}), we see that the multiplicative factor in front of the best well-specified regret bound is only $M$, as compared to the presence of extra $d_\star$ factors without a gap-assumption. Further, while the additive term in \pref{tab:summary_gap_bounds} may have a dependency on a potentially large $d_i$, this term only scales with $T$ as $\ln \ln T$, and is thus virtually constant. Importantly, this yields the optimal scaling in $T$ even when $\beta < \frac{1}{2}$ (see the first line of Table \ref{tab:summary_gap_bounds}) so that the additional $\sqrt{T}$-term occurring in \pref{tab:summary_worst_case} can be avoided. This result is in contrast with existing approaches such as~\cite{pacchiano2020model}, where the $\sqrt{T}$ dependence cannot be avoided.

\begin{table}
    \centering
    \bgroup
\def\arraystretch{1.7}
    \begin{tabular}{@{\hspace{0cm}}l|ll@{\hspace{0cm}}}
    Presumed Bounds $\regretbound_i$ & Gap-Dependent Regret Guarantee of \pref{alg:balanced_elim_general} & Proof \\
    \hline
        $d_i C n^{1/3}$ & 
    $ M \textcolor{DarkGreen}{d_\star C T^{1/3}} + \sum_{i \in \misspidx} \frac{C^2 (d_\star^6 + d_\star d_i)}{\Delta_i} \ln \frac{M \ln T}{\delta}$
    & \pref{thm:poly_reg_gap}
    \\
    $d_i Cn^{2/3}$ &
    $M  \textcolor{DarkGreen}{d_\star C T^{2/3}}
    + \sum_{i \in \misspidx} \frac{C^3(d_\star^{4.5} +d_\star d_i^2)}{\Delta_i^2} \sqrt{\ln \frac{\ln T}{\delta}}
    $
    & \pref{thm:poly_reg_gap}
    \\
    $d_i C\sqrt{n}$ 
    & $M  \textcolor{DarkGreen}{d_\star C \sqrt{T}}
    + \sum_{i \in \misspidx} \frac{C^2(d_\star^4 +d_\star d_i)}{\Delta_i} \sqrt{\ln \frac{\ln T}{\delta}}
    $
    & \pref{thm:poly_reg_gap}
    \\
    \hline
        $d_i C\sqrt{n \ln\frac{n}{\delta}}$ 
    & $M  \textcolor{DarkGreen}{d_\star C \sqrt{T \ln\frac{T}{\delta}}}
    + \sum_{i \in \misspidx} \frac{C^2(d_\star^4 +d_\star d_i)}{\Delta_i} \ln^{3/4}\frac{MT}{\delta} \ln^{3/2} \frac{\ln T}{\delta}
    $
    & \pref{thm:sqrt_reg_gap}
    \end{tabular}
    \egroup
    \caption{Summary of our gap-dependent regret bounds when each misspecified learner has linear pseudo-regret ($\alpha = 1$). Some constant factors are omitted for readability.
    In green is the regret guarantee of the best well-specified learner.} \label{tab:summary_gap_bounds}
\end{table}

\section{Example Applications}
\label{sec:applications}
\subsection{Brief Review of Contextual Linear Bandits and the \oful\ Algorithm}
One important application of the methods we presented in \pref{sec:balancingandelimination} and  \pref{sec:regretanalysis} is the setting of contextual linear bandits, which we now briefly review. To keep consistency with previous sections, we shall assume here that contexts are drawn i.i.d. from some distribution over context space $\Xcal$. Yet, the algorithmic solutions we present (specifically, the \oful\, algorithm) actually work unchanged even in the more general fixed design or adaptive design scenarios. This will be useful in \pref{sec:adv}, when dealing  with the {\em adversarial} contextual bandit setting.

In the contextual bandit setting, context $x_t$ determines the set of actions $\Acal_t \subseteq \Acal$ that can be played at time $t$. When the bandit setting is {\em linear} the policies we consider are of the form $\pi_{\theta}(x_t) = \arg\max_{a \in \Acal_t}\langle a_t,\theta\rangle$, for some $\theta \in \mathbb{R}^d$, and the class of policies $\Pi$ can then be thought of as a class of $d$-dimensional vectors $\Pi \subseteq \mathbb{R}^d$. Moreover, rewards are generated according to a noisy linear function, that is,
$r_t= \langle a_t, \theta_* \rangle + \xi_t$, where $\theta_* \in \Pi$ is unknown, and $\xi_t$ is a conditionally zero mean $\sigma-$subgaussian random variable. 
We denote the time-$t$ optimal action as $a_t^\star = \argmax_{a \in \Acal_t} \langle a, \theta_\star \rangle$. The learner's objective is to control its pseudo-regret:
\begin{equation*}
    \regret(T) = \sum_{t=1}^T  \langle a_t^\star, \theta_\star\rangle -  \langle a_t, \theta_\star \rangle~.
\end{equation*}

\paragraph{OFUL Algorithm.}
\begin{algorithm}[t]
\textbf{Input: }regularization parameter $\lambda>0 $, confidence scaling $\beta_1, \beta_2, \dots$\\
\For{round $t=1, 2, \dots$}{
    Update regularized least-squares estimator $\hat{\theta}_t$ and covariance matrix $\Sigma_t$\\
    Receive context $x_t$/action space $\Acal_t$\\
    Play optimistic action:
$$    a_t \in  \argmax_{a \in \Acal_t} \max_{\theta \in \mathcal{C}_t } \langle a, \theta \rangle
= \argmax_{a \in \Acal_t}\, \langle \hat \theta_t, a \rangle + \beta_t \|a\|_{\Sigma_t^{-1}}
$$
Receive reward $r_t = \langle a_t, \theta_\star \rangle + \xi_t$~.\\
}
\caption{\oful\, \citep{abbasi2011improved}}
\label{alg:OFUL}
\end{algorithm}
We now recall the relevant components of the OFUL algorithm~\citep{abbasi2011improved} shown in \pref{alg:OFUL}. Instances of this algorithm will play the role of base learners in subsequent sections.
The OFUL algorithm proceeds by computing a regularized least-squares (RLS) estimator $\hat{\theta}_t$ of the true parameter $\theta_\star$ using the data collected so far:
\begin{align}
    \hat{\theta}_t := \Sigma_{t}^{-1} \left(  \sum_{\ell=1}^{t-1} a_{\ell}\, r_{\ell}\right) \quad \textrm{where} \quad \Sigma_{t} = \lambda \mathbb{I} + \sum_{\ell=1}^{t-1} a_\ell a_\ell^\top~.
    \label{eqn:oful_ls}
\end{align}
Here, $\Sigma_t$ is the regularized covariance matrix of the played actions up to the beginning of round $t$ with regularization parameter $\lambda$, and $\mathbb{I}$ denotes the $d\times d$ identity matrix. 
Using $\hat{\theta}_t$ and $\Sigma_t$, \oful\ 
proceeds by computing a confidence ellipsoid
\begin{equation}\label{eq:ellipsoid_OFUL}
    \mathcal{C}_t := \{\theta \,:\, \| \theta - \hat{\theta}_t \|_{\Sigma_{t}} \leq \beta_t \}
\end{equation}
that should contain the optimal parameter $\theta_\star$. We will discuss a choice of the (possibly data-dependent) scaling factor $\beta_t \in \RR_+$ below that ensures that this happens in all rounds with high probability.
\pref{alg:OFUL} now plays any action that achieves highest expected return in what we refer to as the optimistic model
\begin{equation}\label{eqn:oful_beta}
    \tilde{\theta}_t = \argmax_{\theta \in \mathcal{C}_t} \max_{a \in \Acal_t } \langle a, \theta\rangle~. 
\end{equation}
This choice of action is equivalent to picking $a_t \in \argmax_{a \in \Acal_t}\, \langle \hat \theta_t, a \rangle + \beta_t \|a\|_{\Sigma_t^{-1}}$.

 We define the event that the above-mentioned ellipsoidal confidence set $\Ccal_t$ contains $\theta^*$ at all times $t \in \NN$ as
\begin{equation}\label{eq:gcal_OFUL}
    \Ecal = \{ \theta_* \in \mathcal{C}_t, \quad  \forall t \in \mathbb{N} \}~.
\end{equation}
In this event $\Ecal$, the optimistic model $\tilde \theta_t$ indeed gives rise to an optimistic estimate of the expected reward in each round
\begin{equation}\label{equation::optimism_OFUL}
    \langle a_t, \tilde{\theta}_t\rangle \geq \max_{a \in \Acal_t} \langle a, \theta_\star \rangle = \langle a_t^\star, \theta_\star\rangle~. 
\end{equation}
\citet{abbasi2011improved} show that the following choice for $\beta_t$ is sufficient to make $\Ecal$ happen with high probability:
\begin{lemma}[Theorem 1 in \cite{abbasi2011improved}] 
\label{lem:yasin}
For any $\delta \in (0, 1)$, let the confidence scaling be
\begin{align}\label{eq:beta_OFUL}
    \beta_t &:= 
    \sqrt{2\sigma^2  \ln\left( \frac{\det(\Sigma_t)^{1/2} \det(\lambda I)^{-1/2}}{\delta}\right)} + \sqrt{\lambda}S 
    \leq
    \sqrt{\sigma^2 d \ln\left( \frac{1 + tL^2/\lambda}{\delta}\right)} + \sqrt{\lambda}S~ 
\end{align}
where $S$ is a known bound on the parameter norm $\max_{\theta \in \Pi} \| \theta \|_2$ and
$L$ is a known bound on the action norm in all rounds, i.e., $\max_{a \in \Acal_t} \| a \|_2 \leq L$ for all $t$.
Then $\theta_\star$ is contained in the confidence ellipsoid with high probability, i.e., $\mathbb{P}\left( \Ecal \right) \geq 1-\delta$.
\end{lemma}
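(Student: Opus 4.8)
The plan is to follow the self-normalized confidence-ellipsoid argument of \citet{abbasi2011improved}. First I would decompose the regularized least-squares error: writing $S_t := \sum_{\ell=1}^{t-1} a_\ell \xi_\ell$ and $V_{t-1} := \sum_{\ell=1}^{t-1} a_\ell a_\ell^\top$ (so that $\Sigma_t = \lambda\mathbb I + V_{t-1}$) and substituting $r_\ell = \langle a_\ell,\theta_\star\rangle + \xi_\ell$ into the definition of $\hat\theta_t$, a direct computation gives $\hat\theta_t - \theta_\star = \Sigma_t^{-1} S_t - \lambda \Sigma_t^{-1}\theta_\star$. Taking $\Sigma_t$-weighted norms and the triangle inequality then yields
\[
\|\hat\theta_t - \theta_\star\|_{\Sigma_t} \;\le\; \|S_t\|_{\Sigma_t^{-1}} + \lambda \|\theta_\star\|_{\Sigma_t^{-1}} \;\le\; \|S_t\|_{\Sigma_t^{-1}} + \sqrt{\lambda}\,S~,
\]
where the last step uses $\Sigma_t \succeq \lambda\mathbb I$ (hence $\Sigma_t^{-1} \preceq \lambda^{-1}\mathbb I$) together with $\|\theta_\star\|_2 \le S$. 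So it suffices to control $\|S_t\|_{\Sigma_t^{-1}}$ uniformly over all $t$.

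The core step is the self-normalized tail bound obtained by the method of mixtures. For each fixed $\eta \in \mathbb R^d$, the conditional $\sigma$-subgaussianity of $\xi_\ell$ given the past implies that $M_t^\eta := \exp\!\big(\tfrac{1}{\sigma}\langle \eta, S_t\rangle - \tfrac12 \|\eta\|_{V_{t-1}}^2\big)$ is a nonnegative supermartingale with $M_1^\eta = 1$. Mixing $M_t^\eta$ over $\eta$ against a centered Gaussian prior $\mathcal N(0,\lambda^{-1}\mathbb I)$ and evaluating the resulting Gaussian integral produces a new nonnegative supermartingale $\bar M_t = \big(\det(\lambda\mathbb I)/\det(\Sigma_t)\big)^{1/2}\exp\!\big(\tfrac{1}{2\sigma^2}\|S_t\|_{\Sigma_t^{-1}}^2\big)$ with $\bar M_1 = 1$. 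A maximal inequality for nonnegative supermartingales (applied, to be careful about the ``for all $t$'' claim, to the supermartingale stopped at the first violation time) gives $\mathbb P\big(\exists t:\ \bar M_t \ge 1/\delta\big) \le \delta$; on the complementary event, for all $t$,
\[
\|S_t\|_{\Sigma_t^{-1}}^2 \;\le\; 2\sigma^2 \ln\!\left(\frac{\det(\Sigma_t)^{1/2}\det(\lambda\mathbb I)^{-1/2}}{\delta}\right)~.
\]
Combining this with the error decomposition shows that on this event $\|\hat\theta_t - \theta_\star\|_{\Sigma_t} \le \beta_t$ for all $t$, i.e.\ $\theta_\star \in \mathcal C_t$ for all $t$, which is exactly $\mathbb P(\Ecal) \ge 1-\delta$ with the first expression for $\beta_t$.

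Finally, for the closed-form upper bound on $\beta_t$ I would invoke the determinant--trace inequality: since $\mathrm{tr}(\Sigma_t) = d\lambda + \sum_{\ell=1}^{t-1}\|a_\ell\|_2^2 \le d\lambda + tL^2$, AM--GM on the eigenvalues of $\Sigma_t$ gives $\det(\Sigma_t) \le (\lambda + tL^2/d)^d$, hence $\det(\Sigma_t)^{1/2}\det(\lambda\mathbb I)^{-1/2} \le (1 + tL^2/(\lambda d))^{d/2} \le (1 + tL^2/\lambda)^{d/2}$; substituting and simplifying the logarithm yields $\sqrt{2\sigma^2\ln(\det(\Sigma_t)^{1/2}\det(\lambda\mathbb I)^{-1/2}/\delta)} \le \sqrt{\sigma^2 d\ln((1+tL^2/\lambda)/\delta)}$, and adding $\sqrt\lambda S$ to both sides gives the claimed inequality. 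I expect the main obstacle to be the second step: setting up the method-of-mixtures supermartingale, and in particular turning the pointwise maximal inequality into the uniform-in-time statement via an optional-stopping argument, requires care, whereas the least-squares decomposition and the determinant--trace bound are routine.
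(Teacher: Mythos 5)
Your proposal is correct and is essentially the argument behind the cited result: the paper does not prove \pref{lem:yasin} itself but imports it from \citet{abbasi2011improved}, and your reconstruction (least-squares error decomposition, method-of-mixtures self-normalized supermartingale with Ville/stopping-time maximal inequality, then the determinant--trace bound) is exactly that proof. The only cosmetic caveat is that the final simplification $2\sigma^2\ln(1/\delta)\le \sigma^2 d\ln(1/\delta)$ implicitly uses $d\ge 2$, a slack already present in the lemma's stated inequality.
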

In event $\Ecal$, one can show that the regret of \pref{alg:OFUL} is bounded for all $t \in [T]$ as
\begin{align*}
    \regret(t) %
    \leq %
    2 \beta_{\max}
    \sqrt{ dt \left(1 + \frac{L^2}{\lambda
 }\right) \ln \frac{d \lambda + tL}{d \lambda}}~,
\end{align*}
where $\beta_{\max} = \max_{k \in [t]} \beta_k$.
We reproduce a slightly more general version of the standard proof for this regret bound in \pref{lem:LinUCB_regret_guarantee} in the appendix.
The right side of the above inequality will play the role of our presumed regret bound $\regretbound(n_i(t))$ when \oful\, is used as a base learner. 

In the rest of this section, we present a number of applications of our balancing and elimination machinery to the case where the base learners are instances of the \oful\, algorithm.

\subsection{Linear Bandits with Nested Model Classes}\label{sec:linbannested}

We can apply our regret bound balancing algorithm to linear bandits where the true dimensionality $d_\star$ of the model $\theta_\star$ is unknown a-priori. In this standard scenario, considered by many recent papers in the model selection literature for bandit
algorithms~\citep[e.g.][]{foster2019model, pacchiano2020model}, the learner chooses among actions $\Acal_t \subseteq \RR^{d_{\max}}$ of dimension
$d_{\max}$ but only the first $d_\star$ dimensions are relevant (that is, $(\theta_\star)_i = 0$ for $i > d_\star$).

One can learn in this setting as follows: 
We use $\log_2 d_{\max}$ instances of OFUL as base learners\footnote{We here assume that $d_\star$ and $d_{\max}$ are powers of 2 for convenience but our results also hold generally up to a constant factor of $2$.}. Each instance $i$ first truncates the actions to dimension $d_i = 2^i$ and then only computes the least-squares estimate and confidence ellipsoid in $\RR^{d_i}$.
Based on the OFUL regret guarantees in the previous section, we use $\regretbound_i(n) = d_i C \sqrt{n} \wedge n$ as putative regret bounds, with constant $C$ set to  
\begin{align*}
    C = 2\left( \sigma + \sqrt{\lambda}S \right)
    \sqrt{  \left(1 + \frac{L^2}{\lambda
 }\right) \ln \left(\frac{1 + TL^2 / \lambda}{\delta}\right) \ln \frac{\lambda + TL}{\lambda}}~.
\end{align*}
For convenience, we here assume the time horizon $T$ is known and $\ln T$ terms can therefore be absorbed into the constant $C$ common to all base learners, but any-time versions are also possible  by setting $n = T$ above at which the regret bound scales as $\sqrt{n}\ln n$ 
(see \pref{thm:sqrt_reg} in appendix).
By the regret guarantee of OFUL discussed in the previous section, with probability $1 - M\delta$, any base learner $i$ such that $d_i < d_{\star}$ will be misspecified, while all remaining $i$ are well specified.

More specifically, we have 
$M = O(\ln d_{\max})$-many base learners, out of which $B =  O(\ln d_{\star})$ are misspecified. Then a direct application of \pref{thm:poly_reg} with $\beta = 1/2$ gives
\begin{align*}    
\regret(T) &= O\left(\left(\ln d_{\max} + d_\star\sqrt{\ln d_\star}\right) d_\star C \sqrt{T}
\right)
\approx O\left(\left(\ln d_{\max} + d_\star\sqrt{\ln d_\star}\right) d_\star \sqrt{T} \ln T
\right),
\end{align*}
where the second expression only retains dependencies on $T$, $d_\star$ and $d_{\max}$.

If further all misspecified learners suffer linear regret $\regret_i(t) \geq \Delta n_i(t)$ for some $\Delta > 0$ (e.g. since they cannot represent the observed rewards, they may converge to playing a strictly suboptimal action for most contexts), then applying \pref{thm:poly_reg_gap} yields
\begin{align*}
    \regret(T) &= O \left( \ln(d_{\max}) d_\star C \sqrt{T} + \ln (d_\star)\frac{ C^2 d_\star^4}{\Delta} \sqrt{\ln \frac{\ln T}{\delta}}\right) \\
    &\approx O \left( \ln(d_{\max}) d_\star \sqrt{T} \ln T + \frac{ d_\star^4 \ln d_\star }{\Delta}(\ln T)^2 \sqrt{\ln \ln T}\right)~,
\end{align*}
where the second expression again only shows dependencies on $T$, $d_\star$, $d_{\max}$ and $\Delta$. Notice that, as $T$ grows large, the main term of the above bound becomes $d_\star \sqrt{T}$, up to log factors. This is precisely the bound we would achieve had we known in advance dimension $d_{\star}$, and just played the associated base \oful\, from beginning to end. 

\begin{remark}
A standard goal in model selection is to obtain sub-linear regret bounds even in the case where the model complexity of the target class is allowed to grow sub-linearly with $T$ -- see, e.g., the discussion in \citep{foster2019model}.
In our case, this would be obtained by regret bounds of the form $d_\star^{\alpha}\, T^{1-\alpha}$, for some $\alpha \in (0,1)$, for example a bound of the form
$\sqrt{d_\star\, T}$.
It is worth observing that in the setting considered in this paper this is an impossible goal to achieve since, unlike \cite{foster2019model}, we are dealing with {\em infinite} action spaces, and the best one can hope for in this case is indeed $d_\star\sqrt{T}$ (see Section 2 in \cite{rt10}).
\end{remark}

\subsection{Linear Markov Decision Processes with Nested Model Classes}\label{sec:linearmdpnested}
We can instantiate the regret bound in \pref{thm:poly_reg} ($\beta = 1/2$) to the episodic linear MDP setting of \citet{jin2020provably}, again with nested feature classes of doubling dimension, as in \pref{sec:linbannested}. 
Here, each round $t$ of \pref{alg:balanced_elim_general} corresponds to one episode of $H$ time steps in the MDP, and contexts $x_t$ are the initial state of the episode in the MDP.
\citet{jin2020provably} prove that their \textsc{LSVI-UCB} algorithm achieves regret $O(H^2\sqrt{ d^3 K}\ln(dK / \delta))$ after $K$ episodes when used with a realizable function class of dimension $d$. We deploy $M = O(\ln d_{\max})$ instances of \textsc{LSVI-UCB} as base learners with presumed regret bounds
\begin{align*}
    \regretbound_i(n) = Hn \wedge H^2 \sqrt{d_i^3 n}\ln(d_{\max}T / \delta).
\end{align*}
Since the total reward per episode (= round) is in $[0, H]$ instead of $[0, 1]$ in this setting, we scale the regret bound as well as the constant $c$ in \pref{alg:balanced_elim_general} by $H$. By \pref{thm:poly_reg} the total regret of \pref{alg:balanced_elim_general} after $T$ episodes is bounded as
\begin{align*}
    \regret(T) &= O \left(
    \left(\sqrt{d_\star^3 \ln d_{\star}} + \ln d_{\max}\right)H^2 \sqrt{d_\star^3 T}\ln(d_{\max}T/\delta)
    \right)
\end{align*}
with probability $1 - M \delta$. Similar to the contextual bandit setting above, we can achieve a tighter bound if all misspecified learners suffer linear regret $\regret_i(t) \geq \Delta n_i(t)$ for some $\Delta > 0$. Then applying \pref{thm:poly_reg_gap} yields
\begin{align*}
    \regret(T) &= 
    O \left(
    H^2 \sqrt{d_\star^3 T}\ln (d_{\max})\ln(d_{\max}T/\delta)
    + \frac{H^4 d_\star^6}{\Delta} \ln(d_{\max} T/ \delta)^2 \sqrt{\ln \frac{\ln T}{\delta}}
    \right)\, 
\end{align*}
which, up to log factors and lower order terms, again coincides with the regret bound of the best base learner in hindsight.

\subsection{Linear Bandits and MDPs with Unknown Approximation Error}\label{sec:unknownapx}
\citet{zanette2020learning} presents an algorithm for learning a good policy in episodic MDPs where the value functions are all close to a linear feature space of dimension $d$. Their algorithm admits a high-probability regret bound of order\footnote{The $\tilde O$ notation is similar to the $O$-notation but hides poly-logarithmic dependencies.} $\tilde O(Hd\sqrt{T} + H\sqrt{d} \epsilon T)$ for all $T$ when a bound $\epsilon$ on the inherent Bellman error is known a-priori. For details of the setting and the exact definition of inherent Bellman error see  \citet{zanette2020learning}. Unfortunately, in most practical applications, one does not know $\epsilon$ ahead of time and picking a conservative value (large $\epsilon$) makes the algorithm over-explore and suffer large regret.

We can address this limitation by applying \pref{alg:balanced_elim_general} with several instances of their algorithm as base-learners, each associated with a certain value of the inherent Bellman error $\epsilon_i = \frac{2^{1-i}}{\sqrt{d}}$ and the putative regret bound $\regretbound_i(n) =  (CHd\sqrt{n} + CH \sqrt{d}\epsilon_i n) \wedge Hn$ for an appropriate value $C$ that depends at most logarithmically on $d, T$ or $H$. It is sufficient to use $M = \lceil 1 + \frac{1}{2} \log_2 (T / d^2) \rceil$ base learners since the putative regret bound of learner $1$ (with $\epsilon_1 = 1/\sqrt{d}$ and $\regretbound_1(n) \geq Hn$) always holds, while the putative regret bound of learner $M$ is at most $\regretbound_M(T) \leq 2 CH d \sqrt{T}$, which is a constant factor worse than the regret when $\epsilon = 0$.

By \pref{thm:linTmissp}, the total regret of \pref{alg:balanced_elim_general} with these base learners is 
\begin{align*}
    \regret(T) &=O \left( MCH(d \sqrt{T} + \sqrt{d}\epsilon_\star T) \sqrt{\ln \frac{M \ln T}{\delta}} + B C^2H^2 d^2\right)\\
    &= \tilde O \left( H d \sqrt{T} + H \sqrt{d}\epsilon_\star T  + H^2 d^2\right)
\end{align*}
with probability $1 - M \delta$.
Hence, up to at most logarithmic factors and a lower-order additive term, our model-selection framework can recover the best regret bound without requiring knowing the inherent Bellman error ahead of time.
Notice also that the special case $H = 1$ recovers the standard linear bandit setting and the algorithm by \citet{zanette2020learning} reduces to OFUL with a confidence ellipsoid that accounts for $\epsilon_i$. In this bandit case $\epsilon_\star$ is the absolute approximation error of expected rewards. 

Recently, \citet{foster2020adapting} have shown that an adaptation to unknown approximation errors $\epsilon_\star$ is possible in contextual bandits, but their model-selection approach requires base learners that work with importance weights, and whose importance-weighted regret admits a favorable dependency on $\epsilon_i$. Here we have shown that a similar result (up to logarithmic factors) can be achieved with standard optimistic base learners such as \oful.
Our result also matches the regret-guarantee by \citet{pacchiano2020model} but does not require their smoothing procedure for base-learners.
Importantly, our result proves that an adaptation to unknown approximation errors $\epsilon_\star$ is also possible without any modification to base learners in the MDP setting where base-learners that achieve the importance-weighted regret guarantee required by \citet{foster2020adapting} are (still) unavailable. Note also that our framework is not specific to instances of the algorithm by \citet{zanette2020learning} as base learners. Our model selection algorithm can, for example, also be used with approximate versions of LSVI-UCB by \citet{jin2020provably} and achieve similar regret guarantees in their setting and for their notion of approximation error.

\subsection{Confidence parameter tuning in \oful}\label{sec:optoful}
A standard problem that arises in the practical deployment of contextual bandit algorithms like \oful\, is that they are extremely sensitive to the tuning of their upper-confidence parameter ruling the actual trade-off between exploration and exploitation.
The choice of confidence parameter from \pref{lem:yasin} ensures high-probability regret guarantee but is often too conservative. This can for example be the case when the actual noise variance is smaller than the assumed $\sigma^2$ variance. While there are concentration results (empirical Bernstein bounds) that can adapt to such fortunate low-variance noise for scalar parameters (e.g., in unstructured multi-armed bandits), such adaptive bounds are still unavailable for least-squares estimators. Empirically, choosing smaller values for $\beta_1, \dots, \beta_T$ can often achieve significantly better performance but comes at the cost of losing any theoretical performance guarantee. Our model-selection framework can be used to tune the confidence parameter online and simultaneously achieve a regret guarantee.

We will now look at ways to
compete against the instance of the \oful\, algorithm which is equipped with the optimal scaling of its upper-confidence value, in the sense of the following definition:
\begin{definition}
Denote by $\bar \beta_{t}$ the confidence-parameter choice from \pref{lem:yasin} and let $\kappa \in \RR_+$ be a scaling factor.
Further, let ${\hat \theta_{S}}(\kappa)$ and $\Sigma_S(\kappa)$ be the iterates of least squares estimator and covariance matrix obtained by running \oful\  with scaled confidence parameters $(\kappa \bar \beta_t)_{t \in \NN}$ on a subset of rounds $S \subseteq [T]$. Then, for a given range $[\kappa_{\min}, 1]$, the optimal confidence parameter scaling for \oful\, is defined as
\begin{align*}
    \kappa_\star = \min_{\kappa \in [\kappa_{\min}, 1]} \max_{S \subseteq [T]} \frac{\|{\hat \theta_{S}}(\kappa) - \theta_\star\|_{\Sigma_{S}(\kappa)^{-1}}}{\bar \beta_{|S|}}~. 
\end{align*}
\end{definition}
In words, the optimal $\kappa_\star$ is the smallest scaling factor of confidence parameters that ensures that no matter to what subset of rounds we would apply \oful\ to, the optimal parameter $\theta_{\star}$ is always contained in the confidence ellipsoid. 
Observe that $\kappa_{\star}$ is a random quantity, i.e., $\kappa_\star$ is the best scaling factor for the given realizations in hindsight. \pref{lem:yasin} ensures that $\PP(\kappa_\star \leq 1) \geq 1 - \delta$ and empirical observations suggest that $\kappa_\star$ is much smaller in many events and bandit instances.

Now, \pref{lem:LinUCB_regret_guarantee} in \pref{app:ancillary} ensures that \oful\ with confidence parameters $\kappa \bar \beta_t$ admits  a regret bound of the form\footnote
{
For simplicity of presentation, we set here $\lambda = 1$ and disregarded the dependence on other parameters like $L$, $S$, and $\sigma$.
}
$\regret(n) \lesssim \kappa d \sqrt{ n }\ln(n) \wedge n$ if $\kappa \geq \kappa_\star$. Since $\kappa_\star$ is unknown, we run \pref{alg:balanced_elim_general} with $M$ instances of \oful\ as base learners, each with a scaling factor $\kappa_i = 2^{1-i}$, $i = 1,\ldots, M$, and putative regret bound $\regretbound_i(n) \approx \kappa_i d \ln(T) \sqrt{n} \wedge n$.
Note that it is sufficient to use $M = 1 + \log_2 \frac{1}{\kappa_{\min}}$~.

Then, by \pref{thm:poly_reg} (with $\beta = 1/2$ therein), the regret of \pref{alg:balanced_elim_general} is bounded with probability at least $1 - \delta$ as
\begin{align*}
    \regret(T) &\lesssim \left(M + \sqrt{B}  \frac{\kappa_i}{\kappa_{\min}} \right) \regretbound_\star(T)\\
    &= O\left( \left(\frac{\kappa_\star}{\kappa_{\min}} \sqrt{\ln \frac{\kappa_\star}{\kappa_{\min}}} + \ln \frac{1}{\kappa_{\min}}  \right) %
    \kappa_\star d \ln(T) \sqrt{ T}
    \right).
\end{align*}
Note that this is a random and problem-dependent bound because so is $\kappa_\star$. In cases where $\kappa_{\star} \lesssim \sqrt{\frac{\kappa_{\min}}{\ln(1 / \kappa_{\min})}}$, this bound strictly improves on the standard \oful\, bound relying on confidence scaling $\kappa = 1$, which is often way too conservative in practice.

\section{Extension to Adversarial Contexts}
\label{sec:adv}
In this section, we show that the regret balancing and elimination principle can also be used for model selection when the contexts $x_t$ are generated in an adversarial manner. This requires slightly stronger assumptions on the base learners, which hold in many settings when we select between a hierarchy of optimistic learners such as OFUL or LSVI-UCB. For the sake of concreteness, we present our extension of the regret balancing and elimination algorithm to adversarial contexts for the setting from \pref{sec:linbannested}, but our technique for adversarial contexts can be easily adapted to all other bandit applications discussed in \pref{sec:applications} and likely to episodic MDP settings with adversarial start states as well.

Let us briefly recall the setting from \pref{sec:linbannested}. We consider the problem of linear bandits and are given $M$ instances of OFUL as base learners. Each instance $i$ considers only on the first $d_i = 2^i$ dimensions of the actions, with $d_1 < d_2 < \dots < d_M$. Since the entries of the true parameter $\theta_\star$ are $0$ for all dimensions above $d_{i_\star}$, where $i_\star \in [M]$ is an unknown index, all learners $i_\star, i_\star + 1, \dots M$ are well-specified with high probability. We focus our analysis on the event $\Ecal$ where this is the case. Unlike \pref{sec:linbannested} where contexts are assumed to be drawn i.i.d., we here consider the setting where contexts $x_t$ (corresponding to the action set $\Acal_t$ at round $t$) are generated adversarially. Since each base learner operates only in a lower-dimensional subspace, we allow the bounds on the action norm $L_i$, the bound on the parameter norm $S_i$ and the range of expected return $R^{\max}_i$ to vary per base learner $i$ (potentially depend on the number of dimension $d_i$) but for the sake of simplicity, we assume that all learners use regularization parameter $\lambda = 1$.

\pref{alg:balanced_elim_general}, which assumes stochastic contexts, compares upper- and lower confidence bounds on the optimal return value $\mu^\star$ obtained from learners that were executed on two disjoint subsets of rounds to determine misspecification. This strategy does not work with adversarial contexts since the optimal policy that an algorithm could have achieved depends on the contexts in the rounds that it was played. One algorithm may only have seen "bad" contexts with low $\mu^\star_t$, while another may only encountered favorable contexts with high $\mu^\star_t$. A direct comparison is therefore meaningless.

To be able to handle adversarial contexts and address this challenge, we modify our regret balancing and elimination algorithm in two ways: (1) we randomize the learner choice for regret balancing and (2) we change the misspecfication test to compare upper and lower confidence bounds on the optimal policy value of \emph{all} rounds played to far. The resulting algorithm is presented in \pref{alg:adversarial_model_selection} which operates in epochs where the subroutine in \pref{alg:adversarial_epoch_balancing}
is executed. 
We start by discussing the regret balancing subroutine in the next section before presenting the main algorithm and its regret guarantee afterwards.

\subsection{The Epoch Balancing Subroutine}
\label{section:epoch_balancing_algorithm}

\begin{algorithm}[t]
   \textbf{Input:} set of learners $\Ical$%
   
\For{round $t=1, 2, \dots $}{
    Receive context $x_t$
    
    \ForEach{learner $i \in \Ical$}{
    Ask learner $i$ for a lower bound $B_{t, i}$ on the value of its proposed action
    }
    Sample $i_t \sim p \propto \frac{1}{z_i}  \textrm{ for } i \in \Ical$ \hspace{2cm} (see Equation~\eqref{eqn:prob_def}) \\
    Play learner $i_t$ and receive reward $r_t$
    
    Update base learner $i_t$ with $r_t$
    
    Test for misspecification by checking
    $\displaystyle \sum_{i \in \Ical} [U_i(t) + \regretbound_i(n_i(t))] + c \sqrt{t \ln\frac{\ln(t)} {\delta}} < \max_{i \in \Ical} \sum_{k = 1}^t B_{k, i}
    $
    \\
    \If{\textrm{above condition is triggered}}{
    \textbf{Return} \tcp*{At least one learner must be misspecified}
    }
    
}
\caption{EpochBalancing}
\label{alg:adversarial_epoch_balancing}
\end{algorithm}

This subroutine in \pref{alg:adversarial_epoch_balancing} takes in input a set of active base learners $\Ical = \{s, s+1, \ldots, M\}$ and ensures by randomized regret bound balancing that its total regret is controlled for all rounds until it terminates. 

In addition to the putative bound $R_i$ on its regret, \pref{alg:adversarial_epoch_balancing} requires that each learner $i$ can also provide a lower-confidence bound on $\EE[r_t | a_{t, i}, x_t]$, the expected reward of the action it would play in the current context $x_t$. Since each base learner $i$ is an instance of OFUL, we can choose these bounds at round $t$ as
\begin{align*}
R_i(n_i(t)) &= 2\sum_{k \in T_i(t)} \left(\beta_{ k, i} \|a_{k, i}\|_{\Sigma_{k, i}^{-1}} \wedge R^{\max}_i \right) & \textrm{and}  \\
    B_{t, i} &= \left( \langle  \wh \theta_{t, i}, a_{t, i} \rangle - \beta_{t, i} \|a_{t, i}\|_{\Sigma_{t, i}^{-1}} \right) \vee - R^{\max}_i
\end{align*}
where $R^{\max}_i \in [1,   L_i S_i]$ is the range of expected returns\footnote{We specifically assume that $\EE[r_t | a_t, x_t] \in \left[- R^{\max}_{\star}, +  R^{\max}_\star \right]$ where $\star$ is the smallest base learner whose model class contains the optimal parameter $\theta_\star$.} and $L_i \geq \max_{t} \|a_{t, i} \|$ and $S_i \geq \|\theta^\star\|$ are the norm bounds used by the OFUL base learners. Further,   
$\wh{\theta}_{t,i}$, $\Sigma_{t, i}$ and $\beta_{t,i}$ are the parameter estimate (Eq.~\ref{eqn:oful_ls}), the covariance matrix (Eq.~\ref{eqn:oful_ls}) and the ellipsoid radius (Eq.~\ref{eqn:oful_beta}) of base learner $i$ at time $t$, respectively. 
In similar spirit, 
\begin{align*}
    a_{t, i} \in \argmax_{a \in \Acal_{t}} \langle  \wh \theta_{t, i}, a \rangle + \beta_{ t, i} \|a_{t, i}\|_{\Sigma_{t, i}^{-1}} 
\end{align*}
denotes the action that base learner $i$ would take at time $t$. Note that we mean here the truncated actions and covariance matrix in $\RR^{d_i}$ and $\RR^{d_i \times d_i}$.

At each round $t$, \pref{alg:adversarial_epoch_balancing} first requests these bounds from each base learner to be later used in the misspecification test. The algorithm then selects one of the base learners in $\Ical$ by sampling an index $i_t \sim \operatorname{Categorical}(p)$ from a categorical distribution with probabilities
\begin{align}
p_i = \frac{1/ z_i}{\sum_{j \in \Ical} 1/z_j}~,~\qquad\textrm{where } z_i = (d_i^2 + d_i S_i^2)\left(R^{\max}_i \wedge L_i^2 \right)\qquad \textrm{for } i \in \Ical~.\label{eqn:prob_def}
\end{align}
Since the regret of OFUL scales roughly at a rate of $\sqrt{z_i T}$, this learner selection rule approximately equalizes the regret of all learners in expectation. 
The algorithm proceeds by playing the action proposed by $i_t$, gathering the associated reward $r_t$, and updating $i_t$'s internal state.\footnote{We may also pass on the observation all base learners when base learners can accept \emph{off-policy} samples (which do not necessarily come from the proposed action), as is the case for OFUL.} 
Finally, \pref{alg:adversarial_epoch_balancing} performs a misspecification test and terminates if this test triggers.  We refer to the execution of \pref{alg:adversarial_epoch_balancing} as an epoch.

Unlike the misspecification test in \pref{alg:balanced_elim_general} which considers the hypothesis that a \emph{specific} learner $i$ is well specified, the misspecification test in \pref{alg:adversarial_epoch_balancing} tests the hypothesis that \emph{all} active learners are well-specified. 
If all OFUL learners $i \in \Ical$ are well-specified, in the sense that their ellipsoid confidence sets contain $\theta_\star$ for all rounds $t$ so far, then each $B_{t,i}$ is also a lower-bound on the optimal value in round $t$, since
\begin{align*}
    B_{t,i} \leq \EE[r_t | a_{t, i}, x_t] \leq \max_{a \in \Acal_t }  \EE[r_t | a, x_t] = \mu^\star_t\ .
\end{align*}
Hence, the right-hand side of the misspecification test in \pref{alg:adversarial_epoch_balancing} is a lower-bound on the optimal value of all rounds to far, that is, it satisfies $\max_{j \in \Ical} \sum_{k = 1}^t B_{k, j} \leq  \sum_{k=1}^t \mu_t^\star$.
Similarly, when all learners are well-specified and satisfy their putative regret bounds, then the left-hand side of the misspecification test is an upper-bound on $\sum_{k=1}^t \mu_k^\star$. We can see this as follows. First, by basic concentration arguments, the realized rewards cannot be much smaller than their conditional expectations with high probability, that is, $\sum_{i \in \Ical} U_i(t) \geq \sum_{k=1}^t \EE[r_t | a_t, x_t] - c \sqrt{t \ln\frac{\ln(t)} {\delta}}$. This implies that
\begin{align*}
    &\sum_{i \in \Ical} [U_i(t) + \regretbound_i(n_i(t))] + c \sqrt{t \ln\frac{\ln(t)} {\delta}} \\
    &\geq \sum_{k=1}^t \EE[r_t | a_t, x_t] + \sum_{i \in \Ical}  \regretbound_i(n_i(t))
    = \sum_{i \in \Ical} \left[ \sum_{k \in T_i(t)} \EE[r_t | a_t, x_t] + \regretbound_i(n_i(t))\right]
    \\&\geq 
    \sum_{i \in \Ical} \left[ \sum_{k \in T_i(t)} \EE[r_t | a_t, x_t] + \regret_i(t)\right]
    = \sum_{i \in \Ical} \sum_{k \in T_i(t)} \mu_k^\star
    = \sum_{k=1}^k \mu_k^\star,
\end{align*}
where the last inequality holds because $R_i(n_i(t)) \geq \regret_i(t)$ when $i$ is well-specified.
Thus, if all learners are well-specified, the misspecification test cannot trigger (with high probability). The following theorem formalizes this argument:
\begin{restatable}{theorem}{advepochnonterminate}
\label{thm:adv_epoch_balancing_termination}
With probability at least $1 - \delta$, \pref{alg:adversarial_epoch_balancing} does not terminate if all base learners are well-specified and their elliptical confidence sets contain $\theta^\star$ at all times.
\end{restatable}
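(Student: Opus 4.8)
The plan is to show that, on a high-probability event, the misspecification test in \pref{alg:adversarial_epoch_balancing} can never fire provided every base learner is well-specified (in the OFUL sense that $\theta_\star \in \mathcal C_{t,i}$ for all $t$). The informal derivation displayed just before the theorem already contains the skeleton of the argument; the job of the proof is to (i) name the concentration event precisely and bound its failure probability, and (ii) chain the two-sided sandwich of $\sum_{k=1}^t \mu_k^\star$ cleanly.

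First I would isolate the single stochastic ingredient. The reward noise $\xi_k = r_k - \EE[r_k\mid a_k,x_k]$ forms a martingale difference sequence (bounded, or $\sigma$-subgaussian, and with $|\EE[r_k\mid a_k,x_k]|\le R^{\max}_\star$), so by an anytime Azuma--Hoeffding / uniform-in-time concentration bound (the same ``$\ln\ln$'' peeling that underlies \pref{lem:mustar_conc}), with probability at least $1-\delta$ we have simultaneously for all $t$
\begin{align*}
    \sum_{i\in\Ical} U_i(t) \;=\; \sum_{k=1}^t r_k \;\ge\; \sum_{k=1}^t \EE[r_k\mid a_k,x_k] \;-\; c\sqrt{t\ln\tfrac{\ln t}{\delta}}~,
\end{align*}
for the universal constant $c$ appearing in the test. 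Call this event the good event; everything below is deterministic on it.

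Next, the upper sandwich. On the good event, adding $\sum_{i\in\Ical}\regretbound_i(n_i(t))$ to both sides and grouping the reward sum by which learner played each round gives
\begin{align*}
    \sum_{i\in\Ical}\big[U_i(t)+\regretbound_i(n_i(t))\big] + c\sqrt{t\ln\tfrac{\ln t}{\delta}}
    \;\ge\; \sum_{i\in\Ical}\Big[\sum_{k\in T_i(t)}\EE[r_k\mid a_k,x_k] + \regret_i(t)\Big]~,
\end{align*}
where I used $\regretbound_i(n_i(t))\ge\regret_i(t)$, valid because learner $i$ is well-specified. But $\sum_{k\in T_i(t)}\EE[r_k\mid a_k,x_k]+\regret_i(t) = \sum_{k\in T_i(t)}\mu_k^\star$ by the very definition of pseudo-regret $\regret_i$, so the right-hand side equals $\sum_{i\in\Ical}\sum_{k\in T_i(t)}\mu_k^\star=\sum_{k=1}^t\mu_k^\star$. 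Hence the left-hand side of the test is $\ge\sum_{k=1}^t\mu_k^\star$.

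For the lower sandwich I would invoke optimism of OFUL under the confidence-set event: since $\theta_\star\in\mathcal C_{t,i}$ and $a_{t,i}$ is chosen optimistically, $B_{t,i}=\big(\langle\wh\theta_{t,i},a_{t,i}\rangle-\beta_{t,i}\|a_{t,i}\|_{\Sigma_{t,i}^{-1}}\big)\vee(-R^{\max}_i) \le \EE[r_t\mid a_{t,i},x_t]\le \max_{a\in\Acal_t}\EE[r_t\mid a,x_t]=\mu_t^\star$ (the truncation by $-R^{\max}_i$ is harmless since $\mu_t^\star\ge -R^{\max}_\star$ and one should note $B_{t,i}\le\langle\theta_\star,a_{t,i}\rangle$ when the max is active, using that $\theta_\star$ lies in the ellipsoid). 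Summing over $k\le t$ and maximizing over $i\in\Ical$ yields $\max_{i\in\Ical}\sum_{k=1}^t B_{k,i}\le\sum_{k=1}^t\mu_k^\star$. Combining the two sandwiches, the left-hand side of the misspecification test is $\ge\sum_{k=1}^t\mu_k^\star\ge$ the right-hand side, so the strict inequality defining the test never holds; \pref{alg:adversarial_epoch_balancing} does not terminate. Since the good event has probability $\ge 1-\delta$, this proves the claim.

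The main obstacle — really the only subtle point — is getting the concentration step to be genuinely \emph{anytime} (uniform over all $t$) with exactly the $c\sqrt{t\ln(\ln t/\delta)}$ form hard-coded into the test, rather than a weaker union-bound-over-$t$ rate; this is handled by the same stitching/peeling lemma used for $\Gcal$ in the stochastic analysis, and one must be careful that the boundedness of $\EE[r_t\mid a_t,x_t]$ by $R^{\max}_\star$ (not $1$) only changes constants. A secondary bookkeeping point is to verify that the $-R^{\max}_i$ truncation in $B_{t,i}$ and the $R^{\max}_i$ truncation in $\regretbound_i$ never break the two chains of inequalities, which follows because $\mu_k^\star$ itself lies in $[-R^{\max}_\star,R^{\max}_\star]$ and the per-learner ranges satisfy $R^{\max}_i\ge R^{\max}_\star$ for every active $i$ in the epoch (all active learners have $d_i\ge d_\star$).
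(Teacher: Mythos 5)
Your proposal is correct and follows essentially the same route as the paper: a one-sided anytime (stitched) concentration bound on $\sum_{i\in\Ical}U_i(t)$ (the paper's Lemma~\ref{lem:mu_adv_conc}), combined with the well-specified regret bounds to show the test's left-hand side upper-bounds $\sum_{k=1}^t\mu_k^\star$ (Lemma~\ref{lem:test_LHS}), and optimism of the OFUL confidence ellipsoids to show the right-hand side lower-bounds it, so the strict test inequality never fires. The only difference is presentational—the paper factors the two sandwich steps into separate lemmas—so no further comment is needed.
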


Therefore, if the test does trigger, at least one learner in $\Ical$ has to be misspecified, that is, either their putative regret bound $R_i$ or a lower bound $B_{k,i}$ does not hold.
However, until the test triggers, the condition in the test is sufficient to control the regret as the following theorem formalizes.

In this result, we assume that the base learner regret bounds $z_i$ (see Eq.~\eqref{eqn:prob_def}) are sufficiently apart, i.e., $2 z_i \leq z_{i+1}$ holds for all $i \in \Ical \setminus \{M\}$. Note that this assumption can always be ensured by first filtering the base learners. This filtering can increase the regret by at most a factor of $2$. 
\begin{restatable}{theorem}{advepochregret}
\label{thm:adv_epoch_balancing_regret}
Assume that \pref{alg:adversarial_epoch_balancing} is run with instances of OFUL as base learners that use different dimensions $d_i$ and norm bounds $L_i, S_i$ with $2 z_i \leq z_{i+1}$ (see Eq.~\eqref{eqn:prob_def}). All base learners use expected reward range $R^{\max}_i = 1$ and $\lambda = 1$. Denote by $\star$ the smallest index of the base learner so that all base learners $j \in \Ical$ with $d_j \geq d_\star$ are well-specified and their elliptical confidence sets always contain the true parameter. Then, with probability at least $1 - 2 \delta$, the regret is bounded for all rounds $t$ until termination as
\begin{align*}
    \regret(t) \leq \tilde O 
    \left( 
    \left(d_\star + \sqrt{d_\star}S_\star  + |\Ical| \right)
    \textcolor{DarkGreen}{(d_\star + \sqrt{d_\star}S_\star)\sqrt{t}}
    \right)
\end{align*}
\end{restatable}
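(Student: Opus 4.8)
The plan is to work throughout on the intersection of the event $\Ecal$, that the confidence ellipsoids of all base learners with $d_i \ge d_\star$ contain $\theta_\star$ at every round --- so that $\star$ is well-defined and learner $\star$ is optimistic in each round; by \pref{lem:yasin} and a union bound over the at most $|\Ical|$ relevant learners this has probability $\ge 1-\delta$ --- and a handful of anytime concentration events described below, of total probability $\ge 1-\delta$, so that all statements below hold simultaneously with probability at least $1-2\delta$. The starting point is that the sampling distribution $p$ in \pref{alg:adversarial_epoch_balancing} depends only on the fixed numbers $z_i$ and on the fixed input set $\Ical$, so the played indices $i_1, i_2, \dots$ are i.i.d.\ draws from $p$; hence each $n_i(t)$ is binomial, and a multiplicative Chernoff bound together with a union over $i \in \Ical$ and a peeling argument over $t$ yields, on the first concentration event, $n_i(t) \le 2 t p_i + O\!\big(\ln \tfrac{|\Ical|\ln t}{\delta}\big)$ for all $i \in \Ical$ and all $t$.

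I would then convert the fact that \pref{alg:adversarial_epoch_balancing} has not yet terminated at round $t$ into a regret bound. Writing $r_k = \langle a_k, \theta_\star\rangle + \xi_k$ and $U(t) = \sum_{i\in\Ical} U_i(t)$, we have $\regret(t) = \sum_{k=1}^t \mu^\star_k - U(t) + \sum_{k=1}^t \xi_k$, and on a second concentration event the (martingale) noise term is $\sum_{k=1}^t \xi_k \le O\!\big(\sqrt{t \ln \tfrac{\ln t}{\delta}}\big)$ for all $t$. Because learner $\star$ is optimistic on $\Ecal$, the upper confidence value of its proposed action $a_{k,\star}$ is at least $\mu^\star_k$, and a short calculation (accounting for the truncation in $B_{k,\star}$) gives $\mu^\star_k - B_{k,\star} \le 2 w_k$, where $w_k := \beta_{k,\star}\|a_{k,\star}\|_{\Sigma_{k,\star}^{-1}} \wedge R^{\max}_\star$; summing, $\sum_{k=1}^t B_{k,\star} \ge \sum_{k=1}^t \mu^\star_k - 2 W_\star(t)$ with $W_\star(t) := \sum_{k=1}^t w_k$. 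Plugging $j = \star$ into the non-triggered misspecification test, rearranging, and using the noise bound yields
\[
   \regret(t) \;\le\; \sum_{i \in \Ical} R_i(n_i(t)) \;+\; 2 W_\star(t) \;+\; O\!\Big(\sqrt{t \ln \tfrac{\ln t}{\delta}}\Big).
\]

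It remains to bound the two main terms. The quantity $R_i(\cdot)$ is just the mechanical \oful\ regret-bound expression, valid whether or not $i$ is well-specified, so the elliptical-potential lemma (\pref{lem:LinUCB_regret_guarantee}, with $\lambda=1$ and $R^{\max}_i = 1$) gives $R_i(n_i(t)) = \tilde O(\sqrt{z_i\, n_i(t)})$; together with $n_i(t) \le 2 t p_i$ and $p_i = (1/z_i)/\!\sum_{l \in \Ical} 1/z_l$ this is $\tilde O\!\big(\sqrt{t/\!\sum_l 1/z_l}\big) = \tilde O(\sqrt{z_\star t})$ for every $i$, and summing over the at most $|\Ical|$ active learners gives $\tilde O(|\Ical|\sqrt{z_\star t})$. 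The term $W_\star(t)$ is the genuinely new difficulty: learner $\star$'s covariance $\Sigma_{k,\star}$ only accumulates the $\approx t p_\star$ rounds in which $\star$ was played, whereas $W_\star(t)$ sums confidence widths over all rounds. Here the randomization is essential --- $w_k$ is determined once $x_k$ is revealed, hence before the independent coin flip $i_k$, so $\sum_{k \le t}\big(w_k\mathbbm{1}[i_k=\star] - p_\star w_k\big)$ is a bounded-increment martingale with conditional variance $\le p_\star w_k^2 \le p_\star w_k$. An anytime Freedman inequality, using $\sum_{k \in T_\star(t)} w_k = \tfrac12 R_\star(n_\star(t)) = \tilde O(\sqrt{z_\star n_\star(t)}) = \tilde O(\sqrt{z_\star t p_\star})$, then gives (after solving the resulting quadratic) $W_\star(t) = \tilde O\!\big(\tfrac1{p_\star}\sqrt{z_\star t p_\star} + \tfrac1{p_\star}\big) = \tilde O\!\big(\sqrt{z_\star t / p_\star}\big)$. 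Finally, the separation hypothesis $2 z_i \le z_{i+1}$ implies $\sum_{l \in \Ical} 1/z_l \le 2/\min_{l\in\Ical} z_l \le 2$ (each $z_i \ge 1$ under the stated constraints on $R^{\max}_i, L_i, S_i$), so $1/p_\star = z_\star \sum_l 1/z_l = \tilde O(z_\star)$ and $W_\star(t) = \tilde O(z_\star \sqrt t)$. Adding everything and using $\sqrt{z_\star} \le d_\star + \sqrt{d_\star}S_\star$ recovers the claimed bound $\regret(t) = \tilde O\!\big((|\Ical| + d_\star + \sqrt{d_\star}S_\star)(d_\star + \sqrt{d_\star}S_\star)\sqrt t\big)$.

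The main obstacle I expect is the control of $W_\star(t)$: one must choose the filtration so that the learner-selection coins are independent of the confidence widths, derive an anytime Freedman bound that is self-bounding in $W_\star(t)$ and solve the quadratic, and --- crucially --- notice that the a-priori harmful factor $1/p_\star$ is reined in exactly by the geometric separation $2 z_i \le z_{i+1}$; without that hypothesis the dependence on $|\Ical|$ would be exponential rather than linear. A secondary, purely technical burden is that all concentration statements must be uniform in $t$ up to termination, which is precisely what the $c\sqrt{t\ln(\ln t/\delta)}$ slack built into the test, and the $\ln\ln t$ factors above, exist to absorb.
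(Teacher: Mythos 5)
Your proposal is correct and follows the paper's proof of \pref{thm:adv_epoch_balancing_regret} (via \pref{thm:adv_general_regret}) in all structural respects: the same use of the non-triggered test instantiated at $j=\star$, the same optimism argument giving $\mu^\star_k - B_{k,\star} \le 2\bigl(R^{\max}_\star \wedge \beta_{k,\star}\|a_{k,\star}\|_{\Sigma_{k,\star}^{-1}}\bigr)$, the same anytime concentration of the realized rewards, the same count concentration $n_i(t) \lesssim t p_i + \ln(\cdot)$ (the paper's \pref{lem:playcount_ub}), and the same algebra with $p_i \propto 1/z_i$ and the geometric separation $2z_i \le z_{i+1}$ to tame $1/p_\star$. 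The one place you genuinely deviate is the central term $\sum_{k\le t}(\mu^\star_k - B_{k,\star})$: the paper first applies Cauchy--Schwarz over all $t$ rounds and then invokes its randomized elliptical potential lemma (\pref{lem:elliptical_random}), which internally is a self-bounding Bernstein comparison of the full sum of truncated \emph{squared} widths to $\tfrac{1}{p_\star}$ times the played-subsequence potential; you instead run the martingale comparison directly on the unsquared truncated widths $w_k$ (using that $w_k$ is measurable before the independent coin $i_k$ and $w_k \le R^{\max}_\star = 1$ so the conditional variance is $\le p_\star w_k$), solve the resulting Freedman quadratic to get $W_\star(t) \lesssim \tfrac{1}{p_\star} V_\star(t) + \tfrac{1}{p_\star}\ln(\cdot)$, and then bound $V_\star(t) = \tfrac12 R_\star(n_\star(t))$ with only the \emph{standard} elliptical potential on the rounds actually played by $\star$. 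Both routes rest on the same filtration observation and both yield $\tilde O(\sqrt{z_\star t / p_\star})$; yours is arguably a bit more elementary since it needs no new potential lemma, while the paper's version is packaged as a reusable lemma on squared widths. The remaining discrepancies (citing \pref{lem:LinUCB_regret_guarantee} where really the computation inside its proof / \pref{lem:elliptical} is what is used, and the exact bookkeeping of which events make up the $2\delta$) are cosmetic and do not affect the argument.
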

Here, we highlighted the regret bound of the single best well-specified learner $\star$ in green. 
We here assumed that the range of expected rewards is known and $1$. If this is not the case and we have to rely on the expected reward range induced by the vector norms $L_i$ and $S_i$, then we have an additional lower-order term: 
\begin{restatable}{theorem}{advepochregretls}
\label{thm:adv_epoch_balancing_regret_ls}
Assume that \pref{alg:adversarial_epoch_balancing} is run with instances of OFUL as base learners that use different dimensions $d_i$ and norm bounds $L_i, S_i$ and $R^{\max}_i = L_i S_i$ with $2 z_i \leq z_{i+1}$ (see Eq.~\eqref{eqn:prob_def}). Denote by $\star$ the smallest index of the base learner so that all base learners $j \in \Ical$ with $d_j \geq d_\star$ are well-specified and their elliptical confidence sets always contain the true parameter. Then, with probability at least $1 - 2 \delta$, the regret is bounded for all rounds $t$ until termination as
\begin{align*}
    \regret(t) \leq \tilde O 
    \left( 
    \left(d_\star L_\star  + \sqrt{d_\star}S_\star L_\star  + |\Ical| \right)
    \textcolor{DarkGreen}{(d_\star + \sqrt{d_\star}S_\star)L_\star \sqrt{t}} + \sum_{i \in \Ical} L_i S_i
    \right)\ .
\end{align*}
\end{restatable}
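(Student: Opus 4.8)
The plan is to run the same scheme that proves \pref{thm:adv_epoch_balancing_regret}, but to carry the expected‑reward ranges $R^{\max}_i = L_i S_i$ explicitly instead of setting them to $1$; the extra additive term $\sum_{i\in\Ical}L_iS_i$ will be the accumulation, one contribution per base learner, of the burn‑in of the OFUL base learners (the $\lambda=1$ regularization, giving initial widths up to $\beta_{1,i}\|a\|\le L_iS_i$, and the clipping of confidence widths at $R^{\max}_i$). First I would fix the working event. On top of $\Ecal$ from \pref{lem:yasin} (valid confidence sets for all well‑specified base learners at all rounds), I would add a time‑uniform martingale bound for the reward sums: with probability $1-\delta$, $|\sum_{k=1}^t r_k - \sum_{k=1}^t \EE[r_k\mid a_k,x_k]| \le c\,R^{\max}_\star\sqrt{t\ln(\ln t/\delta)}$ for all $t$ (peeling over $t$, whence the $\ln\ln t$; the per‑step range is $O(R^{\max}_\star)$), and a Bernstein‑type bound on the randomized play counts, $n_i(t)\lesssim p_i t+\mathrm{polylog}$ for all $t$. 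Everything below lives on the intersection of these events, of total probability at least $1-2\delta$.

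Next, the regret decomposition. Writing $\regret(t)=\sum_{k=1}^t\mu^\star_k-\sum_{k=1}^t\EE[r_k\mid a_k,x_k]$ and using the reward concentration together with $\sum_{k=1}^t r_k=\sum_{i\in\Ical}U_i(t)$, we get $\regret(t)\le \sum_{k=1}^t\mu^\star_k-\sum_{i\in\Ical}U_i(t)+cR^{\max}_\star\sqrt{t\ln(\ln t/\delta)}$. Since \pref{alg:adversarial_epoch_balancing} has not yet terminated at round $t$, the misspecification test has not fired, i.e. $\sum_{i\in\Ical}[U_i(t)+\regretbound_i(n_i(t))]+c\sqrt{t\ln(\ln t/\delta)}\ge \max_{j\in\Ical}\sum_{k=1}^t B_{k,j}\ge \sum_{k=1}^t B_{k,\star}$, where $\star$ is the smallest index whose model contains $\theta_\star$. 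Substituting yields $\regret(t)\le \big(\sum_{k=1}^t\mu^\star_k-\sum_{k=1}^t B_{k,\star}\big)+\sum_{i\in\Ical}\regretbound_i(n_i(t))+c(R^{\max}_\star+1)\sqrt{t\ln(\ln t/\delta)}$, so it remains to bound the two sums.

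The second sum is handled as in \pref{thm:adv_epoch_balancing_regret}: the elliptical‑potential lemma gives $\regretbound_i(n)\lesssim \sqrt{z_i\, n\,\ln(\ldots)}+O(L_iS_i)$ with $z_i$ as in \eqref{eqn:prob_def}, and combining $n_i(t)\lesssim p_i t$ with $p_i\propto 1/z_i$ and the doubling $2z_i\le z_{i+1}$ gives $\sqrt{z_i n_i(t)\ln(\ldots)}\lesssim \sqrt{t\ln(\ldots)/\sum_{j}z_j^{-1}}\le \sqrt{z_\star t\ln(\ldots)}$, hence $\sum_{i\in\Ical}\regretbound_i(n_i(t))\lesssim |\Ical|\sqrt{z_\star t\ln(\ldots)}+\sum_{i\in\Ical}L_iS_i$; since $\sqrt{z_\star}\le (d_\star+\sqrt{d_\star}S_\star)L_\star$ this already produces the $|\Ical|$‑term and the additive term of the claim.

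The hard part is the first sum, $\sum_{k=1}^t(\mu^\star_k-B_{k,\star})$, and this is precisely where adversarial contexts bite: unlike \pref{alg:balanced_elim_general}, one cannot compare optimal‑value proxies across disjoint round subsets, so the bound has to be phrased over all rounds played so far. By optimism for the well‑specified learner $\star$ (the UCB of $\star$'s own proposed action dominates $\mu^\star_k$, and $B_{k,\star}$ is its clipped LCB), $\mu^\star_k-B_{k,\star}\le 2\big(\beta_{k,\star}\|a_{k,\star}\|_{\Sigma_{k,\star}^{-1}}\wedge R^{\max}_\star\big)$ for every $k$, whether or not $\star$ was played; the difficulty is that $a_{k,\star}$ is $\star$'s proposed action, not the played one, so this ``ghost'' potential does not telescope directly. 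The plan is to tame it via the randomized balancing: $\star$ is selected each round with the fixed probability $p_\star$ and its covariance is updated off‑policy with every played action, so for each $k$ the expected played‑action width dominates $p_\star\big(\beta_{k,\star}\|a_{k,\star}\|_{\Sigma_{k,\star}^{-1}}\wedge R^{\max}_\star\big)$; a Freedman argument over rounds (per‑step range $R^{\max}_\star$) together with the elliptical‑potential lemma applied to the actually‑played action sequence then bounds $\sum_{k=1}^t(\beta_{k,\star}\|a_{k,\star}\|_{\Sigma_{k,\star}^{-1}}\wedge R^{\max}_\star)$ by $O\big(p_\star^{-1}\sqrt{z_\star t\ln(\ldots)}+p_\star^{-1}R^{\max}_\star\sqrt{t\ln(\ldots)}+L_\star S_\star\big)$, and the doubling condition plus the choice of $\star$ as the \emph{smallest} well‑specified learner keep $p_\star^{-1}$ at the level of the remaining multiplicative factor $d_\star L_\star+\sqrt{d_\star}S_\star L_\star$. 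Combining the two bounds, and using $R^{\max}_\star=L_\star S_\star\le \sqrt{d_\star}S_\star L_\star$, gives the stated $\tilde O\big((d_\star L_\star+\sqrt{d_\star}S_\star L_\star+|\Ical|)(d_\star+\sqrt{d_\star}S_\star)L_\star\sqrt t+\sum_{i\in\Ical}L_iS_i\big)$. The main obstacle — and the only real departure from the stochastic‑context analysis — is this ghost‑potential step: extracting the right dependence on $p_\star$ (hence on $|\Ical|$ and $z_\star$, but \emph{not} on $\min_{i\in\Ical}z_i$) from the randomization, which is also where both the extra $L_\star$‑factors and the $\sum_i L_iS_i$ burn‑in enter relative to \pref{thm:adv_epoch_balancing_regret}.
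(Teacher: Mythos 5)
Your overall skeleton matches the paper's: the paper also proves this via a general epoch-balancing bound (\pref{thm:adv_general_regret}) obtained from (i) reward concentration, (ii) the fact that the non-triggered test gives $\sum_{i}[U_i(t)+\regretbound_i(n_i(t))]+c\sqrt{t\ln(\ln t/\delta)}\ge\sum_k B_{k,\star}$, (iii) a bound on $\sum_i\regretbound_i(n_i(t))$ via the play-count concentration $n_i(t)\lesssim p_i t$ (which is where the additive $\sum_i R^{\max}_i=\sum_i L_iS_i$ term comes from), and (iv) a bound on the ``ghost'' term $\sum_k(\mu^\star_k-B_{k,\star})$. Steps (i)--(iii) of your plan are essentially the paper's.

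The gap is in step (iv). You bound each ghost width by $p_\star^{-1}\EE[\text{played width}\mid\Fcal_k]$ and then apply Freedman plus the elliptical potential to the played sequence, which yields $\sum_k(\mu^\star_k-B_{k,\star})\lesssim p_\star^{-1}\bigl(\sqrt{z_\star t\ln(\cdot)}+R^{\max}_\star\sqrt{t\ln(\cdot)}\bigr)$, i.e.\ the factor $p_\star^{-1}$ sits \emph{outside} the square root. Your subsequent claim that $p_\star^{-1}$ stays at the level of $d_\star L_\star+\sqrt{d_\star}S_\star L_\star\approx\sqrt{z_\star}$ is false in general: by \eqref{eqn:prob_def}, $p_\star^{-1}=z_\star\sum_{j\in\Ical}z_j^{-1}\approx z_\star/\min_{j\in\Ical}z_j$, and in early epochs $\Ical$ contains low-complexity \emph{misspecified} learners (the theorem only asks $\star$ to be the smallest \emph{well-specified} index), so $\min_j z_j$ can be $O(1)$ and $p_\star^{-1}\approx z_\star\approx(d_\star+\sqrt{d_\star}S_\star)^2L_\star^2$. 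The doubling condition $2z_i\le z_{i+1}$ only gives $\sum_j z_j^{-1}=O(1)$, i.e.\ $p_\star^{-1}\lesssim z_\star$, not $\sqrt{z_\star}$. With that, your main term becomes of order $z_\star^{3/2}\sqrt{t}$, an extra factor $(d_\star+\sqrt{d_\star}S_\star)L_\star$ beyond the claimed bound. The paper avoids this by applying the randomization at the level of \emph{squared} clipped widths: its randomized elliptical potential lemma (\pref{lem:elliptical_random}), stated for the covariance that accumulates only the rounds in which $\star$ is actually played (so no off-policy updates are needed, unlike in your plan), gives $\sum_k\bigl(b_\star^2\wedge\|a_{k,\star}\|^2_{\Sigma_{k,\star}^{-1}}\bigr)\lesssim p_\star^{-1}(1+b_\star^2)\ln\det(\cdot)$, and only after Cauchy--Schwarz over the $t$ rounds does $p_\star$ enter, as $\sqrt{z_\star t/p_\star}\le z_\star\sqrt{t\sum_j z_j^{-1}}\lesssim z_\star\sqrt{t}$ -- exactly the claimed multiplicative factor. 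Your argument can be repaired by running your martingale-domination on the squared widths (or simply invoking \pref{lem:elliptical_random}); as written, the ghost-potential step does not establish the stated bound.
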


The proofs of \pref{thm:adv_epoch_balancing_regret_ls} and \pref{thm:adv_epoch_balancing_regret} are similar to the proof of \pref{thm:general_regbound} but requires a randomized version of the standard elliptical potential lemma that we prove in \pref{lem:elliptical_random}.

\subsection{Main Algorithm}
\label{sec:adv_mainalg}

\begin{algorithm}[t!]
\For{$s=1, \ldots , M$}{

 $%
 \mathrm{EpochBalancing}\left(\{s, s+1. \dots, M\} \right) $ in \pref{alg:adversarial_epoch_balancing}\\
}
\caption{Regret Bound Balancing and Elimination with Adversarial Contexts}
\label{alg:adversarial_model_selection}
\end{algorithm}

We now show how to obtain a robust model selection algorithm for adversarial contexts with the help of  the Epoch Balancing subroutine from the previous section. 
Since \pref{thm:adv_epoch_balancing_regret} guarantees that the regret of Epoch Balancing is controlled in each epoch, all that is left it to ensure that the number of epochs is small. When \pref{alg:adversarial_epoch_balancing} terminates, we know that one of the base learners must have been misspecified but we do not know which one. We here use the hierarchy of base learners: It is safe to remove the learner $i_{\min} = \min_{i \in \Ical} d_i$ with the smallest dimension as its model class is a subset of the model classes of other base learners. Thus, if there is a model class that fails to contain $\theta^\star$, this must also be the case for $i_{\min}$. 
Therefore, our main algorithm shown in \pref{alg:adversarial_model_selection} calls Epoch Balancing (\pref{alg:adversarial_epoch_balancing}) repeatedly and removes the smallest index from the active learner set each time. %

Note that once $d_i \geq d_\star$ for all $i \in \Ical = \{s, s+1, \dots, M\}$, Epoch balancing will not terminate with high probability because all remaining learners are well-specified and their bounds hold (see \pref{thm:adv_epoch_balancing_termination}). Therefore, there can only be $i_\star \leq M$ epochs where $d_{i_\star} = d_\star$ and the total regret $\regret(T)$ of \pref{alg:adversarial_model_selection} is just the sum of the regret in each epoch up to the total number of $T$ rounds.
We denote by $t^{(s)}(T)$ the total number of rounds in the first $s$ epochs after a total of $T$ rounds. Note that $t^{(s)}(T)$ are stopping times. The regret in the $s$-th epoch is referred to as $\regret^{(s)}( t^{(s)}(T) - t^{(s-1)}(T))$ where $t^{(s)}(T) - t^{(s-1)}(T)$ is the number of rounds in episode $s$.
Therefore, we can write the total regret as
\begin{align}\label{equation::epoch_decomposition_regret}
    \regret(T) = \sum_{s=1}^{M} \regret^{(s)}(t^{(s)}(T) - t^{(s-1)}(T)) ~. 
\end{align}
The regret incurred within each epoch can be bound using \pref{thm:adv_epoch_balancing_regret}, which yields the main result of this section:

\begin{theorem}[Model Selection for Adversarial Contexts in Stochastic Linear Bandits]\label{theorem::main_adversarial}

Assume that \pref{alg:adversarial_model_selection} is run with instances of OFUL as base learners that use different dimensions $d_i$ and norm bounds $L_i, S_i$ with $2 z_i \leq z_{i+1}$ (see Eq.~\eqref{eqn:prob_def}). All base learners use regularizer $\lambda = 1$. 
With probability at least $1-3(M+1)\delta$ the total regret of \pref{alg:adversarial_model_selection} is bounded for all rounds $T \in \NN$ as
\begin{align*}
     \regret(T) = \tilde{O}\left( \left(\sqrt{B}d_\star + \sqrt{B d_\star}S_\star  + \sqrt{B} M \right)\textcolor{DarkGreen}{(d_\star + \sqrt{d_\star}S_\star)\sqrt{  T} }\right)\ ,
\end{align*}
if base learners use a common expected reward range $R^{\max}_i = 1$. Here, $B$ are the number of base learners that use a misspecified model that cannot represent $\theta_\star$, If base learners use instead $R^{\max}_i = L_i S_i$, then the regret bound is
\begin{align*}
     \regret(T) = \tilde{O}\left( \left(\sqrt{B}d_\star L_\star + \sqrt{B d_\star}S_\star L_\star  + \sqrt{B} M \right)\textcolor{DarkGreen}{(d_\star + \sqrt{d_\star}S_\star)L_\star\sqrt{  T} }
     + B\sum_{i \in \Ical} L_i S_i
     \right)\ .
\end{align*}
\end{theorem}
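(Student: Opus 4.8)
\textbf{Proof plan for Theorem~\ref{theorem::main_adversarial}.}
The plan is to combine the epoch decomposition in~\eqref{equation::epoch_decomposition_regret} with the per-epoch regret bound in \pref{thm:adv_epoch_balancing_regret} (resp. \pref{thm:adv_epoch_balancing_regret_ls}), controlling the number of epochs and the way the rounds split across epochs. First I would argue that there are at most $i_\star \le M$ epochs: once \pref{alg:adversarial_model_selection} has removed all base learners with $d_i < d_\star$, the surviving active set $\Ical$ consists solely of well-specified learners whose elliptical confidence sets contain $\theta_\star$, so by \pref{thm:adv_epoch_balancing_termination} the current call to EpochBalancing does not terminate (on the good event), and the outer loop never advances past $s = i_\star$. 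Hence only the epochs $s = 1, \dots, i_\star$ actually contribute, and $i_\star - 1 \le B$ of them are "wasted" on eliminating misspecified learners while the last one runs to the horizon. A union bound over the at most $M$ epochs, each invoking the high-probability events of \pref{thm:adv_epoch_balancing_regret} and \pref{thm:adv_epoch_balancing_termination} (each costing $O(\delta)$ with a couple of constituent concentration events), together with the event $\Ecal$ that the hierarchy is correctly specified, yields the claimed $1 - 3(M+1)\delta$ probability.

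Next I would bound the regret within each epoch $s \le i_\star$. In epoch $s$ the active set is $\Ical^{(s)} = \{s, s+1, \dots, M\}$, which still contains the well-specified learner $\star$ (since $s \le i_\star$ means $d_\star$ is still active), so \pref{thm:adv_epoch_balancing_regret} applies with $|\Ical^{(s)}| \le M$ and the same $\star$, giving
\begin{align*}
    \regret^{(s)}(n_s) \le \tilde O\!\left( \bigl(d_\star + \sqrt{d_\star}S_\star + M\bigr)(d_\star + \sqrt{d_\star}S_\star)\sqrt{n_s} \right),
\end{align*}
where $n_s = t^{(s)}(T) - t^{(s-1)}(T)$ is the (random) length of epoch $s$. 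Summing over the at most $i_\star \le B+1$ relevant epochs and applying Cauchy–Schwarz to $\sum_s \sqrt{n_s} \le \sqrt{i_\star}\sqrt{\sum_s n_s} \le \sqrt{B+1}\,\sqrt{T}$ (using $\sum_s n_s \le T$) produces the factor $\sqrt{B}$ in front, giving exactly the stated bound. The $R^{\max}_i = L_i S_i$ case is identical but invokes \pref{thm:adv_epoch_balancing_regret_ls} instead, and the additive lower-order term $\sum_{i \in \Ical} L_i S_i$ incurred per epoch is summed over the $\le B+1$ epochs to yield the $B\sum_{i \in \Ical} L_i S_i$ term.

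One subtlety I would be careful about is that $t^{(s)}(T)$ are stopping times and the per-epoch bounds of \pref{thm:adv_epoch_balancing_regret}/\ref{thm:adv_epoch_balancing_regret_ls} are "anytime" (they hold for all rounds $t$ until termination simultaneously), so restricting them to the random epoch length $n_s$ is legitimate without further measure-theoretic work; the only care needed is that the high-probability events across epochs are unioned correctly, which is already accounted for in the $(M+1)$ factor. The main obstacle — though a mild one — is bookkeeping: making sure the definition of $\star$ used inside each epoch (the smallest active index whose model class contains $\theta_\star$) coincides across all epochs $s \le i_\star$ with the global $d_\star$, so that the green-highlighted single-learner rate is the same in every epoch and the sum telescopes cleanly into one $\sqrt{B}$-scaled term rather than a sum of heterogeneous rates; this follows because removing learners with index below $s$ never removes the learner realizing $d_\star$ as long as $s \le i_\star$.
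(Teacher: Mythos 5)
Your proposal is correct and follows essentially the same route as the paper's proof: the epoch decomposition of Eq.~\eqref{equation::epoch_decomposition_regret}, \pref{thm:adv_epoch_balancing_termination} to argue only the first $i_\star \le B+1$ epochs contribute, the per-epoch bounds of \pref{thm:adv_epoch_balancing_regret}/\pref{thm:adv_epoch_balancing_regret_ls} applied at the random epoch lengths (justified, as you note and as the paper notes in a footnote, by their anytime/arbitrary-context validity), Cauchy--Schwarz over epochs to produce the $\sqrt{B}$ factor, and a union bound over the well-specification event and the per-epoch events for the failure probability. No substantive differences.
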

\begin{proof}
First, we consider the event where all learners with $d_i \geq d_\star$ are well-specified in the sense that their elliptical confidence intervals contain $\theta_\star$ at all times. This happens with probability at least $1 - M \delta$ by \pref{lem:yasin}. Further, only consider outcomes where \pref{thm:adv_epoch_balancing_regret} and \pref{thm:adv_epoch_balancing_termination} hold in all epochs.\footnote{We note that both theorems hold for arbitrary sequences of contexts and therefore also when the $s$-th instance of Epoch Balancing is started after a random number of rounds $t^{(s-1)}(T)$.}
By a union bound, all these assumptions hold with probability at least $1 - 4M$. 
We now consider the decomposition in Eq.~\eqref{equation::epoch_decomposition_regret} and bound
\begin{align*}
\regret(T) &= \sum_{s=1}^{M} \regret^{(s)}(t^{(s)}(T) - t^{(s-1)}(T))
\overset{(i)}{=} \sum_{s=1}^{i_\star} \regret^{(s)}(t^{(s)}(T) - t^{(s-1)}(T))
\\
& \overset{(ii)}{\leq} \sum_{s=1}^{i_\star} \left[C^{(s)} \sqrt{t^{(s)}(T) - t^{(s-1)}(T)} + 8.12\sum_{i \in \Ical^{(s)}} R^{\max}_i \ln \frac{5.2 M \ln (2T)}{\delta}\right]
\\
& \leq  \max_{s \in [i_\star]} C^{(s)} \sqrt{i_\star \sum_{s=1}^{i_\star} (t^{(s)}(T) - t^{(s-1)}(T))} + 8.12 i_\star \sum_{i \in \Ical^{(s)}} R^{\max}_i \ln \frac{5.2 M \ln (2T)}{\delta}
\\
& =  \max_{s \in [i_\star]} C^{(s)} \sqrt{i_\star  T} + 8.12 i_\star \sum_{i \in \Ical^{(s)}} R^{\max}_i \ln \frac{5.2 M \ln (2T)}{\delta}
\end{align*}
where $(i)$ follows from \pref{thm:adv_epoch_balancing_termination} and $(ii)$ from \pref{thm:adv_epoch_balancing_regret} with epoch-dependent factor $C^{(s)} \leq \tilde O\left(\left(d_\star + \sqrt{d_\star}S_\star  + M \right)(d_\star + \sqrt{d_\star}S_\star)\right)$ or  \pref{thm:adv_epoch_balancing_regret_ls} with epoch-dependent factor $C^{(s)} \leq \tilde O\left(\left(d_\star L_\star + \sqrt{d_\star}S_\star L_\star + M \right)(d_\star + \sqrt{d_\star}S_\star)\right) L_\star$
\end{proof}

\section{Conclusions}
We have described and analyzed a simple and general balancing and elimination technique to perform model selection in stochastic bandit and reinforcement learning tasks. We have instantiated our general principle to a number of relevant model selection scenarios with nested model classes, ranging from contextual linear bandits to linear MPDs, from mis-specified linear bandits and MDPs to hyperparameter tuning of the contextual bandit algorithm OFUL.
In all these cases, we show that the total regret of our master algorithm is bounded by the best valid candidate regret bound times a multiplicative factor. Notably, this factor becomes negligible in the presence of gaps in the regret bound guarantees across the base learners, so that in such cases we essentially recover the regret bound of the best base learner in hindsight.

Our work overcomes the limitations of previous approaches by combining ideas of a statistical test for arm elimination with regret balancing for exploration. We are able to obtain gap-dependent bounds, and go beyond the $\sqrt{MT}$ dependence of corralling methods based on adversarial master algorithms. 
The flexibility of our approach is also witnessed by our ability to extend the linear bandit analysis to the case of adversarial contexts by means of a randomized variant of our general balancing and elimination technique.

\bibliographystyle{abbrvnat}
\bibliography{manual}

\onecolumn
 \appendix
\section{Proofs for Setting with Stochastic Contexts} 
\label{app:stoch_proofs}
\begin{lemma}
\label{lem:mustar_conc}
There is an absolute constant $c$ such that the event \begin{align}
    \Gcal = \left\{ \forall i \in [M], \,\,\forall t \in \NN \colon |n_i(t) \mu^\star - U_i(t) - \regret_{i}(t)| \leq c\sqrt{\ln \frac{M \ln n_i(t)}{\delta} n_i(t)}\right\}
\end{align}
has probability at least $1 - \delta$
\end{lemma}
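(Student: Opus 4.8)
The plan is to recognise the quantity inside $\Gcal$ as a sum of bounded martingale differences over the rounds in which learner $i$ is played, and then apply a \emph{time-uniform} martingale concentration inequality whose doubly-logarithmic tail produces the $\ln\!\big(M\ln n_i(t)/\delta\big)$ factor. First I would unfold the definitions: writing $\mu^\star_k=\max_{\pi'\in\Pi}\EE[r_k\mid \pi'(x_k),x_k]$, $a_k=\pi_k(x_k)$, and using $\mu^\star=\EE_x[\mu^\star_t]$ (expectation over a fresh context) together with $U_i(t)=\sum_{k\in T_i(t)}r_k$ and $\regret_i(t)=\sum_{k\in T_i(t)}\big(\mu^\star_k-\EE[r_k\mid a_k,x_k]\big)$,
\[
 n_i(t)\mu^\star-U_i(t)-\regret_i(t)=\sum_{k\in T_i(t)}Z_k,\qquad Z_k:=\big(\mu^\star-\mu^\star_k\big)+\big(\EE[r_k\mid a_k,x_k]-r_k\big).
\]
Let $\mathcal{F}_{k-1}$ collect all randomness before round $k$. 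In \pref{alg:balanced_elim_general} the index $i_k$ is a deterministic function of the past play counts, hence $\mathcal{F}_{k-1}$-measurable and chosen before $x_k$ is drawn; combined with the i.i.d.\ context assumption this gives $\EE[\mu^\star_k\mid\mathcal{F}_{k-1}]=\mu^\star$, while the reward model gives $\EE[r_k\mid\mathcal{F}_{k-1},a_k,x_k]=\EE[r_k\mid a_k,x_k]$. Hence $\EE[Z_k\mid\mathcal{F}_{k-1}]=0$, and $|Z_k|\le 2$ since rewards lie in $[0,1]$. Because $(\mathbbm{1}[i_k=i])_k$ is a predictable $\{0,1\}$ sequence, a standard optional-sampling argument shows that, indexing by the play count, $\widetilde M^{(i)}_m:=\sum_{j=1}^m Z_{\sigma^{(i)}_j}$ (where $\sigma^{(i)}_1<\sigma^{(i)}_2<\cdots$ are the rounds learner $i$ is played) is a martingale with increments bounded by $2$; and since the sum above does not change between consecutive plays of $i$, it equals $\widetilde M^{(i)}_{n_i(t)}$.

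Next I would prove a time-uniform deviation bound for each $\widetilde M^{(i)}$. Peeling over dyadic blocks $2^\ell\le m<2^{\ell+1}$, applying Azuma's maximal inequality on the $\ell$-th block at confidence level $\delta_\ell=\Theta\!\big(\delta/(M(\ell+1)^2)\big)$, and union-bounding over scales $\ell$ and learners $i\in[M]$ (the $\delta_\ell$ chosen so the total failure probability is at most $\delta$) yields a universal constant $c$ such that, with probability at least $1-\delta$, simultaneously for all $i$ and all $m\ge 1$,
\[
 \big|\widetilde M^{(i)}_m\big|\le c\sqrt{m\,\ln\tfrac{M\ln m}{\delta}}\,.
\]
The $\ln\ln m$ here is exactly what the peeling produces, since on the $\ell$-th block $\ln(1/\delta_\ell)\lesssim\ln(M/\delta)+\ln\ln m$; for the finitely many small $m$ where $\ln m$ drops below a universal constant, the bound is immediate from $|\widetilde M^{(i)}_m|\le 2m$ (one adopts the convention that the logarithm in the statement is at least such a constant). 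One may alternatively cite an off-the-shelf ``stitched'' / law-of-iterated-logarithm martingale inequality of this shape. Instantiating the bound at the random index $m=n_i(t)$ gives $|n_i(t)\mu^\star-U_i(t)-\regret_i(t)|\le c\sqrt{n_i(t)\,\ln\frac{M\ln n_i(t)}{\delta}}$ for all $i$ and $t$ simultaneously, which is precisely the event $\Gcal$.

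The algebraic decomposition and the martingale verification are routine; the real work is the second step, namely a deviation bound valid simultaneously for \emph{all} values of the \emph{random} play count $n_i(t)$ at the price of only a doubly-logarithmic inflation (rather than the $\ln t$ one a crude union bound over $t$ would give). It is worth stressing that this argument relies on the contexts being i.i.d.\ and on the master not observing $x_k$ before committing to $i_k$ --- precisely the property that fails for adversarial contexts, which is why a different misspecification test is needed in \pref{sec:adv}.
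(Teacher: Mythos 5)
Your proof is correct and follows essentially the same route as the paper: decompose $n_i(t)\mu^\star - U_i(t) - \regret_i(t)$ into bounded martingale differences over the rounds in which learner $i$ is played, apply a time-uniform (stitched/LIL-type) concentration bound whose deviation scales with the play count $n_i(t)$ rather than $t$, and union bound over the $M$ learners, handling small $n_i(t)$ by enlarging the absolute constant. The only cosmetic differences are that the paper splits the increment into two separate martingale-difference sequences (the context term and the reward-noise term, with slightly different filtrations) and directly invokes the stitched boundary of Howard et al.\ (2018) instead of carrying out the dyadic peeling by hand.
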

\begin{proof}
Consider a fixed $i \in [M]$ and write the LHS in the event definition as
\begin{align}
    &n_i(t) \mu^\star - U_i(t) - \regret_{i}(t)\\
    &= \sum_{k \in T_i(t)} \left(\mu^\star - r_k - \max_{\pi' \in \Pi} \EE[ r_k | \pi', x_k]  + \EE[ r_k | \pi_k, x_k]\right)\notag\\
    &= \sum_{k \in T_i(t)} \left(\mu^\star - \max_{\pi' \in \Pi} \EE[ r_k | \pi', x_k]\right)  + 
   \sum_{k \in T_i(t)} \left( \EE[ r_k | \pi_k, x_k]  - r_k\right).
   \label{eq:hp_proof1}
\end{align}
Consider the first sum and let $\Fcal_t$ be the sigma-field induced by all variables up to round $t$, i.e., $(\Ical_k, x_k, i_k, a_k, r_k)_{k \leq t}$. Note that $i_{t+1}$, the learner chosen at $t+1$ is $\Fcal_t$-measurable. Hence, $X_k = \one\{i_k = i\}(\mu^\star - \max_{\pi' \in \Pi} \EE[ r_k | \pi', x_k]) \in [-1, +1]$ is a martingale-difference sequence w.r.t. $\Fcal_k$. We will now apply a Hoeffding-style uniform concentration bound from \citet{howard2018uniform}.
Using the terminology and definition in this article, by case Hoeffding~I in Table~4, the process  $S_k = \sum_{j=1}^k X_k$ is sub-$\psi_N$ with variance process $V_k = \sum_{j=1}^k \one\{i_j = i\}/ 4$.
Thus by using the boundary choice in Equation~(11) of \citet{howard2018uniform}, we get
\begin{align*}
    S_k & \leq 1.7 \sqrt{V_k \left( \ln \ln(2 V_k) + 0.72 \ln(5.2 / \delta) \right) }\\
    & = 0.85\sqrt{n_i(k)\left( \ln \ln(n_i(k)/2) + 0.72 \ln(5.2 / \delta)\right)}
\end{align*}
for all $k$ where $V_k \geq 1$ with probability at least $1 - \delta$.
Applying the same argument to $-S_k$ gives that
\begin{align*}
    \left| \sum_{k \in T_i(t)} \left(\mu^\star - \max_{\pi' \in \Pi} \EE[ r_k | \pi', x_k]\right)\right|
    \leq 3 \vee 0.85\sqrt{n_i(k)\left( \ln \ln(n_i(k)/2) + 0.72 \ln(10.4 / \delta)\right)}
\end{align*}
holds with probability at least $1 - \delta$ for all $t$.

Consider now the second term in \eqref{eq:hp_proof1} and let $\Fcal_t$ now be the sigma-field induced by all variables up to the reward at round $t+1$, i.e., $\sigma((\Ical_k, x_k, i_k, a_k, r_k)_{k \leq t}, \Ical_{t+1}, x_{t+1}, i_{t+1}, a_{t+1})$. Then $X_k = \one\{i_k = i\}(\EE[ r_k | \pi_k, x_k]  - r_k) \in [-1, +1]$ is a martingale-difference sequence w.r.t. $\Fcal_k$ and we can apply the same concentration argument as for the first term to get  with probability at least $1 - \delta$ for all $t$
\begin{align*}
    \left| \sum_{k \in T_i(t)} \left( \EE[ r_k | \pi_k, x_k]  - r_k\right) \right|
    \leq 3 \vee 0.85\sqrt{n_i(k)\left( \ln \ln(n_i(k)/2) + 0.72 \ln(10.4 / \delta)\right)}~.
\end{align*}
We now take a union bound over both concentration results and $i \in [M]$ and rebind $\delta \rightarrow \delta / M$. Then picking the absolute constant $c$ sufficiently large gives the desired statement.
\end{proof}

\begin{lemma}[Sufficient Condition for Elimination]
\label{lem:suff_cond_elimination}
If the psuedo-regret of learner $i$ exceeds for any $\star \in \wellspidx$ the following bound in round $t$,
\begin{align}
    \regret_i(t) >&~
    \regretbound_i(n_i(t))
    + \frac{n_i(t)}{n_\star(t)} \regretbound_\star(n_\star(t))
  + 2c \left( 1 + \sqrt{\frac{n_i(t)}{n_\star(t)}}\right) \sqrt{n_i(t) \ln \frac{M \ln t}{\delta}} 
   \label{eqn:suff_cond_elimination_gen}
\end{align}
then learner $i$ fails the misspecification test of \pref{alg:balanced_elim_general} in event $\Gcal$ and is eliminated.
\end{lemma}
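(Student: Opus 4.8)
The plan is to prove this as a mild variant of \pref{lem:regbound_nonelim}: one reads the chain of inequalities in the proof of that lemma ``in reverse'' and stops one step earlier, \emph{before} invoking the balancing condition of \pref{lem:balancing} that replaces $\regretbound_i(n_i(t))$ by $\regretbound_\star(n_\star(t))+1$, so that $\regretbound_i(n_i(t))$ is kept as is.

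First I would record two facts that hold in event $\Gcal$. (i) As argued in \pref{sec:balancingandelimination}, no well-specified learner is ever eliminated in $\Gcal$, so $\star \in \Ical_t$; consequently the maximum on the right-hand side of the misspecification test of \pref{alg:balanced_elim_general} is at least the value attained at $j=\star$, namely $\frac{U_\star(t)}{n_\star(t)} - c\sqrt{\ln(M\ln n_\star(t)/\delta)/n_\star(t)}$. (ii) Applying the defining inequality of $\Gcal$ separately to learner $i$ and to $\star$ yields $\frac{U_i(t)}{n_i(t)} \le \mu^\star - \frac{\regret_i(t)}{n_i(t)} + c\sqrt{\ln(M\ln n_i(t)/\delta)/n_i(t)}$ and $\frac{U_\star(t)}{n_\star(t)} \ge \mu^\star - \frac{\regret_\star(t)}{n_\star(t)} - c\sqrt{\ln(M\ln n_\star(t)/\delta)/n_\star(t)}$.

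Next I would plug these into the test: upper-bounding its left-hand side via (ii) and lower-bounding its right-hand side via (i) together with (ii), the test is guaranteed to trigger provided
\[
\mu^\star - \frac{\regret_i(t)}{n_i(t)} + \frac{\regretbound_i(n_i(t))}{n_i(t)} + 2c\sqrt{\frac{\ln(M\ln n_i(t)/\delta)}{n_i(t)}} \;<\; \mu^\star - \frac{\regret_\star(t)}{n_\star(t)} - 2c\sqrt{\frac{\ln(M\ln n_\star(t)/\delta)}{n_\star(t)}}.
\]
Cancelling $\mu^\star$, multiplying by $n_i(t)$, using $\ln n_i(t),\ln n_\star(t)\le \ln t$ to merge the two square-root terms into $2c\bigl(1+\sqrt{n_i(t)/n_\star(t)}\bigr)\sqrt{n_i(t)\ln(M\ln t/\delta)}$, and using $\regret_\star(t)\le\regretbound_\star(n_\star(t))$ since $\star$ is well-specified, this inequality is seen to be implied by exactly the hypothesis \eqref{eqn:suff_cond_elimination_gen}. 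Hence if $i$ is still active at round $t$ the test triggers and $i\notin\Ical_{t+1}$; if $i$ had already been eliminated at an earlier round, the conclusion holds trivially.

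The computation is routine and there is no genuine obstacle; the one point needing care is step (i). It is precisely the already-established fact that well-specified learners survive in $\Gcal$ --- which rests on \pref{lem:mustar_conc} guaranteeing that $\Gcal$ has probability at least $1-\delta$ --- that lets us lower-bound the $\max_{j\in\Ical_t}$ appearing in the test by the confidence quantity of the particular learner $\star$, so that the contradiction the test detects is a contradiction with a \emph{valid} lower bound on $\mu^\star$ and therefore genuinely certifies the misspecification of $i$.
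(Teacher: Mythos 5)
Your proposal is correct and follows essentially the same argument as the paper's proof: both apply the defining inequalities of $\Gcal$ to learner $i$ and to $\star$, use well-specification to replace $\regret_\star(t)$ by $\regretbound_\star(n_\star(t))$ and $\ln n_i(t),\ln n_\star(t)\le\ln t$, and conclude that the hypothesis forces the test inequality to hold with $j=\star$ (which stays active under $\Gcal$), so the test triggers. The only difference is presentational: you run the chain of inequalities from the test backwards to the hypothesis, while the paper runs it forwards, and you make the "$\star\in\Ical_t$" step explicit where the paper leaves it implicit.
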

\begin{proof}
After dividing \pref{eqn:suff_cond_elimination_gen} by $n_i(t)$, this condition implies in event  $\Gcal_\star$ 
\begin{align*}
    \frac{\regret_i(t)}{n_i(t)} >&~
    \frac{\regretbound_i(n_i(t))}{n_i(t)}
    + \frac{\regret_\star(t)}{n_\star(t)}
    + 2c \sqrt{\frac{\ln(M \ln t / \delta)}{n_i(t)}} + 2c \sqrt{\frac{\ln(M \ln t / \delta)}{n_\star(t)}}
\end{align*}
and by $\Gcal$, this implies
\begin{align*}
    \mu_\star - \frac{U_i(t)}{n_i(t)} >~
    &~\frac{\regretbound_i(n_i(t))}{n_i(t)}
    + \mu_\star - \frac{U_\star(t)}{n_\star(t)}
    + c \sqrt{\frac{\ln(M \ln t / \delta)}{n_i(t)}} + c \sqrt{\frac{\ln(M \ln t / \delta)}{n_\star(t)}}~.
\end{align*}
Rearranging terms yields
\begin{align*}
      \frac{U_i(t)}{n_i(t)} &+ \frac{\regretbound_i(n_i(t))}{n_i(t)} 
      + c \sqrt{\frac{\ln(M \ln t / \delta)}{n_i(t)}}
      < 
     \frac{U_\star(t)}{n_\star(t)} - c \sqrt{\frac{\ln(M \ln t / \delta)}{n_\star(t)}}~.
\end{align*}
Hence, since $t > n_i(t)$ and $t > n_\star(t)$, the misspecification test in Algorithm~\ref{alg:balanced_elim_general} fails.
\end{proof}

\subsection{Special Case with $T^{\beta}$ Candidate Regret Bounds}

We here provide the proof of our gap-independent result which we restate here for convenience:
\polyregthm*
\begin{proof}
We start with the general regret bound from \pref{thm:general_regbound} given by
\begin{align}\label{eqn:general_regret_bound_app}
    \sum_{i=1}^M \regretbound_{\star}(n_{\star}(t_i))%
   + \sum_{i \in \misspidx} \frac{n_i(t_i)}{n_{\star}(t_i)} \regretbound_{\star}(n_{\star}(t_i)) + 2M
    + 
    2c \sum_{i \in \misspidx}  \left(1 + \sqrt{\frac{n_i(t_i)}{n_{\star}(t_i)}} \right)
    \sqrt{n_i(t_i) \ln\frac{M \ln T}{\delta}}~,
\end{align}
and bound the terms individually. We begin with
\begin{align*}
    \sum_{i=1}^M \regretbound_{\star}(n_{\star}(t_i)) + 2M
    &\leq M \regretbound_{\star}(T) + 2M 
    \leq M d_\star C T^{\beta} + 2M,
\end{align*}
where we only used the monotonicity of regret bounds and the definition of $\regretbound_\star$. We continue with the first part of the last term which we control as follows
\begin{align*}
    2c \sum_{i \in \misspidx} 
    \sqrt{n_i(t_i) \ln\frac{M \ln T}{\delta}}
    &\leq 2 c \sqrt{B \ln\frac{M\ln T}{\delta} \sum_{i \in \misspidx} n_i(t_i)} \leq 2 c \sqrt{B T \ln\frac{M \ln T}{\delta}}
\end{align*}
where we first applied Cauchy-Schwarz inequality and then used the fact that the total number of rounds played by all base learners is at most $T$. Similarly, we can bound the other part of the final term in \eqref{eqn:general_regret_bound_app} as
\begin{align*}
    2c \sum_{i \in \misspidx} \sqrt{\frac{n_i(t_i)}{n_{\star}(t_i)}}
    \sqrt{n_i(t_i) \ln\frac{M \ln T}{\delta}}
    &\leq 2 c \sqrt{\sum_{i \in \misspidx} \frac{n_i(t_i)}{n_{\star}(t_i)}} \sqrt{T \ln\frac{M \ln T}{\delta}}\\
    &\leq 
    2 \sqrt{2} c d_\star^{\frac{1}{2\beta}} \sqrt{B T \ln \frac{M \ln T}{\delta}}, 
\end{align*}
where the final step follows from \pref{lem:general_ratio_bound} with
\begin{align}
    \sum_{i \in \misspidx}  
 \frac{n_i(t_i)}{n_{\star}(t_i)} 
 \leq 2\sum_{i \in \misspidx}  \left(1 \vee \frac{d_\star^{1 / \beta}}{d_i^{1 / \beta}}  \right)
 \leq 2 B d_\star^{1 / \beta}~.
 \label{eqn:pull_ratio_proof1}
\end{align}
It only remains to bound the second term \eqref{eqn:general_regret_bound_app}. Here again we make use of the pull-ratio bound from \eqref{eqn:pull_ratio_proof1} to bound
\begin{align*}
    &\sum_{i \in \misspidx} \frac{n_i(t_i)}{n_{\star}(t_i)} \regretbound_{\star}(n_{\star}(t_i))
    =  C d_\star \sum_{i \in \misspidx} \left(\frac{n_i(t_i)}{n_\star(t_i)}\right)^{1 - \beta} n_i(t_i)^{\beta}\\
    & \leq 
    C d_\star \left( \sum_{i \in \misspidx} \frac{n_i(t_i)}{n_\star(t_i)} \right)^{1 - \beta}
    \left( \sum_{i \in \misspidx} n_i(t_i) \right)^{\beta}\\
    &  
    \leq C d_\star \left(2 B d_\star^{1 / \beta} \right)^{1 - \beta}
    T^{\beta} 
    \leq  
    2 C B^{1- \beta}  d_\star^{1 / \beta} 
    T^{\beta},
\end{align*}
where the first inequality follows from H\"older's inequality. Combining all bounds for the individual terms yields the desired statement.
\end{proof}

Below, we prove technical results for the slightly more general candidate regret bounds that can have different exponents $\beta$. Specifically, we consider candidate regret bounds of the form
\begin{align}
    \regretbound_i(n) = n \wedge C d_i n^{\beta_i},
    \label{eqn:general_poly_candidate}
\end{align}
where $\beta_i \in (0, 1]$, $d_i \geq 1$ and $C$ is a term that does not depend on $i$ or $n$.

\begin{lemma}[Play ratio bound]\label{lem:general_ratio_bound}
If \pref{alg:balanced_elim_general} is used with candidate regret bounds of the form in Equation~\eqref{eqn:general_poly_candidate}, then
\begin{align*}
    \frac{n_i(t)}{n_j(t)} \leq \begin{cases}
    \left(2\frac{d_j}{d_i}\right)^{\frac{1}{\beta_i}}  n_j(t)^{\frac{\beta_j}{\beta_i} - 1}
    & \textrm{if } n_i(t) \geq (d_i C)^{\frac{1}{1 - \beta}}\\
    2 &  \textrm{if } n_i(t) \leq (d_i C)^{\frac{1}{1 - \beta}}
    \end{cases}
\end{align*}
holds for all $t$ and active learners $i,j \in \Ical_t$ that have been played at least once.
\end{lemma}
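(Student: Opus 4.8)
The plan is to read off the result from the balancing guarantee \pref{lem:balancing} and to split on which of the two terms in the minimum $\regretbound_i(n) = n \wedge C d_i n^{\beta_i}$ is active at $n = n_i(t)$. First I would record a preliminary observation that turns the additive slack in balancing into a multiplicative one: since $j$ is active and has been played at least once we have $n_j(t) \geq 1$, and by monotonicity of the candidate bound together with $\regretbound_j(1) = 1 \wedge C d_j = 1$ (using $C \geq 1$ and $d_j \geq 1$), this gives $\regretbound_j(n_j(t)) \geq 1$. Hence \pref{lem:balancing} yields $\regretbound_i(n_i(t)) \leq \regretbound_j(n_j(t)) + 1 \leq 2\,\regretbound_j(n_j(t))$.

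Next, in the case $n_i(t) \geq (d_i C)^{1/(1-\beta_i)}$ --- which is exactly the crossover point where the linear term $n$ overtakes the polynomial term $C d_i n^{\beta_i}$, so that $n_i(t) \geq C d_i n_i(t)^{\beta_i}$ --- the minimum defining $\regretbound_i(n_i(t))$ is attained by the polynomial, i.e. $\regretbound_i(n_i(t)) = C d_i n_i(t)^{\beta_i}$. Combining with the inequality above and the trivial upper bound $\regretbound_j(n_j(t)) \leq C d_j n_j(t)^{\beta_j}$ gives $C d_i n_i(t)^{\beta_i} \leq 2 C d_j n_j(t)^{\beta_j}$; isolating $n_i(t)$ and dividing by $n_j(t)$ produces the first branch $\tfrac{n_i(t)}{n_j(t)} \leq (2 d_j / d_i)^{1/\beta_i}\, n_j(t)^{\beta_j/\beta_i - 1}$. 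In the complementary case $n_i(t) \leq (d_i C)^{1/(1-\beta_i)}$ the linear term is the active one, $\regretbound_i(n_i(t)) = n_i(t)$, and substituting $\regretbound_j(n_j(t)) \leq n_j(t)$ into the same inequality gives $n_i(t) \leq 2 n_j(t)$, i.e. the second branch. The case $n_i(t) = 0$ is vacuous since then $\regretbound_i(0)=0$ and the left-hand ratio is $0$.

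I do not expect a genuine obstacle here: the whole argument is a two-line consequence of \pref{lem:balancing}. The only points that require care are (i) justifying $\regretbound_j(n_j(t)) \geq 1$ so that the $+1$ in the balancing inequality can be absorbed into a factor of $2$ --- this is where the hypotheses $C \geq 1$, $d_j \geq 1$, and "both learners played at least once" (so no elimination occurred before every learner was pulled once) are used --- and (ii) noticing that the stated threshold $(d_i C)^{1/(1-\beta_i)}$ is precisely where the two terms of the minimum are equal, so that the case split on $n_i(t)$ pins down which closed form $\regretbound_i(n_i(t))$ takes.
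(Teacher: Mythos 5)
Your proof is correct and follows essentially the same route as the paper's: invoke the balancing guarantee, absorb the additive $+1$ into a factor of $2$ using $\regretbound_j(n_j(t)) \geq 1$, and split on whether $\regretbound_i$ is in the linear or the polynomial regime at $n_i(t)$. The only difference is that you spell out the justification for $\regretbound_j(n_j(t)) \geq 1$ (every learner played at least once, $C, d_j \geq 1$), which the paper leaves implicit here.
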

\begin{proof}
By \pref{lem:balancing}, the regret bound of $i$ and $j$ are balanced at $t$, which means that
\begin{align*}
    \regretbound_i(n_i(t)) \leq \regretbound_{j}(n_{j}(t)) + 1 \leq 2\regretbound_{j}(n_{j}(t))~.
\end{align*}
When $n_i(t) \leq (d_i C)^{\frac{1}{1 - \beta}}$ the regret bound $\regretbound_i$ is still in the linear regime. The balancing condition gives in this case 
$n_i(t) \leq 2\regretbound_{j}(n_{j}(t))  \leq 2 n_j(t) $ and hence
$\frac{n_i(t)}{n_j(t)} \leq 2$.
Consider now the case where $\regretbound_i$ is in the $n_i(t)^{\beta_i}$ regime. Then the balancing condition implies
\begin{align*}
    d_i C n_i(t)^{\beta_i} \leq 2d_j C n_j(t)^{\beta_j} .
\end{align*}
Reordering terms yields
\begin{align*}
    \left(\frac{n_i(t)}{n_j(t)}\right)^{\beta_i} 
    \leq 2\frac{d_j}{d_i}  n_j(t)^{\beta_j - \beta_i}~.
\end{align*}
\end{proof}

\paragraph{Gap-dependent guarantee:} 
We now provide the full proof for our main gap-dependent guarantee which we restate her for convenience: 

\polyreggapthm*

\begin{proof}
Just as for the gap-independent guarantee in \pref{thm:poly_reg}, we start with the general regret bound from \pref{thm:general_regbound} given by
\begin{align}\label{eqn:general_regret_bound_app2}
    \sum_{i=1}^M \regretbound_{\star}(n_{\star}(t_i))%
   + \sum_{i \in \misspidx} \frac{n_i(t_i)}{n_{\star}(t_i)} \regretbound_{\star}(n_{\star}(t_i)) + 2M
    + 
    2c \sum_{i \in \misspidx}  \left(1 + \sqrt{\frac{n_i(t_i)}{n_{\star}(t_i)}} \right)
    \sqrt{n_i(t_i) \ln\frac{M \ln T}{\delta}}~,
\end{align}
and bound the terms individually. We begin with
\begin{align*}
    \sum_{i=1}^M \regretbound_{\star}(n_{\star}(t_i)) + 2M
    &\leq M \regretbound_{\star}(T) + 2M 
    \leq M d_\star C T^{\beta} + 2M,
\end{align*}
where we only used the monotonicity of regret bounds and the definition of $\regretbound_\star$. 
All remaining terms only consider misspcified learners $i \in \misspidx$. In the following, we bound the contribution from each such learner individually. We have
\begin{align}
\label{eqn:reg_mis_proof11}
    &\frac{n_i(t_i)}{n_{\star}(t_i)} \regretbound_{\star}(n_{\star}(t_i)) + \left(1 + \sqrt{\frac{n_i(t_i)}{n_{\star}(t_i)}} \right)
    \sqrt{n_i(t_i) \ln\frac{M \ln T}{\delta}}\notag\\
    &\leq C d_\star \left(\frac{n_i(t_i)}{n_{\star}(t_i)}\right)^{1 - \beta} n_i(t_i)^\beta + \left(1 + \sqrt{\frac{n_i(t_i)}{n_{\star}(t_i)}} \right)
    \sqrt{n_i(t_i) \ln\frac{M \ln T}{\delta}}\notag\\
    & \leq C d_\star Z^{1 - \beta} n_i(t_i)^\beta + \left(1 + \sqrt{Z} \right)
    \sqrt{n_i(t_i) \ln\frac{M \ln T}{\delta}}\notag\\
    & \leq C d_\star Z^{1 - \beta} n_i(t_i)^\beta + 2
    \sqrt{Z n_i(t_i) \ln\frac{M \ln T}{\delta}},
\end{align}
where $Z = 2 \vee \left(2\frac{d_\star}{d_i}\right)^{\frac{1}{\beta}}$. Further, 
using the gap-assumption, \pref{lem:elimination_gen_obs}, which is proved below, yields an upper-bound on the number of times the learner can be played
\begin{align*}
        n_i(T) 
   & \leq
    \left[ \frac{2C d_i}{\Delta_i} \left( 1 + 2 Z\right)
    \right]^{\frac{1}{\alpha - \beta}} \vee \left[\frac{4c}{\Delta_i} \left( 1 + \sqrt{Z}
    \right) \sqrt{\ln \frac{M \ln T}{\delta}}\right]^{\frac{1}{\alpha - 1/2}}
\\    &\leq
    \left[ \frac{5C d_i}{\Delta_i}  Z
    \right]^{\frac{1}{\alpha - \beta}} \vee \left[\frac{8c}{\Delta_i} \sqrt{Z}
    \sqrt{\ln \frac{M \ln T}{\delta}}\right]^{\frac{1}{\alpha - 1/2}}~.
\end{align*}
We consider now two cases. 
\paragraph{Case I: $\beta \geq 1/2$.} Then $n_i(T) \leq
    \left[ \frac{5 C d_i}{\Delta_i}  Z \right]^{\frac{1}{\alpha - \beta}}$
    and \eqref{eqn:reg_mis_proof11} can be bounded as
\begin{align*}
   & C d_\star Z^{1 - \beta} n_i(t_i)^\beta + 2
    \sqrt{Zn_i(t_i) \ln\frac{M \ln T}{\delta}}
    \leq 3C \sqrt{ \ln\frac{M \ln T}{\delta}}  d_\star Z^{1 - \beta} n_i(t_i)^\beta\\
    & \leq 3C \sqrt{ \ln\frac{M \ln T}{\delta}}  d_\star Z^{1 - \beta} \left[ \frac{5C d_i}{\Delta_i}  Z \right]^{\frac{\beta}{\alpha - \beta}}~.
\end{align*}
When $Z = 2$, then this expression is bounded from above as $6C \sqrt{ \ln\frac{M \ln T}{\delta}}  d_\star  \left[ \frac{10C d_i}{\Delta_i} \right]^{\frac{\beta}{\alpha - \beta}}$.
When $Z > 2$, then we bound this quantity instead as
\begin{align*}
   &3C \sqrt{ \ln\frac{M \ln T}{\delta}}  d_\star (2d_\star)^{\frac{1 - \beta}{\beta}} \left[ \frac{5C d_i}{\Delta_i}  \left(\frac{2d_\star}{d_i}\right)^{1 / \beta}\right]^{\frac{\beta}{\alpha - \beta}}
    \leq
    6C \sqrt{ \ln\frac{M \ln T}{\delta}}  d_\star^{\frac{1}{\beta} + \frac{1}{\alpha - \beta}} \left[ \frac{20C}{\Delta_i} \right]^{\frac{\beta}{\alpha - \beta}}.
\end{align*}
Hence, the total regret is bounded is case as
\begin{align*}
    \regret(T) = O\left( M d_\star C T^{\beta} + \sum_{i \in \misspidx} 
    C \sqrt{ \ln\frac{M \ln T}{\delta}}  \left( d_\star^{\frac{1}{\beta} + \frac{1}{\alpha - \beta}} + d_\star d_i^{\frac{\beta}{\alpha - \beta}}
    \right) \left[ \frac{20C}{\Delta_i} \right]^{\frac{\beta}{\alpha - \beta}}
    \right).
\end{align*}
\paragraph{Case II: $\beta < 1/2$.}
To simplify the final bound, we here use the somewhat crude bound on $n_i(T)$:
\begin{align*}
        n_i(T) 
   & \leq \left[ \frac{5C d_i}{\Delta_i}  Z\sqrt{\ln \frac{M \ln T}{\delta}}
    \right]^{\frac{1}{\alpha - 1/2}}
\end{align*}
This allows us to upper-bound \eqref{eqn:reg_mis_proof11} by
\begin{align*}
3Cd_\star Z^{1 - \beta}
    \sqrt{n_i(t_i) \ln\frac{M \ln T}{\delta}}
    \leq 
    3Cd_\star Z^{1 - \beta}
    \sqrt{\ln\frac{M \ln T}{\delta}}\left[ \frac{5C d_i}{\Delta_i}  Z\sqrt{\ln \frac{M \ln T}{\delta}}
    \right]^{\frac{1/2}{\alpha - 1/2}}.
\end{align*}
When $Z = 2$, this expression is bounded from above by
$6Cd_\star 
    \left[ \frac{10C d_i}{\Delta_i} \ln \frac{M \ln T}{\delta}
    \right]^{\frac{1/2}{\alpha - 1/2}}$.
When $Z > 2$, then we bound this quantity instead as
\begin{align*}
    &3Cd_\star (2d_\star)^{\frac{1 - \beta}{\beta}}
    \sqrt{\ln\frac{M \ln T}{\delta}}\left[ \frac{5C d_i}{\Delta_i}  \left(\frac{2d_\star}{d_i}\right)^{1 / \beta}\sqrt{\ln \frac{M \ln T}{\delta}}
    \right]^{\frac{1/2}{\alpha - 1/2}}
    \\&\leq 
    2C (2d_\star)^{\frac{1}{\beta}}
    \left[ \frac{5C}{\Delta_i}  \left(2d_\star\right)^{1 / \beta}\ln \frac{M \ln T}{\delta}
    \right]^{\frac{1/2}{\alpha - 1/2}}
\end{align*}
Hence, the total regret is bounded is case as
\begin{align*}
    \regret(T) = O\left( M d_\star C T^{\beta} + \sum_{i \in \misspidx} 
    C   \left( (2d_\star)^{\frac{1}{\beta} + \frac{1}{\beta(2\alpha - 1)}} + d_\star d_i^{\frac{1}{2\alpha - 1}}
    \right) \left[ \frac{20C }{\Delta_i} \ln \frac{M \ln T}{\delta}\right]^{\frac{1}{2\alpha - 1}}
    \right).
\end{align*}
\end{proof}

\begin{lemma}[Gap-dependent elimination bound]
\label{lem:elimination_gen_obs}
Assume \pref{alg:balanced_elim_general} is used with candidate regret bound of the form in Equation~\eqref{eqn:general_poly_candidate}. If the pseudo-regret of base-learner $i$ satisfies $\regret_i(t) \geq \Delta_i n_i(t)^{\alpha_i}$ for all $t$ for a fixed $\Delta_i >0$ and $\alpha_i > \frac 1 2 \vee \beta_i$, then, in event $\Gcal$, learner $i$ is played at most
\begin{align*}
    n_i(T) 
    \leq
    \left[ \frac{2C d_i}{\Delta_i} \left( 1 + 2 Z\right)
    \right]^{\frac{1}{\alpha_i - \beta_i}} \vee \left[\frac{4c}{\Delta_i} \left( 1 + \sqrt{Z}
    \right) \sqrt{\ln \frac{M \ln T}{\delta}}\right]^{\frac{1}{\alpha_i - 1/2}},
\end{align*}
times where $Z = 2 \vee \left(2\frac{d_\star}{d_i}\right)^{\frac{1}{\beta_i}}  n_\star(t_i)^{\frac{\beta_\star}{\beta_i} - 1}$ and $\star \in \wellspidx$ is any well-specified learner.
\end{lemma}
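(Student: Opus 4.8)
The plan is to obtain this purely as a consequence of the sufficient condition for elimination (\pref{lem:suff_cond_elimination}), combined with the balancing lemma (\pref{lem:balancing}) and the play-ratio lemma (\pref{lem:general_ratio_bound}). Throughout I would work in the event $\Gcal$ of \pref{lem:mustar_conc} (probability at least $1-\delta$), on which every well-specified learner, in particular $\star$, remains active at all rounds. Let $t_i$ be, as before, the last round at which learner $i$ passes the elimination test, so that $n_i(T)=n_i(t_i)$ up to an immaterial additive $1$ from the round in which $i$ is eliminated, and note that $n_i(t_i)\ge 1$ and $n_\star(t_i)\ge 1$ since no learner is removed before every learner has been played once. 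Because learner $i$ passes the test at round $t_i$, the sufficient condition of \pref{lem:suff_cond_elimination} must fail there, i.e.\ for the given $\star\in\wellspidx$,
\begin{align*}
\regret_i(t_i)\ \le\ \regretbound_i(n_i(t_i))+\frac{n_i(t_i)}{n_\star(t_i)}\,\regretbound_\star(n_\star(t_i))+2c\left(1+\sqrt{\tfrac{n_i(t_i)}{n_\star(t_i)}}\right)\sqrt{n_i(t_i)\,\ln\tfrac{M\ln t_i}{\delta}}~.
\end{align*}

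The next step is to re-express every term on the right-hand side in terms of $n_i(t_i)$ and of the single constant $Z := 2\vee\left(2\tfrac{d_\star}{d_i}\right)^{1/\beta_i} n_\star(t_i)^{\beta_\star/\beta_i-1}$. Directly, $\regretbound_i(n_i(t_i))\le Cd_i\,n_i(t_i)^{\beta_i}$; by \pref{lem:balancing}, $\regretbound_\star(n_\star(t_i))\le \regretbound_i(n_i(t_i))+1\le 2\regretbound_i(n_i(t_i))\le 2Cd_i\,n_i(t_i)^{\beta_i}$, where the doubling uses $\regretbound_i(n_i(t_i))\ge 1$ (valid since $n_i(t_i)\ge 1$ and $C,d_i\ge 1$); and \pref{lem:general_ratio_bound} applied with $j=\star$ gives $\tfrac{n_i(t_i)}{n_\star(t_i)}\le Z$. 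Substituting these, bounding $\ln\tfrac{M\ln t_i}{\delta}\le \ln\tfrac{M\ln T}{\delta}$, and using the gap assumption $\regret_i(t_i)\ge\Delta_i n_i(t_i)^{\alpha_i}$ on the left, the display reduces to the scalar inequality
\begin{align*}
\Delta_i\, n_i(t_i)^{\alpha_i}\ \le\ Cd_i(1+2Z)\,n_i(t_i)^{\beta_i}+2c(1+\sqrt{Z})\sqrt{n_i(t_i)\,\ln\tfrac{M\ln T}{\delta}}~.
\end{align*}

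Finally I would solve this for $n_i(t_i)$: since the sum of the two terms on the right dominates $\Delta_i n_i(t_i)^{\alpha_i}$, at least one of them is at least $\tfrac{\Delta_i}{2}n_i(t_i)^{\alpha_i}$; in the first case this gives $n_i(t_i)^{\alpha_i-\beta_i}\le \tfrac{2Cd_i(1+2Z)}{\Delta_i}$, and in the second $n_i(t_i)^{\alpha_i-1/2}\le \tfrac{4c(1+\sqrt Z)}{\Delta_i}\sqrt{\ln\tfrac{M\ln T}{\delta}}$, both exponents being positive because $\alpha_i>\beta_i\vee\tfrac12$. Taking the maximum of the two resulting upper bounds on $n_i(t_i)$ (hence on $n_i(T)$) yields exactly the claimed estimate. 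I expect the only real work to be the bookkeeping in the middle step — checking that $\star$ is legitimately usable in \pref{lem:balancing} and \pref{lem:general_ratio_bound} at round $t_i$ (guaranteed on $\Gcal$, since $\star$ stays active) and that each of the three terms of \pref{lem:suff_cond_elimination} is correctly folded into the single constant $Z$ — after which the closing step is elementary algebra; specializing to common exponent $\beta_i=\beta$ then gives precisely the bound used in the proof of \pref{thm:poly_reg_gap}.
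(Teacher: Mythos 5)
Your proposal is correct and follows essentially the same route as the paper's own proof: it invokes the contrapositive of \pref{lem:suff_cond_elimination} at the last round $t_i$ at which learner $i$ passes the test, bounds the three terms via \pref{lem:balancing} (with the $+1\le$ doubling using $\regretbound_i\ge 1$) and \pref{lem:general_ratio_bound} to fold everything into $(1+2Z)Cd_i n_i^{\beta_i}+2c(1+\sqrt Z)\sqrt{n_i\ln\frac{M\ln T}{\delta}}$, and then splits into the two cases to solve for $n_i$, exactly as in \pref{app:stoch_proofs}. The only difference is cosmetic (arguing from "the test was passed" rather than "these conditions force elimination"), and the additive-one slack you flag is present in the paper's argument as well.
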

\begin{proof}
\pref{lem:suff_cond_elimination} yields the following sufficient condition that learner $i$ is eliminated at round $t$:
\begin{align}
    \regret_i(t) >&~
    \regretbound_i(n_i(t))
    + \frac{n_i(t)}{n_\star(t)} \regretbound_\star(n_\star(t))
    + 2c \left( 1 + \sqrt{\frac{n_i(t)}{n_\star(t)}}\right) \sqrt{n_i(t) \ln \frac{M \ln t}{\delta}}.
    \label{eqn:suff_cond_elimination_gen_app1}
\end{align}
We now upper-bound the RHS of this sufficient condition using \pref{lem:general_ratio_bound} as
\begin{align*}
      \regretbound_i(n_i(t)) &
    + \frac{n_i(t)}{n_\star(t)}\regretbound_\star(n_\star(t))
    + 2c \left( 1 + \sqrt{\frac{n_i(t)}{n_\star(t)}}\right) \sqrt{n_i(t) \ln \frac{M \ln t}{\delta}}
    \\
          \leq & \regretbound_i(n_i(t))
    + 2\frac{n_i(t)}{n_\star(t)}\regretbound_i(n_i(t))
    + 2c \left( 1 + \sqrt{\frac{n_i(t)}{n_\star(t)}}\right) \sqrt{n_i(t) \ln \frac{M \ln t}{\delta}}
    \\
              \leq & 
    \left( 1 + 2 Z \right)
    \regretbound_i(n_i(t))
    + 2c \left( 1 + \sqrt{Z}
    \right) \sqrt{n_i(t) \ln \frac{M \ln t}{\delta}}\nonumber
        \\
              \leq & 
    \left( 1 + 2 Z\right)
    C d_i n_i(t)^{\beta_i}
    + 2c \left( 1 +  \sqrt{Z}
    \right) \sqrt{n_i(t) \ln \frac{M \ln t}{\delta}}.\nonumber
\end{align*}
Using this upper-bound on the RHS of \eqref{eqn:suff_cond_elimination_gen_app1} and $\Delta_i n_i(t)^{\alpha_i}$ as a lower-bound on the LHS of \eqref{eqn:suff_cond_elimination_gen_app1}, we can conclude that learner $i$ gets eliminated if the following two conditions are met:
\begin{align*}
 \frac{\Delta_i}{2} n_i(t)^{\alpha_i} &> 2c \left( 1 + \sqrt{Z}
    \right) \sqrt{n_i(t) \ln \frac{M \ln t}{\delta}}\\
     \frac{\Delta_i}{2} n_i(t)^{\alpha_i} &> \left( 1 + 2 Z\right)
    C d_i n_i(t)^{\beta_i}
\end{align*}
Rearranging each condition yields
\begin{align*}
n_i(t) &> \left[\frac{4c}{\Delta_i} \left( 1 + \sqrt{Z}
    \right) \sqrt{\ln \frac{M \ln t}{\delta}}\right]^{\frac{1}{\alpha_i - 1/2}} \quad \textrm{and} \quad
    n_i(t) >
    \left[ \frac{2 C d_i}{\Delta_i} \left( 1 + 2Z\right)
    \right]^{\frac{1}{\alpha_i - \beta_i}}.
\end{align*}
\end{proof}

\subsection{Special Case with $\sqrt{T \ln T}$ Candidate Regret Bounds}

Consider the regret bound for all $M$ base learners to be of the form
\begin{align}
\label{eqn:sqrt_reg_candidates}
    \regretbound_i(n) = d_i C \sqrt{n \ln_+(n / \delta))} \wedge n
\end{align}
where $\ln_+(x) = \ln(x \vee e)$ and $d_i \geq 1$ is some parameter (not necessarily an integer dimension) and $C \geq 1$ is some term that does not depend on $n$ or $i$.
To prepare for proving the main regret guarantee, we first show a bound on the play ratio between two active learners:

\begin{lemma}\label{lem:sqrt_ratio_bound}
For the choice of candidate regret bounds in Equation~\eqref{eqn:sqrt_reg_candidates}, the following bound
\begin{align*}
    \frac{n_i(t)}{n_{j}(t)} \leq  7 \left( 1 \vee \frac{d_j^2}{d_i^2}\right)\ln_+\left( 4e   \ln\frac{t}{\delta}\right)
\end{align*}
holds for all $t$ and active learners $i,j \in \Ical_{t+1}$ that have been played at least once.
\end{lemma}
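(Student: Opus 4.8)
The plan is to derive the claim entirely from the balancing property in \pref{lem:balancing}, with essentially no extra work. Write $n_i = n_i(t)$, $n_j = n_j(t)$ (both at least $1$, since each learner has been played), and set $a = 1 \vee d_j^2/d_i^2$. The first step is to record the consequence of \pref{lem:balancing} that $\regretbound_i(n_i) \le \regretbound_j(n_j) + 1 \le 2\,\regretbound_j(n_j)$; the last inequality uses $\regretbound_j(n_j) \ge 1$, which holds because $d_j \ge 1$, $C \ge 1$, $n_j \ge 1$ and $\ln_+(\cdot) \ge 1$ force both arguments of the minimum defining $\regretbound_j(n_j)$ in \eqref{eqn:sqrt_reg_candidates} to be at least $1$. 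This little bookkeeping point about absorbing the ``$+1$'' is the one thing that needs care.

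Next I would dispose of the trivial regimes. If $n_i \le n_j$ then $n_i/n_j \le 1$, and since the right-hand side of the claimed inequality is at least $7$ we are done; so assume $n_i > n_j$. Because $\delta < 1$ and $n \mapsto \ln_+(n/\delta)$ is non-decreasing, this assumption already supplies the only auxiliary fact needed below, namely $\ln_+(n_i/\delta) \ge \ln_+(n_j/\delta) \ge 1$. Moreover, if the linear branch of the minimum is active for learner $i$, i.e.\ $\regretbound_i(n_i) = n_i$, then $n_i \le 2\,\regretbound_j(n_j) \le 2 n_j$, so $n_i/n_j \le 2$.

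The remaining (and only mildly computational) case is $\regretbound_i(n_i) = d_i C \sqrt{n_i \ln_+(n_i/\delta)}$. Bounding $\regretbound_j(n_j) \le d_j C \sqrt{n_j \ln_+(n_j/\delta)}$, squaring the balancing inequality and cancelling the common factor $C^2$ gives $n_i \ln_+(n_i/\delta) \le 4\,\frac{d_j^2}{d_i^2}\, n_j \ln_+(n_j/\delta)$; dividing by $\ln_+(n_i/\delta)$ and using $\ln_+(n_j/\delta) \le \ln_+(n_i/\delta)$ yields $n_i \le 4\,\frac{d_j^2}{d_i^2}\, n_j$. Combining the three cases gives $\frac{n_i}{n_j} \le 4a \le 7a\,\ln_+\!\big(4e\ln\tfrac{t}{\delta}\big)$ since $\ln_+(\cdot) \ge 1$, which is the claimed bound — in fact the logarithmic factor is pure slack here. (If one prefers to reproduce the stated double-logarithmic form more literally, one can instead bound $\ln_+(n_j/\delta) \le \ln_+(t/\delta)$ and lower-bound $\ln_+(n_i/\delta)$ using $n_i > n_j \ge 1$, at the price of a weaker constant; this route is strictly weaker than the monotonicity argument above.) I do not expect a genuine obstacle: once the ``$+1$'' is absorbed via $\regretbound_j(n_j) \ge 1$ and the monotonicity of $n \mapsto \ln_+(n/\delta)$ is noticed, the three-way case split finishes the argument immediately.
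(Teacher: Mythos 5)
Your proof is correct, and it takes a genuinely different (and in fact sharper) route than the paper's. The paper first derives the crude bound $\frac{n_i(t)}{n_j(t)} \leq 4 \frac{d_j^2}{d_i^2}\ln_+(n_j(t)/\delta) \leq 4\frac{d_j^2}{d_i^2}\ln(t/\delta)$ by simply discarding the denominator $\ln_+(n_i(t)/\delta)\ge 1$, and then recovers the stated doubly-logarithmic form by a bootstrap: it splits on whether learner $j$'s bound is in the linear or the square-root regime, reuses the crude bound with the roles of $i$ and $j$ reversed to control $\ln_+(n_j(t)/\delta)$, and that substitution is exactly what produces the $\ln_+\!\left(4e\ln\frac{t}{\delta}\right)$ factor and the constant $7$. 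You instead split on $n_i(t)\le n_j(t)$ (trivial, since the right-hand side is at least $7$) versus $n_i(t)>n_j(t)$, and in the latter case exploit the monotonicity of $n\mapsto \ln_+(n/\delta)$ to cancel the ratio of logarithms outright, which together with the linear-branch case gives $\frac{n_i(t)}{n_j(t)}\le 4\left(1\vee \frac{d_j^2}{d_i^2}\right)$ with no logarithmic factor at all; the stated lemma then follows because $\ln_+(\cdot)\ge 1$. All the supporting steps check out: the absorption of the ``$+1$'' via $\regretbound_j(n_j(t))\ge 1$ is justified exactly as in the paper (each learner has been played at least once, $d_j, C\ge 1$), and bounding $\regretbound_j(n_j(t))$ by either branch of the minimum is legitimate. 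Your argument buys a cleaner proof and a strictly stronger conclusion, which would even remove the $\ln_+\!\left(4e\ln\frac{T}{\delta}\right)$ factors that this lemma injects into \pref{thm:sqrt_reg} and \pref{thm:sqrt_reg_gap}; the paper's two-case bootstrap buys nothing extra here beyond matching the form it states.
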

\begin{proof}
By \pref{lem:balancing}, the regret bound of $i$ and $j$ are balanced at $t$, which means that
\begin{align*}
    \regretbound_i(n_i(t)) \leq \regretbound_{j}(n_{j}(t)) + 1 \leq 2 \regretbound_{j}(n_{j}(t))~.
\end{align*}
When $\regretbound_i$ is still in the linear regime, this implies that 
$n_i(t) \leq \regretbound_{j}(n_{j}(t)) + 1 \leq n_j(t_i) + 1$ and hence
$\frac{n_i(t)}{n_j(t)} \leq 2$.
Consider now the case where $\regretbound_i$ is in the $\sqrt{\cdot}$~-regime. Then the balancing condition implies
\begin{align*}
    d_i C\sqrt{n_i(t) \ln_+\frac{n_i(t)}{\delta})} \leq 2 d_j C\sqrt{n_j(t) \ln_+\frac{n_j(t)}{\delta}} 
\end{align*}
and thus
\begin{align*}
   &\sqrt{ \frac{n_i(t) \ln_+(n_i(t)/\delta)}{n_j(t) \ln_+(n_j(t)/\delta)}} \leq 2 \frac{d_j}{d_i} .
\end{align*}
Reordering this inequality gives:
\begin{align}
    \frac{n_i(t)}{n_j(t)} \leq 4 \frac{d_j^2}{d_i^2} \frac{\ln_+(n_j(t)/\delta)}{\ln_+(n_i(t)/\delta)}
    \leq 4 \frac{d_j^2}{d_i^2} \ln_+(n_j(t) / \delta)
    \leq 4 \frac{d_j^2}{d_i^2} \ln(t / \delta)~.
    \label{eqn:ratio_bound_proof_sqrt1}
\end{align}
 We now refine this crude bound by considering two cases: 
 \paragraph{Case I:} If $\sqrt{n_j(t)} \leq C d_j \sqrt{\ln_+(n_j(t)/ \delta)}$, then $\regretbound_j(n_j(t)_ = n_j(t)$ and the balancing condition gives $n_j(t) \leq 2n_i(t)$ Plugging this in \eqref{eqn:ratio_bound_proof_sqrt1} yields
 \begin{align*}
      \frac{n_i(t)}{n_j(t)} \leq 4 \frac{d_j^2}{d_i^2} \frac{\ln_+(2n_i(t)/\delta)}{\ln_+(n_i(t)/\delta)}
      \leq 4 \frac{d_j^2}{d_i^2} \ln (2e)
      \leq 7 \frac{d_j^2}{d_i^2}.
 \end{align*}
 \paragraph{Case II:} In this case, $\regretbound_j(n_j(t)) = C d_j \sqrt{n_j(t)}$ and we use \eqref{eqn:ratio_bound_proof_sqrt1} with reversed roles of $i,j$ to get $n_j(t) \leq 4 \frac{d_i^2}{d_j^2} \ln(t / \delta) n_i(t)$. Plugging this back into the middle term of \eqref{eqn:ratio_bound_proof_sqrt1} yields
\begin{align*}
\frac{n_i(t)}{n_j(t)} &\leq 4 \frac{d_j^2}{d_i^2} \ln_+(e4 d_i^2 / d_j^2 \ln(t / \delta)).
\end{align*}
When $d_j^2 / d_i^2 \geq 1$, then $\frac{n_i(t)}{n_j(t)} \leq 4 \frac{d_j^2}{d_i^2} \ln_+(e4 \ln(t / \delta))$ follows immediately. Otherwise,
\begin{align*}
\frac{n_i(t)}{n_j(t)} &\leq 4 \frac{d_j^2}{d_i^2} \ln_+(e4 d_i^2 / d_j^2 \ln(t / \delta))
\leq 
4 \frac{d_j^2}{d_i^2} \ln(d_i^2 / d_j^2) + 4 \frac{d_j^2}{d_i^2} \ln(e4 \ln(t / \delta))\\
&\leq \frac{4}{e}  + 4 \ln(e4 \ln(t / \delta))
\leq 4 \ln(4 \ln(t / \delta))
\end{align*}
\end{proof}

\begin{theorem}\label{thm:sqrt_reg}
If \pref{alg:balanced_elim_general} is used with candidate regret bounds in Equation~\eqref{eqn:sqrt_reg_candidates}, then its total regret is bounded with probability at least $1- \delta$ for all $T$ as
\begin{align*}
        \regret(T) &
        \leq\left(M + d_\star \sqrt{B \ln_+\left(11 \ln \frac{T}{\delta}\right)}\right) d_\star C\sqrt{T \ln_+(T / \delta)} + 2M\\
        &\quad+ 8c d_\star  \ln \left(\frac{11 M \ln T}{\delta} \right)  \sqrt{B T}
\end{align*}
where $\star \in \Wcal$ is any well-specified learner and $B = |\misspidx|$ is the number of misspecified learners.
\end{theorem}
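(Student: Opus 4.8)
The plan is to follow the template of the proof of \pref{thm:poly_reg} (the $\beta=\tfrac12$ case) almost verbatim, with the play-ratio estimate of \pref{lem:general_ratio_bound} replaced by the logarithmically-corrected estimate of \pref{lem:sqrt_ratio_bound}. I would start from the general regret decomposition of \pref{thm:general_regbound}, which writes $\regret(T)$ as a sum of (i) $\sum_{i=1}^M \regretbound_\star(n_\star(t_i))$, (ii) $\sum_{i\in\misspidx}\frac{n_i(t_i)}{n_\star(t_i)}\regretbound_\star(n_\star(t_i))$, (iii) the additive $2M$, and (iv) the concentration term $2c\sum_{i\in\misspidx}(1+\sqrt{n_i(t_i)/n_\star(t_i)})\sqrt{n_i(t_i)\ln(M\ln T/\delta)}$, and then bound each piece separately.

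For (i) together with (iii), monotonicity of $\regretbound_\star$ gives $\sum_{i=1}^M\regretbound_\star(n_\star(t_i)) \le M\regretbound_\star(T) \le M d_\star C\sqrt{T\ln_+(T/\delta)}$, producing the $M d_\star C\sqrt{T\ln_+(T/\delta)}+2M$ part of the claim. The main new tool for (ii) and (iv) is a play-ratio sum bound: since no learner is eliminated before every learner has been played at least once, \pref{lem:sqrt_ratio_bound} applies at round $t_i$ with $j=\star$ to each $i\in\misspidx$, and using $d_i\ge1$, $d_\star\ge1$ and $t_i\le T$ it gives $\frac{n_i(t_i)}{n_\star(t_i)} \le 7 d_\star^2\ln_+(4e\ln(T/\delta))$, hence $\sum_{i\in\misspidx}\frac{n_i(t_i)}{n_\star(t_i)} \le 7 B d_\star^2\ln_+(4e\ln(T/\delta))$.

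Armed with this, I would bound (ii) by writing $\regretbound_\star(n_\star(t_i)) \le d_\star C\sqrt{n_\star(t_i)\ln_+(T/\delta)}$, factoring $\frac{n_i(t_i)}{n_\star(t_i)}\sqrt{n_\star(t_i)} = \sqrt{n_i(t_i)/n_\star(t_i)}\,\sqrt{n_i(t_i)}$, and applying Cauchy--Schwarz together with $\sum_{i\in\misspidx}n_i(t_i)\le T$ and the play-ratio sum bound; this yields a term of order $d_\star^2 C\sqrt{BT\ln_+(T/\delta)\,\ln_+(4e\ln(T/\delta))}$, which (using $4e\le 11$) is absorbed into the cross term $d_\star\sqrt{B\ln_+(11\ln(T/\delta))}\cdot d_\star C\sqrt{T\ln_+(T/\delta)}$ of the statement. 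For (iv) I would split the factor $1+\sqrt{n_i(t_i)/n_\star(t_i)}$: the ``$1$''-part is handled by Cauchy--Schwarz and $\sum n_i(t_i)\le T$ to give $2c\sqrt{BT\ln(M\ln T/\delta)}$, while the ``$\sqrt{\cdot}$''-part gives, again by Cauchy--Schwarz and the play-ratio sum bound, a term of order $c d_\star\sqrt{BT}\,\sqrt{\ln_+(4e\ln(T/\delta))\,\ln(M\ln T/\delta)}$; bounding both logarithmic factors by $\ln(11M\ln T/\delta)$ collapses these into $8c d_\star\ln(11M\ln T/\delta)\sqrt{BT}$. Summing the four contributions gives the claimed bound.

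The step I expect to be the main (and essentially the only genuine) obstacle is not any single inequality but the bookkeeping of the iterated-logarithm factors: one has to check that $\ln_+(n_\star(t_i)/\delta)$, $\ln_+(4e\ln(t_i/\delta))$ and $\ln(M\ln n_i(t_i)/\delta)$ can all be replaced uniformly over $i$ and $t_i\le T$, with the advertised constants, by $\ln_+(T/\delta)$, $\ln_+(11\ln(T/\delta))$ and $\ln(11M\ln T/\delta)$ without inflating the leading constants past those in the statement. Everything else is a verbatim repetition of the $\beta=\tfrac12$ computation in \pref{thm:poly_reg}, so I would organize the write-up as: prove the play-ratio sum bound as a standalone step (analogue of Eq.~\eqref{eqn:pull_ratio_proof1}), then plug it into the four-term decomposition exactly as there.
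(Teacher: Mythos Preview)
Your proposal is correct and follows essentially the same approach as the paper's proof: start from \pref{thm:general_regbound}, bound term (i)+(iii) by monotonicity, derive the play-ratio sum bound $\sum_{i\in\misspidx} n_i(t_i)/n_\star(t_i)\le 7Bd_\star^2\ln_+(4e\ln(T/\delta))$ from \pref{lem:sqrt_ratio_bound}, and handle (ii) and (iv) by Cauchy--Schwarz combined with $\sum_{i\in\misspidx} n_i(t_i)\le T$ exactly as you describe. The paper's proof does the very same four-term bookkeeping with the same intermediate constants (the $\sqrt{7}\approx 2.65$ and $3$ that get absorbed into the final $8c$ and the cross term), so your anticipated obstacle---consolidating the various log factors---is indeed the only place requiring care, and the paper handles it just as you outline.
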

\begin{proof}
We start with the general regret bound from \pref{thm:general_regbound} given by
\begin{align}\label{eqn:general_regret_bound_app3}
    \sum_{i=1}^M \regretbound_{\star}(n_{\star}(t_i))%
   + \sum_{i \in \misspidx} \frac{n_i(t_i)}{n_{\star}(t_i)} \regretbound_{\star}(n_{\star}(t_i)) + 2M
    + 
    2c \sum_{i \in \misspidx}  \left(1 + \sqrt{\frac{n_i(t_i)}{n_{\star}(t_i)}} \right)
    \sqrt{n_i(t_i) \ln\frac{M \ln T}{\delta}}~,
\end{align}
and bound the terms individually. We begin with
\begin{align*}
    \sum_{i=1}^M \regretbound_{\star}(n_{\star}(t_i)) + 2M
    &\leq M \regretbound_{\star}(T) + 2M 
    \leq M d_\star C \sqrt{T \ln_+(T / \delta)} + 2M,
\end{align*}
where we only used the monotonicity of regret bounds and the definition of $\regretbound_\star$. We continue with the first part of the last term which we control as follows
\begin{align*}
    2c \sum_{i \in \misspidx} 
    \sqrt{n_i(t_i) \ln\frac{M \ln T}{\delta}}
    &\leq 2 c \sqrt{B \ln\frac{M\ln T}{\delta} \sum_{i \in \misspidx} n_i(t_i)} \leq 2 c \sqrt{B T \ln\frac{M \ln T}{\delta}}
\end{align*}
where we first applied Cauchy-Schwarz inequality and then used the fact that the total number of rounds played by all base learners is at most $T$. Similarly, we can bound the other part of the final term in \eqref{eqn:general_regret_bound_app3} as
\begin{align*}
    2c \sum_{i \in \misspidx} \sqrt{\frac{n_i(t_i)}{n_{\star}(t_i)}}
    \sqrt{n_i(t_i) \ln\frac{M \ln T}{\delta}}
    &\leq 2 c \sqrt{\sum_{i \in \misspidx} \frac{n_i(t_i)}{n_{\star}(t_i)}} \sqrt{T \ln\frac{M \ln T}{\delta}}\\
    &\leq 
    6c \sqrt{ B \ln_+\left( 4e   \ln\frac{T}{\delta}\right) }  d_\star \sqrt{ T \ln \frac{M \ln T}{\delta}}, 
\end{align*}
where the final step follows from \pref{lem:sqrt_ratio_bound} with
\begin{align}
    \sum_{i \in \misspidx}  
 \frac{n_i(t_i)}{n_{\star}(t_i)} 
 \leq 7\sum_{i \in \misspidx} 
  \left( 1 \vee \frac{d_\star^2}{d_i^2}\right)\ln_+\left( 4e   \ln\frac{t_i}{\delta}\right) 
  \leq 7 d_\star^2 B \ln_+\left( 4e   \ln\frac{T}{\delta}\right) 
 \label{eqn:pull_ratio_proof12}
\end{align}
It only remains to bound the second term \eqref{eqn:general_regret_bound_app3}. Here again we make use of the pull-ratio bound from \eqref{eqn:pull_ratio_proof12} to bound
\begin{align*}
    &\sum_{i \in \misspidx} \frac{n_i(t_i)}{n_{\star}(t_i)} \regretbound_{\star}(n_{\star}(t_i))
    =  C d_\star \sum_{i \in \misspidx} \left(\frac{n_i(t_i)}{n_\star(t_i)}\right)^{1/2} n_i(t_i)^{1/2} \sqrt{\ln_+(n_\star(t_i) / \delta)}\\
    & \leq 
    C d_\star \sqrt{\ln_+(T / \delta)} \sqrt{ \sum_{i \in \misspidx} \frac{n_i(t_i)}{n_\star(t_i)}}
    \sqrt{ \sum_{i \in \misspidx} n_i(t_i) }
    \leq 3 C d_\star^2 \sqrt{B T \ln_+(T / \delta) \ln_+\left( 4e   \ln\frac{T}{\delta}\right) },
\end{align*}
where the first inequality follows from the Cauchy-Schwarz inequality. Combining all bounds for the individual terms yields the desired statement.
\end{proof}

\paragraph{Gap-dependent Regret Guarantee:} We now prove a gap-dependent regret bound for \pref{alg:balanced_elim_general} when used with candidate regret bounds in Equation~\eqref{eqn:sqrt_reg_candidates}.
\begin{lemma}
[Gap-dependent elimination bound]
\label{lem:elimination_sqrt_obs}
Assume \pref{alg:balanced_elim_general} is used with candidate regret bound of the form in Equation~\eqref{eqn:sqrt_reg_candidates}. If the pseudo-regret of base-learner $i$ satisfies $\regret_i(t) \geq \Delta_i n_i(t)^{\alpha_i}$ for all $t$ for a fixed $\Delta_i >0$ and $\alpha_i > \frac 1 2$, then, in event $\Gcal$, learner $i$ is played at most
\begin{align*}
    n_i(T) 
    \leq
\left[\frac{2C d_i}{\Delta_i} \left( 1 + 2 Z\right)\sqrt{\ln_+(MT / \delta)} \right]^{\frac{1}{\alpha_i - 1/2}},
\end{align*}
times where $Z = 7 \left( 1 \vee \frac{d_j^2}{d_i^2}\right)\ln_+\left( 4e   \ln\frac{t}{\delta}\right)$ and $\star \in \wellspidx$ is any well-specified learner.
\end{lemma}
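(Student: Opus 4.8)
The plan is to follow the proof of \pref{lem:elimination_gen_obs} almost verbatim, substituting the $\sqrt{\cdot\,}$-specific play-ratio estimate of \pref{lem:sqrt_ratio_bound} for the polynomial one used there. I would begin from the sufficient condition for elimination furnished by \pref{lem:suff_cond_elimination}: in event $\Gcal$, learner $i$ is eliminated at round $t$ as soon as
\begin{align*}
\regret_i(t) > \regretbound_i(n_i(t)) + \frac{n_i(t)}{n_\star(t)}\regretbound_\star(n_\star(t)) + 2c\left(1 + \sqrt{\frac{n_i(t)}{n_\star(t)}}\right)\sqrt{n_i(t)\ln\frac{M\ln t}{\delta}}~.
\end{align*}
To upper-bound the right-hand side I would first invoke the balancing condition from \pref{lem:balancing}, which (once every learner has been played at least once, so that $\regretbound_i(n_i(t)) \ge 1$) gives $\regretbound_\star(n_\star(t)) \le \regretbound_i(n_i(t)) + 1 \le 2\regretbound_i(n_i(t))$, and then apply \pref{lem:sqrt_ratio_bound} with $j = \star$ to get $n_i(t)/n_\star(t) \le Z$. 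This bounds the right-hand side by $(1 + 2Z)\regretbound_i(n_i(t)) + 2c(1 + \sqrt Z)\sqrt{n_i(t)\ln(M\ln t/\delta)}$, and finally I would replace $\regretbound_i(n_i(t)) \le C d_i\sqrt{n_i(t)\ln_+(n_i(t)/\delta)}$, which holds in both the linear and the $\sqrt{\cdot\,}$ regime of the candidate bound.

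Plugging the gap lower bound $\regret_i(t) \ge \Delta_i n_i(t)^{\alpha_i}$ into the left-hand side, and using that $\alpha_i > 1/2$ while both terms on the right scale like $\sqrt{n_i(t)}$ up to logarithmic factors, it is enough for elimination that $\tfrac{\Delta_i}{2} n_i(t)^{\alpha_i}$ exceed each of the two terms separately:
\begin{align*}
\frac{\Delta_i}{2} n_i(t)^{\alpha_i} > (1 + 2Z)\, C d_i\sqrt{n_i(t)\ln_+(n_i(t)/\delta)} \qquad\text{and}\qquad \frac{\Delta_i}{2} n_i(t)^{\alpha_i} > 2c(1 + \sqrt Z)\sqrt{n_i(t)\ln\tfrac{M\ln t}{\delta}}~.
\end{align*}
Rearranging each inequality isolates $n_i(t)^{\alpha_i - 1/2}$ on the left and yields two thresholds of the form $\big[\Theta(1)\cdot \tfrac{C d_i}{\Delta_i}(1+Z)\sqrt{\log}\big]^{1/(\alpha_i - 1/2)}$; since once $n_i(t)$ passes the larger threshold learner $i$ can no longer survive the test, $n_i(T)$ is bounded by that maximum. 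Bounding the two logarithmic arguments crudely by $\ln_+(MT/\delta)$ and absorbing the universal constant $c$ together with the $(1+\sqrt Z)$-versus-$(1+2Z)$ discrepancy into the constant factor collapses the two thresholds into the single stated bound.

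The one genuinely new point relative to \pref{lem:elimination_gen_obs} is the self-referential logarithm: the candidate regret bound carries a factor $\ln_+(n_i(t)/\delta)$, so the elimination threshold on $n_i(t)$ depends on $n_i(t)$ itself. I expect this to be the only real obstacle, and I would dispatch it exactly as in the proof of \pref{thm:sqrt_reg}: since $n_i(t) \le t \le T$ and $\ln_+$ is nondecreasing, $\ln_+(n_i(t)/\delta) \le \ln_+(T/\delta) \le \ln_+(MT/\delta)$, which decouples the two sides at the cost of a mild looseness that is already accounted for by the appearance of $\ln_+(MT/\delta)$ (rather than $\ln_+(n_i(t)/\delta)$) in the statement. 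Analogously $\ln(M\ln t/\delta) \le \ln_+(MT/\delta)$ handles the concentration term.
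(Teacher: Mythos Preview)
Your proposal is correct and matches the paper's approach exactly: the paper's proof consists of the single sentence ``This statement can be proved in full analogy to \pref{lem:elimination_gen_obs},'' and you have spelled out precisely that analogy, swapping in \pref{lem:sqrt_ratio_bound} for the play-ratio bound and correctly identifying (and dispatching via $n_i(t)\le T$) the only new wrinkle, namely the self-referential $\ln_+(n_i(t)/\delta)$ inside $\regretbound_i$.
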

\begin{proof}
This statement can be proved in full analogy to \pref{lem:elimination_gen_obs}.
\end{proof}

\begin{theorem}
\label{thm:sqrt_reg_gap}
Assume \pref{alg:balanced_elim_general} is used with candidate regret bounds in Equation~\eqref{eqn:sqrt_reg_candidates} and that the pseudo-regret of all misspcified learners $j \in \misspidx$ is bounded for all $t$ from below as $\regret_j(t) \geq \Delta_j n_j(t)^{\alpha}$ for some $\alpha > \frac{1}{2} \vee \beta$ and $\Delta_j > 0$. Then total regret is bounded with probability at least $1- \delta$ for all $T$ as
\begin{align}
\regret(T) &\leq M d_\star C \sqrt{T \ln_+(T / \delta)} + 2M \\
&\!\!\!\!+ 
    9C d_\star  \sum_{i \in \misspidx}  \ln_+\left( 4e   \ln\frac{T}{\delta}\right)^{\frac{1}{2} + \frac{1}{2\alpha - 1}}
    \left(\ln_+ \frac{MT}{\delta} \right)^{\frac{1}{2} + \frac{1/2}{2\alpha - 1}} 
    \left[\frac{42 d_i C}{\Delta_i}   \right]^{\frac{1}{2\alpha - 1}}
    \left( 1 \vee \frac{d_\star}{d_i}\right)^{1 + \frac{2}{2\alpha - 1}}.\nonumber
\end{align}
for $\star \in \Wcal$ is any well-specified learner.
\end{theorem}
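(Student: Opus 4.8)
The plan is to follow the proof of \pref{thm:poly_reg_gap} essentially verbatim, with two substitutions: the play-ratio bound \pref{lem:sqrt_ratio_bound} replaces \pref{lem:general_ratio_bound}, and the gap-dependent elimination bound \pref{lem:elimination_sqrt_obs} replaces \pref{lem:elimination_gen_obs}. Concretely, I would start from the general regret bound of \pref{thm:general_regbound},
\begin{align*}
    \regret(T) \leq \sum_{i=1}^M \regretbound_{\star}(n_{\star}(t_i))
   + \sum_{i \in \misspidx} \frac{n_i(t_i)}{n_{\star}(t_i)} \regretbound_{\star}(n_{\star}(t_i)) + 2M
    + 2c \sum_{i \in \misspidx}  \left(1 + \sqrt{\tfrac{n_i(t_i)}{n_{\star}(t_i)}} \right)\sqrt{n_i(t_i) \ln\tfrac{M \ln T}{\delta}},
\end{align*}
with $t_i$ the last round learner $i$ passes the elimination test and $\star\in\wellspidx$ any well-specified learner, and bound the pieces separately. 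Monotonicity of the candidate bounds gives $\sum_{i=1}^M \regretbound_{\star}(n_{\star}(t_i)) + 2M \leq M\regretbound_\star(T) + 2M \leq Md_\star C\sqrt{T\ln_+(T/\delta)} + 2M$, which is the leading term of the statement; the other two sums range over $i\in\misspidx$ only, so it is enough to bound each misspecified learner's contribution separately.

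Fix $i\in\misspidx$. By \pref{lem:sqrt_ratio_bound}, $\frac{n_i(t_i)}{n_\star(t_i)} \leq Z$ where $Z := 7\bigl(1\vee\frac{d_\star^2}{d_i^2}\bigr)\ln_+(4e\ln(t_i/\delta)) \leq 7\bigl(1\vee\frac{d_\star}{d_i}\bigr)^2\ln_+(4e\ln(T/\delta))$ and $Z\geq 1$. Using $\regretbound_\star(n_\star(t_i)) \leq d_\star C\sqrt{n_\star(t_i)\ln_+(T/\delta)}$ and $\frac{n_i(t_i)}{n_\star(t_i)}\sqrt{n_\star(t_i)} = \sqrt{\frac{n_i(t_i)}{n_\star(t_i)}}\sqrt{n_i(t_i)} \leq \sqrt{Z}\sqrt{n_i(t_i)}$, the $i$-th summand of the second sum is $\leq d_\star C\sqrt{Z}\sqrt{n_i(t_i)\ln_+(T/\delta)}$; likewise, since $1+\sqrt{Z}\leq 2\sqrt{Z}$, the $i$-th summand of the third sum is $\leq 4c\sqrt{Z}\sqrt{n_i(t_i)\ln(M\ln T/\delta)}$. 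Bounding both $\ln_+(T/\delta)$ and $\ln(M\ln T/\delta)$ by $\ln_+(MT/\delta)$ and pulling out $Cd_\star$ (licit since $C,d_\star\geq 1$), learner $i$'s total contribution is at most $(1+4c)\,Cd_\star\sqrt{Z}\sqrt{\ln_+(MT/\delta)}\sqrt{n_i(t_i)}$.

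It remains to bound $n_i(t_i)\leq n_i(T)$ using the gap assumption $\regret_i(t)\geq\Delta_i n_i(t)^\alpha$ with $\alpha > \frac12\vee\beta$. By \pref{lem:elimination_sqrt_obs} (applied with $\alpha_i=\alpha$), in the event $\Gcal$ learner $i$ is played at most $n_i(T) \leq \bigl[\frac{2Cd_i}{\Delta_i}(1+2Z)\sqrt{\ln_+(MT/\delta)}\bigr]^{1/(\alpha-1/2)} \leq \bigl[\frac{6Cd_i}{\Delta_i}Z\sqrt{\ln_+(MT/\delta)}\bigr]^{1/(\alpha-1/2)}$ times. Substituting $\sqrt{n_i(t_i)}$ by $\bigl[\frac{6Cd_i}{\Delta_i}Z\sqrt{\ln_+(MT/\delta)}\bigr]^{1/(2\alpha-1)}$ into the contribution bound, collecting powers of $Z$ (one gets $\sqrt{Z}\cdot Z^{1/(2\alpha-1)} = Z^{\frac12+\frac1{2\alpha-1}}$) and of $\ln_+(MT/\delta)$, and finally expanding $Z\leq 7(1\vee d_\star/d_i)^2\ln_+(4e\ln(T/\delta))$ yields a per-learner bound proportional to
\begin{align*}
    C d_\star \left(1 \vee \frac{d_\star}{d_i}\right)^{1 + \frac{2}{2\alpha-1}}
    \bigl(\ln_+(4e\ln(T/\delta))\bigr)^{\frac12 + \frac{1}{2\alpha-1}}
    \bigl(\ln_+(MT/\delta)\bigr)^{\frac12 + \frac{1/2}{2\alpha-1}}
    \left[\frac{42\, d_i C}{\Delta_i}\right]^{\frac{1}{2\alpha-1}},
\end{align*}
where the constant $42=6\cdot 7$ results from pulling the $7$ inside $Z$ into the $[\,\cdot\,]$. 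Summing over $i\in\misspidx$ and adding the leading term gives the claimed bound; the explicit numerical constants ($9$ and $42$) are recovered by tracking the universal constant $c$ of \pref{lem:mustar_conc} together with the $6$ and $7$ through the argument.

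The only genuine obstacle I foresee is the bookkeeping of the compounding logarithmic factors. Unlike the pure $\sqrt n$ candidate bounds of \pref{thm:poly_reg_gap}, the candidate bounds here already carry a $\ln_+(n/\delta)$ under the square root; this interacts both with the $\ln_+(4e\ln(t/\delta))$ arising from the balancing / play-ratio estimate \pref{lem:sqrt_ratio_bound} and with the $\ln(M\ln T/\delta)$ arising from the concentration event $\Gcal$. Keeping these three sources of logarithms separate, and arriving precisely at the exponents $\frac12+\frac1{2\alpha-1}$ and $\frac12+\frac{1/2}{2\alpha-1}$ rather than at something looser, is the delicate accounting; but it requires no new idea beyond \pref{thm:general_regbound}, \pref{lem:sqrt_ratio_bound} and \pref{lem:elimination_sqrt_obs}.
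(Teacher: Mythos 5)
Your proposal is correct and follows essentially the same route as the paper's own proof: start from \pref{thm:general_regbound}, bound the play ratios via \pref{lem:sqrt_ratio_bound}, bound $n_i(T)$ via \pref{lem:elimination_sqrt_obs}, and collect the powers of $Z$ and the logarithmic factors exactly as you describe (including the $42 = 6\cdot 7$ bookkeeping). The only difference is cosmetic: you carry the constant $c$ explicitly where the paper absorbs it loosely into the stated constants, which if anything is slightly more careful.
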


\begin{proof}
Just as for the gap-independent guarantee in \pref{thm:sqrt_reg}, we start with the general regret bound from \pref{thm:general_regbound} given by
\begin{align*}%
    \sum_{i=1}^M \regretbound_{\star}(n_{\star}(t_i))%
   + \sum_{i \in \misspidx} \frac{n_i(t_i)}{n_{\star}(t_i)} \regretbound_{\star}(n_{\star}(t_i)) + 2M
    + 
    2c \sum_{i \in \misspidx}  \left(1 + \sqrt{\frac{n_i(t_i)}{n_{\star}(t_i)}} \right)
    \sqrt{n_i(t_i) \ln\frac{M \ln T}{\delta}}~,\nonumber
\end{align*}
and bound the terms individually. We begin with
\begin{align*}
    \sum_{i=1}^M \regretbound_{\star}(n_{\star}(t_i)) + 2M
    &\leq M \regretbound_{\star}(T) + 2M 
    \leq M d_\star C \sqrt{T \ln_+(T / \delta)} + 2M,
\end{align*}
where we only used the monotonicity of regret bounds and the definition of $\regretbound_\star$. 
All remaining terms only consider misspcified learners $i \in \misspidx$. In the following, we bound the contribution from each such learner individually. We have
\begin{align*}
    &\frac{n_i(t_i)}{n_{\star}(t_i)} \regretbound_{\star}(n_{\star}(t_i)) + \left(1 + \sqrt{\frac{n_i(t_i)}{n_{\star}(t_i)}} \right)
    \sqrt{n_i(t_i) \ln\frac{M \ln T}{\delta}}\\
    &\leq C d_\star \sqrt{\frac{n_i(t_i)}{n_{\star}(t_i)}}
    \sqrt{n_i(t_i) \ln_+(n_\star(t_i) / \delta)} + \left(1 + \sqrt{\frac{n_i(t_i)}{n_{\star}(t_i)}} \right)
    \sqrt{n_i(t_i) \ln\frac{M \ln T}{\delta}}
    \notag
    \\
    & \leq C d_\star \sqrt{Z n_i(t_i) \ln_+(T / \delta)} + \left(1 + \sqrt{Z} \right)
    \sqrt{n_i(t_i) \ln\frac{M \ln T}{\delta}}\notag\\
        & \leq C d_\star \sqrt{Z n_i(t_i) \ln_+ \frac{T}{\delta}} + 2\sqrt{Z} 
    \sqrt{n_i(t_i) \ln\frac{M \ln T}{\delta}}\notag\\
        & \leq 3C d_\star \sqrt{Z n_i(t_i) \ln_+ \frac{MT}{\delta} }\notag
\end{align*}
where $Z  = 7 \left( 1 \vee \frac{d_\star^2}{d_i^2}\right)\ln_+\left( 4e   \ln\frac{T}{\delta}\right)$. Further, 
using the gap-assumption, \pref{lem:elimination_sqrt_obs} yields an upper-bound on the number of times the learner can be played
\begin{align*}
    n_i(T) 
    & \leq
\left[\frac{2C d_i}{\Delta_i} \left( 1 + 2 Z\right)\sqrt{\ln_+(MT / \delta)} \right]^{\frac{1}{\alpha - 1/2}}
    \leq
\left[\frac{6Z C d_i}{\Delta_i} \sqrt{\ln_+(MT / \delta)} \right]^{\frac{1}{\alpha - 1/2}}\\
& \leq \left[\frac{42 C}{\Delta_i} \left( d_i \vee \frac{d_\star^2}{d_i}\right)\ln_+\left( 4e   \ln\frac{T}{\delta}\right) \sqrt{\ln_+(MT / \delta)} \right]^{\frac{1}{\alpha - 1/2}}
\end{align*}
We use this upper-bound to control the term
\begin{align*}
    &3C d_\star \sqrt{Z n_i(t_i) \ln_+ \frac{MT}{\delta} }\\
    &\leq 
    9C d_\star \left(1 \vee \frac{d_\star}{d_i}\right) \ln_+\left( 4e   \ln\frac{T}{\delta}\right)^{\frac{1}{2} + \frac{1}{2\alpha - 1}}
    \left(\ln_+ \frac{MT}{\delta} \right)^{\frac{1}{2} + \frac{1/2}{2\alpha - 1}} 
    \left[\frac{42 C}{\Delta_i} \left( d_i \vee \frac{d_\star^2}{d_i}\right)  \right]^{\frac{1}{2\alpha - 1}}.
\end{align*}
Combining all bounds of individual terms yields the desired bound
\begin{align*}
\regret(T) &\leq M d_\star C \sqrt{T \ln_+(T / \delta)} + 2M \\
&\qquad+ 
    9C d_\star  \sum_{i \in \misspidx}  \ln_+\left( 4e   \ln\frac{T}{\delta}\right)^{\frac{1}{2} + \frac{1}{2\alpha - 1}}
    \left(\ln_+ \frac{MT}{\delta} \right)^{\frac{1}{2} + \frac{1/2}{2\alpha - 1}} 
    \left[\frac{42 d_i C}{\Delta_i}   \right]^{\frac{1}{2\alpha - 1}}
    \left( 1 \vee \frac{d_\star}{d_i}\right)^{1 + \frac{2}{2\alpha - 1}}
\end{align*}
\end{proof}

\subsection{Special Case with $\epsilon_i C_2 T + C_1 \sqrt{T}$ Candidate Regret Bounds}

\begin{lemma}
\label{lem:lin_sqrt_regret_bound}
Assume all base algorithms use regret bounds of the form \eqref{eqn:linT_regcand} in \pref{thm:linTmissp}. Let $i \in \Ical_{t+1}$ be an active learner and $\bestallwell \in \wellspidx$ be a well-specified learner with $\epsilon_\bestallwell \geq \epsilon_i$. Then in event $\Gcal$
\begin{align*}
    \regret_i(t) &\leq 1 + 10 \regretbound_\bestallwell(n_\bestallwell(t)) + 2 \epsilon_\bestallwell C_2 \left(1 + \frac{c}{C_1} \sqrt{\ln \frac{M \ln t}{\delta}} \right) n_i(t)
    \\
    &\qquad + 8c \sqrt{n_i(t) \ln \frac{ M \ln t}{\delta}} 
    + 8 C_1^2 + 2 C_1 \sqrt{n_i(t)} + 8 c C_1 \sqrt{\ln \frac{M \ln t}{\delta}}~. 
\end{align*}
\end{lemma}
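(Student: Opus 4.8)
The plan is to follow the template of the proof of \pref{lem:regbound_nonelim}, refined to exploit the concrete form \eqref{eqn:linT_regcand} of the candidate bounds. Since $i\in\Ical_{t+1}$, learner $i$ passed the misspecification test in round $t$, and in event $\Gcal$ the well-specified learner $\bestallwell$ is still active, so that test inequality holds with $\bestallwell$ in place of the maximizing learner. Subtracting $\mu^\star$ from both sides, invoking the definition of $\Gcal$ to pass to pseudo-regrets, multiplying through by $n_i(t)$, replacing $\regret_\bestallwell(t)$ by $\regretbound_\bestallwell(n_\bestallwell(t))$ (well-specification of $\bestallwell$) and $\ln n_i(t),\ln n_\bestallwell(t)$ by $\ln t$, I obtain verbatim as in \pref{lem:regbound_nonelim}
\begin{align*}
    \regret_i(t)\leq\regretbound_i(n_i(t))+\frac{n_i(t)}{n_\bestallwell(t)}\regretbound_\bestallwell(n_\bestallwell(t))+2c\sqrt{n_i(t)\ln\tfrac{M\ln t}{\delta}}+2c\,\frac{n_i(t)}{\sqrt{n_\bestallwell(t)}}\sqrt{\ln\tfrac{M\ln t}{\delta}}~.
\end{align*}

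Next I specialize the candidate bounds. Using $\epsilon_i\leq\epsilon_\bestallwell$ gives $\regretbound_i(n_i(t))\leq C_1\sqrt{n_i(t)}+\epsilon_\bestallwell C_2 n_i(t)$, and since $\regretbound_\bestallwell(n)/n\leq C_1/\sqrt n+\epsilon_\bestallwell C_2$ for every $n$, the second term above is at most $\epsilon_\bestallwell C_2 n_i(t)+C_1\,n_i(t)/\sqrt{n_\bestallwell(t)}$. Collecting,
\begin{align*}
    \regret_i(t)\leq C_1\sqrt{n_i(t)}+2\epsilon_\bestallwell C_2 n_i(t)+2c\sqrt{n_i(t)\ln\tfrac{M\ln t}{\delta}}+\Bigl(C_1+2c\sqrt{\ln\tfrac{M\ln t}{\delta}}\Bigr)\frac{n_i(t)}{\sqrt{n_\bestallwell(t)}}~,
\end{align*}
so that everything reduces to controlling the single play-ratio quantity $n_i(t)/\sqrt{n_\bestallwell(t)}$, together with the crude bound $\regretbound_i(n_i(t))\leq2\regretbound_\bestallwell(n_\bestallwell(t))$ from \pref{lem:balancing} (using $\regretbound_\bestallwell\geq1$, since no learner is eliminated before every learner has been played once).

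The crux is a case analysis of $n_i(t)/\sqrt{n_\bestallwell(t)}$ driven by which regime $\regretbound_i$ and $\regretbound_\bestallwell$ are in. If $\regretbound_i(n_i(t))=n_i(t)$ (linear regime for $i$), then $\regretbound_i(n_i(t))\leq2\regretbound_\bestallwell(n_\bestallwell(t))$ reads $n_i(t)\leq2n_\bestallwell(t)$, so $n_i(t)/\sqrt{n_\bestallwell(t)}\leq\sqrt2\sqrt{n_i(t)}$ and $\tfrac{n_i(t)}{n_\bestallwell(t)}\regretbound_\bestallwell\leq2\regretbound_\bestallwell$, and the claim follows at once with room to spare. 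If instead $\regretbound_i(n_i(t))=C_1\sqrt{n_i(t)}+\epsilon_i C_2 n_i(t)<n_i(t)$ — which in particular forces $n_i(t)>C_1^2$ — then dropping the nonnegative $\epsilon_i$-term in $\regretbound_i(n_i(t))\leq2\regretbound_\bestallwell(n_\bestallwell(t))$ yields $\sqrt{n_i(t)/n_\bestallwell(t)}\leq2\sqrt2+\tfrac{2\sqrt2\,\epsilon_\bestallwell C_2}{C_1}\sqrt{n_\bestallwell(t)}$, hence
\begin{align*}
    \frac{n_i(t)}{\sqrt{n_\bestallwell(t)}}=\sqrt{n_i(t)}\sqrt{\tfrac{n_i(t)}{n_\bestallwell(t)}}\leq2\sqrt2\,\sqrt{n_i(t)}+\frac{2\sqrt2\,\epsilon_\bestallwell C_2}{C_1}\sqrt{n_i(t)\,n_\bestallwell(t)}~.
\end{align*}
The stray factor $\sqrt{n_i(t)n_\bestallwell(t)}$ is disposed of by AM-GM after trading $n_\bestallwell(t)$ for $\regretbound_\bestallwell(n_\bestallwell(t))$: if $\bestallwell$ is in the $\sqrt{\cdot}+\epsilon\cdot$ regime then $\epsilon_\bestallwell C_2 n_\bestallwell(t)\leq\regretbound_\bestallwell(n_\bestallwell(t))$, whereas if $\bestallwell$ is in the linear regime then $\regretbound_\bestallwell(n_\bestallwell(t))=n_\bestallwell(t)\leq2\regretbound_i(n_i(t))\leq4n_i(t)$ using $n_i(t)>C_1^2$; in either case $\tfrac{\epsilon_\bestallwell C_2}{C_1}\sqrt{n_i(t)n_\bestallwell(t)}$, and likewise its product with the $2c\sqrt{\ln(\cdot)}$ prefactor, is absorbed into constant multiples of $\regretbound_\bestallwell(n_\bestallwell(t))$, of $\epsilon_\bestallwell C_2 n_i(t)$ and of $\tfrac{c}{C_1}\sqrt{\ln(\cdot)}\,\epsilon_\bestallwell C_2 n_i(t)$, the AM-GM slack producing the additive $C_1^2$ and $cC_1\sqrt{\ln(\cdot)}$ terms. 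Substituting back into the reduced inequality and tidying constants gives the statement.

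The main obstacle is precisely this second case: unlike the pure-$\sqrt n$ candidates of \pref{thm:poly_reg}, the play ratio $n_i(t)/n_\bestallwell(t)$ is here genuinely unbounded — it can grow linearly in $n_\bestallwell(t)$ — so the stray $\sqrt{n_\bestallwell(t)}$ has to be offset carefully against $\regretbound_\bestallwell(n_\bestallwell(t))$ and against an $\epsilon_\bestallwell C_2 n_i(t)$ term, never letting the power of $n_i(t)$ exceed one; tracking exactly which slack terms produce the $C_1^2$ and $cC_1\sqrt{\ln(M\ln t/\delta)}$ offsets and the $\tfrac{c}{C_1}\sqrt{\ln(M\ln t/\delta)}$ coefficient on $n_i(t)$ is the only delicate part, the remainder being the bookkeeping already carried out in \pref{lem:regbound_nonelim}.
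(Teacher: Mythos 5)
Your skeleton matches the paper's: both start from the intermediate inequality of \pref{lem:regbound_nonelim}, $\regret_i(t)\le \regretbound_i(n_i(t))+\tfrac{n_i(t)}{n_\bestallwell(t)}\regretbound_\bestallwell(n_\bestallwell(t))+2c\sqrt{n_i(t)\ln\tfrac{M\ln t}{\delta}}+2c\sqrt{\ln\tfrac{M\ln t}{\delta}}\,\tfrac{n_i(t)}{\sqrt{n_\bestallwell(t)}}$, and then do a case analysis over the regimes of the piecewise candidate bounds using \pref{lem:balancing} and the harmless assumption $C_2\epsilon_\bestallwell\le 1$; your linear-regime case for $i$ is fine up to constant bookkeeping. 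The gap is in the other case, exactly at the term you single out as delicate. You bound $\tfrac{n_i}{\sqrt{n_\bestallwell}}\le 2\sqrt2\,\sqrt{n_i}+\tfrac{2\sqrt2\,\epsilon_\bestallwell C_2}{C_1}\sqrt{n_i n_\bestallwell}$ and then propose to absorb $\tfrac{c\sqrt{\ln(\cdot)}\,\epsilon_\bestallwell C_2}{C_1}\sqrt{n_i n_\bestallwell}$, via AM--GM and $\epsilon_\bestallwell C_2 n_\bestallwell\le\regretbound_\bestallwell(n_\bestallwell)$, into constant multiples of $\regretbound_\bestallwell$, of $\epsilon_\bestallwell C_2 n_i$ and of $\tfrac{c}{C_1}\sqrt{\ln(\cdot)}\,\epsilon_\bestallwell C_2 n_i$ (plus additive $C_1^2$, $cC_1\sqrt{\ln(\cdot)}$ slack). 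This cannot produce the claimed coefficients: any split $\sqrt{n_i n_\bestallwell}\le\tfrac12(\gamma n_i+n_\bestallwell/\gamma)$ leaves, after the trade $\epsilon_\bestallwell C_2 n_\bestallwell\le\regretbound_\bestallwell$, either a term of order $\tfrac{c\sqrt{\ln(\cdot)}}{C_1\gamma}\regretbound_\bestallwell$ or, if you push $\gamma\gtrsim c\sqrt{\ln(\cdot)}/C_1$ to make that a constant multiple of $\regretbound_\bestallwell$, a term of order $\tfrac{c^2\ln(\cdot)}{C_1^2}\epsilon_\bestallwell C_2 n_i$, i.e.\ a full $\ln$ rather than $\sqrt{\ln}$; when $c\sqrt{\ln(M\ln t/\delta)}\gg C_1$ and $n_\bestallwell\gg n_i$ (a regime compatible with the only inequalities you invoke in this sub-case) both leftovers exceed the budget the lemma allows, namely $10\regretbound_\bestallwell$ with no log factor and $2\epsilon_\bestallwell C_2\tfrac{c}{C_1}\sqrt{\ln(\cdot)}\,n_i$, and they are not additive constants either. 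Note that in this regime the original term $2c\sqrt{\ln(\cdot)}\,n_i/\sqrt{n_\bestallwell}$ is actually small; it is your decomposition that manufactures the spurious $\sqrt{n_in_\bestallwell}$ blow-up.

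The missing ingredient is the finer split on $\bestallwell$ that the paper's proof uses (its Phase~II/Phase~III distinction). If $\sqrt{n_\bestallwell(t)}>C_1/(C_2\epsilon_\bestallwell)$, one bounds $1/\sqrt{n_\bestallwell(t)}\le C_2\epsilon_\bestallwell/C_1$ directly, so $2c\sqrt{\ln(\cdot)}\,\tfrac{n_i}{\sqrt{n_\bestallwell}}\le 2\tfrac{c}{C_1}\sqrt{\ln(\cdot)}\,\epsilon_\bestallwell C_2 n_i$ with no balancing and no AM--GM; this is precisely where the $\tfrac{c}{C_1}\sqrt{\ln(\cdot)}$ coefficient in the statement comes from. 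If instead $\sqrt{n_\bestallwell(t)}\le C_1/(C_2\epsilon_\bestallwell)$, then $\epsilon_\bestallwell C_2\sqrt{n_\bestallwell}\le C_1$, so your cross term collapses as $\tfrac{\epsilon_\bestallwell C_2}{C_1}\sqrt{n_in_\bestallwell}\le\sqrt{n_i}$ and is absorbed by the $c\sqrt{n_i\ln(\cdot)}$ and $C_1\sqrt{n_i}$ budgets, while $\regretbound_\bestallwell\le 2C_1\sqrt{n_\bestallwell}$ together with balancing yields the ratio bounds behind the $10\regretbound_\bestallwell$, $8C_1^2$, $2C_1\sqrt{n_i}$ and $8cC_1\sqrt{\ln(\cdot)}$ terms. (An alternative repair within your scheme is to also use \pref{lem:balancing} in the reverse direction, $\regretbound_\bestallwell(n_\bestallwell)\le\regretbound_i(n_i)+1$, to force $n_\bestallwell=O(n_i)$ whenever the $\epsilon$-part of $\bestallwell$'s bound dominates.) Some such additional case distinction is indispensable; the absorption you describe does not by itself establish the stated inequality.
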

\begin{proof}
First, we can assume without loss of generality that $C_2 \epsilon_\bestallwell \leq 1$ because the regret bound is vacuous otherwise.
Since $i$ is in the active set and $\bestallwell$ is well-specified, we can apply \pref{lem:regbound_nonelim} which gives
\begin{align}\label{eqn:basereg_before_case}
\regret_i(t) \leq 1 + \regretbound_\bestallwell(n_\bestallwell(t)) + 2 c \sqrt{n_i(t) \ln \frac{M \ln t}{\delta}} 
+ \frac{n_i(t)}{n_\bestallwell(t)} \regretbound_\bestallwell(n_\bestallwell(t)) + 2 c \sqrt{\frac{n_i(t)^2}{n_\bestallwell(t)} \ln \frac{M \ln t}{\delta}}~.
\end{align}
We now simplify the expression on the right hand side using the specific form of the regret bounds $\regretbound_j$. This form can be split into three phases:
\begin{align*}
    \regretbound_j(n) &= n & \textrm{for } \sqrt{n}& \leq \frac{C_1}{1 - C_2 \epsilon_j} & \textrm{Phase I}\\
    \regretbound_j(n) &\in [C_1 \sqrt{n}, 2 C_1 \sqrt{n}] & \textrm{for } \frac{C_1}{1 - C_2 \epsilon_j} & < \sqrt{n}
    \leq \frac{C_1}{C_2 \epsilon_j} & \textrm{Phase II}\\
        \regretbound_j(n) &\in [C_2 \epsilon_j n, 2 C_2 \epsilon_j n] & \textrm{for } \frac{C_1}{ C_2 \epsilon_j} &< \sqrt{n} & \textrm{Phase III}
\end{align*}
We now give a regret bound for learner $i$ based on which phase its regret bound is in.
\paragraph{Regret bound of $\boldsymbol i$ in Phase I:} 
We first consider the case where $\bestallwell$ is in Phase I.
Then the balancing condition from \pref{lem:balancing} $\regretbound_i(n_i(t)) \leq 2\regretbound_\bestallwell(n_\bestallwell(t))$ implies that $n_i(t) / n_\bestallwell(t) \leq 2$ and thus
\begin{align*}
\regret_i(t) \leq 1 + 3\regretbound_\bestallwell(n_\bestallwell(t))  + 2(1 + \sqrt{2}) c \sqrt{n_i(t) \ln \frac{M \ln t}{\delta}}.
\end{align*}
If $\bestallwell$ is in Phase II, then by the balancing condition $n_i(t) \leq 4 C_1 \sqrt{n_\bestallwell(t)}$ which implies that $\frac{n_i(t)}{\sqrt{n_\bestallwell(t)}} \leq 4 C_1$. Plugging this into \eqref{eqn:basereg_before_case} yields
\begin{align*}
\regret_i(t) &\leq 1 + \regretbound_\bestallwell(n_\bestallwell(t)) + 2 c \sqrt{n_i(t) \ln \frac{M \ln t}{\delta}} 
+ \frac{n_i(t)}{\sqrt{n_\bestallwell(t)}} 2 C_1 + 8 c C_1\sqrt{ \ln \frac{\ln t}{\delta}}\\
&\leq 1 + \regretbound_\bestallwell(n_\bestallwell(t)) + 2 c \sqrt{n_i(t) \ln \frac{M \ln t}{\delta}} 
+ 8 C_1^2 + 8 c C_1\sqrt{ \ln \frac{M \ln t}{\delta}}.
\end{align*}
If $\bestallwell$ is in Phase III, then by the balancing condition $n_i(t) \leq 4 C_2 \epsilon_\bestallwell n_\bestallwell(t)$ and, hence, $\frac{n_i(t)}{n_\bestallwell(t)} \leq 4 C_2 \epsilon_\bestallwell \leq 4$. Here, we have used that $C_2 \epsilon_\bestallwell \leq 1$ as otherwise the regret bounds hold trivially.
Plugging this into \eqref{eqn:basereg_before_case} yields
\begin{align*}
\regret_i(t) \leq 1 + 5\regretbound_\bestallwell(n_\bestallwell(t)) + 6 c \sqrt{n_i(t) \ln \frac{M \ln t}{\delta}}.
\end{align*}

\paragraph{Regret bound of $\boldsymbol i$ in Phase II:} 
We here distinguish between two cases. If $\sqrt{n_\bestallwell(t)} \leq \frac{C_1}{C_2 \epsilon_\bestallwell}$, then $\regretbound_\bestallwell(n_\bestallwell(t)) \leq 2 C_1 \sqrt{n_\bestallwell(t)}$. Then by the balancing condition $\frac{n_i(t)}{n_\bestallwell(t)} \leq 9$. Plugging this into \eqref{eqn:basereg_before_case} yields
\begin{align*}
\regret_i(t) \leq 1 + 10\regretbound_\bestallwell(n_\bestallwell(t)) + 8 c \sqrt{n_i(t) \ln \frac{M \ln t}{\delta}}.
\end{align*}
Consider now the case where $\sqrt{n_\bestallwell(t)} > \frac{C_1}{C_2 \epsilon_\bestallwell}$ and $\regretbound_\bestallwell(n_\bestallwell(t)) \leq 2 \epsilon_\bestallwell C_2 n_\bestallwell(t)$.
Here, we bound \eqref{eqn:basereg_before_case} directly as
\begin{align*}
\regret_i(t) & \leq 1 + 2 \epsilon_\bestallwell C_2 (n_\bestallwell(t) + n_i(t)) + 2 c \sqrt{n_i(t) \ln \frac{M \ln t}{\delta}} 
 + 2 c \frac{C_2 \epsilon_\bestallwell}{C_1} n_i(t) \sqrt{\ln \frac{\ln t}{\delta}}\\
 &\leq 1 + 2 \epsilon_\bestallwell C_2 \left(n_\bestallwell(t) + n_i(t) + \frac{c\sqrt{\ln \frac{M \ln t}{\delta}}}{C_1}n_i(t)\right) + 2 c \sqrt{n_i(t) \ln \frac{M \ln t}{\delta}}.
\end{align*}

\paragraph{Regret bound of $\boldsymbol i$ in Phase III:} 
First, consider the case where $\sqrt{n_\bestallwell(t)} > \frac{C_1}{C_2 \epsilon_\bestallwell}$. Then we can directly write $\frac{n_i(t)}{n_\bestallwell(t)} \regretbound_\bestallwell(n_\bestallwell(t)) = \epsilon_\bestallwell C_2 n_i(t)$ and bound $1 / \sqrt{n_\bestallwell(t)} \leq \frac{C_2 \epsilon_\bestallwell}{C_1}$. Plugging this into \eqref{eqn:basereg_before_case} yields
\begin{align*}
\regret_i(t) \leq 1 + \regretbound_\bestallwell(n_\bestallwell(t)) + \epsilon_\bestallwell C_2 n_i(t) + 2 c \sqrt{n_i(t) \ln \frac{M \ln t}{\delta}} + \frac{C_2 \epsilon_\bestallwell}{C_1}2 c \sqrt{\ln \frac{M \ln t}{\delta}}n_i(t).
\end{align*}
It remains to bound the regret when $\sqrt{n_\bestallwell(t)} \leq \frac{C_1}{C_2 \epsilon_\bestallwell}$. Since $i$ is in Phase III, we also have $\sqrt{n_i(t)} > \frac{C_1}{C_2 \epsilon_i} \geq \frac{C_1}{C_2 \epsilon_\bestallwell}$. The balancing condition yields $\epsilon_i C_2 n_i(t) \leq 4 C_1 \sqrt{n_\bestallwell(t)}$ and thus 
\begin{align*}
    \frac{n_i(t)}{\sqrt{n_\bestallwell(t)}} \leq \frac{4 C_1}{C_2 \epsilon_i} \leq \sqrt{n_i(t)}.
\end{align*}
Plugging this into \eqref{eqn:basereg_before_case} yields
\begin{align*}
\regret_i(t) &\leq 1 + \regretbound_\bestallwell(n_\bestallwell(t)) + 4 c \sqrt{n_i(t) \ln \frac{M \ln t}{\delta}} 
+ \frac{n_i(t)}{n_\bestallwell(t)} 2 C_1 \sqrt{n_\bestallwell(t)}\\
&\leq 1 + \regretbound_\bestallwell(n_\bestallwell(t)) + 4 c \sqrt{n_i(t) \ln \frac{M \ln t}{\delta}} 
+ 2 C_1 \sqrt{n_i(t)}.
\end{align*}
\end{proof}
\section{Proofs for Setting with Adversarial Contexts}

\subsection{Epoch Balancing Termination (Proof of \pref{thm:adv_epoch_balancing_termination})}
\advepochnonterminate*
\begin{proof}
Since all learners are well-specified and their lower-confidence bounds $L_{t,i}$ satisfy $L_{t, i} \leq \EE[r_t |  a_{t, i}, x_t] \leq \mu^\star_k$, the right-hand side of the misspecification test satisfies
\begin{align*}
    \max_{j \in \Ical} \sum_{k = 1}^t B_{k, j} \leq \sum_{k=1}^t \mu^\star_k.
\end{align*}
for all $t \in \NN$
Further, with probability at least $1 - \delta$, by \pref{lem:test_LHS}, the left-hand side of the misspecification test satisfies for all $t \in \NN$
\begin{align*}
\sum_{i \in \Ical} [U_i(t) + \regretbound_i(n_i(t))] + c \sqrt{t \ln\frac{\ln(t)} {\delta}}
\geq
\sum_{k=1}^t \mu^\star_k.
\end{align*}
Thus, the misspecification test never triggers and \pref{alg:adversarial_epoch_balancing} does not terminate.
\end{proof}

\begin{lemma}\label{lem:mu_adv_conc}
Let $\delta \in (0,1)$ and consider the event
\begin{align*}
  \Gcal = \left\{ \forall t \in \NN \colon   \left| \sum_{i \in \Ical} U_i(t) - \sum_{k = 1}^t \EE[r_k | a_k, x_k]\right|  \leq  c \sqrt{t \ln\frac{\ln(t)} {\delta}} \right\}.    
\end{align*}
where $c > 0$ is an absolute constant. Then $\PP(\Gcal) \geq 1 - \delta$.
\end{lemma}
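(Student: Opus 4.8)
The plan is to reduce the claim to a single time-uniform martingale concentration inequality, exactly as in the first half of the proof of \pref{lem:mustar_conc}. First I would observe that within one execution of \pref{alg:adversarial_epoch_balancing} (one epoch) precisely one active learner $i_t \in \Ical$ is played in each round, so $\sum_{i \in \Ical} U_i(t) = \sum_{k=1}^t r_k$ for every $t$. Consequently the quantity controlled in $\Gcal$ is the partial sum $S_t = \sum_{k=1}^t X_k$ of the increments $X_k = r_k - \EE[r_k \mid a_k, x_k]$. Letting $\Fcal_k$ be the sigma-field generated by $(x_j, i_j, a_j, r_j)_{j \le k}$, the sequence $(X_k)$ is a martingale-difference sequence adapted to $(\Fcal_k)$: each $X_k$ is conditionally mean zero and conditionally sub-Gaussian with a fixed parameter (in the basic setup $r_k$ and its conditional mean lie in $[0,1]$, so $X_k \in [-1,1]$; in the linear case $X_k = \xi_k$ is $\sigma$-subgaussian).

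Given this, I would apply the same Hoeffding-style / sub-Gaussian time-uniform boundary of \citet{howard2018uniform} invoked in \pref{lem:mustar_conc}: the process $S_t$ is sub-$\psi_N$ with variance process $V_t = t/4$ (or $\sigma^2 t$), and the boundary choice in Equation~(11) of that paper gives, with probability at least $1 - \delta/2$, $S_t \le 1.7\sqrt{V_t\left(\ln\ln(2 V_t) + 0.72\ln(10.4/\delta)\right)}$ simultaneously for all $t$ with $V_t \ge 1$. Applying the identical argument to $-S_t$ and taking a union bound over the two events yields $|S_t| \le 1.7\sqrt{V_t\left(\ln\ln(2 V_t) + 0.72\ln(10.4/\delta)\right)}$ for all such $t$, with probability at least $1-\delta$. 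For the finitely many small rounds $t$ with $V_t < 1$ I would just use the deterministic bound $|S_t| \le t$. Choosing the absolute constant $c$ large enough that $c\sqrt{t\,\ln(\ln(t)/\delta)}$ dominates both the Howard--Ramdas--McAuliffe boundary (absorbing the $\ln\ln$ factor and all numerical constants) and the trivial bound on the small rounds completes the proof, and $c$ can be taken to be the same universal constant as in \pref{lem:mustar_conc}.

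Unlike \pref{lem:mustar_conc}, no union bound over base learners is needed here because $\Gcal$ concerns a single martingale, so there is no $\ln M$ term; the only bookkeeping worth attention is the identity $\sum_{i\in\Ical} U_i(t) = \sum_{k=1}^t r_k$ and the fact that the boundary of \citet{howard2018uniform} is genuinely anytime, so one application delivers the bound for all $t \in \NN$. I do not expect any real obstacle: the argument is essentially a verbatim specialization of the reward-noise part of the proof of \pref{lem:mustar_conc}.
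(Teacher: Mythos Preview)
Your proposal is correct and essentially identical to the paper's proof: both reduce to the identity $\sum_{i\in\Ical}U_i(t)=\sum_{k=1}^t r_k$, apply the sub-$\psi_N$ Hoeffding-style time-uniform boundary of \citet{howard2018uniform} with variance process $V_t=t/4$ to the martingale differences $X_k=r_k-\EE[r_k\mid a_k,x_k]$, and symmetrize via $-S_t$ to get the two-sided bound. Your observation that no union bound over learners is needed here (hence no $\ln M$) is also exactly right.
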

\begin{proof}
Let $\Fcal_t = \sigma(x_1, i_1, a_1, r_1, \dots, x_{t-1}, i_{t-1}, a_{t-1}, r_{t-1}, x_{t-1}, i_{t-1}, a_{t-1})$ be the sigma-field induced by all variables up to the reward at round $t$. Hence, $X_k = r_k - \EE[r_k | a_k, x_k]$ is a martingale-difference sequence w.r.t. $\Fcal_k$. We will now apply a Hoeffding-style uniform concentration bound from \citet{howard2018uniform}.
Using the terminology and definition in this article, by case Hoeffding~I in Table~4, the process  $S_k = \sum_{j=1}^k X_k$ is sub-$\psi_N$ with variance process $V_k = k / 4$.
Thus by using the boundary choice in Equation~(11) of \citet{howard2018uniform}, we get
\begin{align*}
    S_k & \leq 1.7 \sqrt{V_k \left( \ln \ln(8 V_k) + 0.72 \ln(5.2 / \delta) \right) }\\
    & = 0.85\sqrt{k\left( \ln \ln(4k) + 0.72 \ln(5.2 / \delta)\right)}
\end{align*}
for all $k$ with probability at least $1 - \delta$.
Applying the same argument to $-S_k$ gives that
\begin{align*}
    \left| \sum_{k=1}^t \left(r_k - \EE[ r_k | a_k, x_k]\right)\right|
    \leq 0.85\sqrt{t\left( \ln \ln(4t) + 0.72 \ln(10.4 / \delta)\right)}
\end{align*}
holds with probability at least $1 - \delta$ for all $t$. Since $\sum_{i \in \Ical} U_i(t) = \sum_{k=1}^t r_k$, the statement follows. Note that this concentration argument holds for all $t$ uniformly and therefore also when $t$ is random.
\end{proof}

\begin{lemma}[Upper-confidence bound on optimal reward]
\label{lem:test_LHS}
In event $\Gcal$ from \pref{lem:mu_adv_conc}, the following holds.
If at time $t$ all learners $i \in \Ical$ are well-specified, then the left-hand side in the misspecification test of \pref{alg:adversarial_epoch_balancing} is a lower-bound on the optimal rewards, i.e., 
\begin{align*}
\sum_{i \in \Ical} [U_i(t) + \regretbound_i(n_i(t))] + c \sqrt{t \ln\frac{\ln(t)} {\delta}}
\geq
\sum_{k=1}^t \mu^\star_k.
\end{align*}
\end{lemma}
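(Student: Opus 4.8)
The plan is to reproduce the informal chain of inequalities stated just before the lemma, being careful about measurability. First I would invoke the concentration event $\Gcal$ from \pref{lem:mu_adv_conc}, which guarantees $\sum_{i\in\Ical}U_i(t)\ge\sum_{k=1}^t\EE[r_k\mid a_k,x_k]-c\sqrt{t\ln\tfrac{\ln t}{\delta}}$ for all $t$ simultaneously; since $\Gcal$ is a uniform-over-$t$ event, this remains valid even when $t$ is a random stopping time (as is the case within an epoch). Substituting this lower bound into the left-hand side of the misspecification test and cancelling the $c\sqrt{\cdot}$ term yields
\[
\sum_{i\in\Ical}\bigl[U_i(t)+\regretbound_i(n_i(t))\bigr]+c\sqrt{t\ln\tfrac{\ln t}{\delta}}\ \ge\ \sum_{k=1}^t\EE[r_k\mid a_k,x_k]+\sum_{i\in\Ical}\regretbound_i(n_i(t)).
\]

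Next I would decompose the reward sum over the active learners. Since the active set $\Ical$ is held fixed throughout the epoch, every round $k\le t$ was played by exactly one learner $i\in\Ical$, so $\{T_i(t)\}_{i\in\Ical}$ partitions $[t]$ and $\sum_{k=1}^t\EE[r_k\mid a_k,x_k]=\sum_{i\in\Ical}\sum_{k\in T_i(t)}\EE[r_k\mid a_k,x_k]$. For each $i\in\Ical$, using the hypothesis that $i$ is well-specified at time $t$, i.e.\ $\regret_i(t)\le\regretbound_i(n_i(t))$, together with the definition $\regret_i(t)=\sum_{k\in T_i(t)}\bigl(\mu^\star_k-\EE[r_k\mid a_k,x_k]\bigr)$, gives $\sum_{k\in T_i(t)}\EE[r_k\mid a_k,x_k]+\regretbound_i(n_i(t))\ge\sum_{k\in T_i(t)}\mu^\star_k$. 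Summing this inequality over $i\in\Ical$ and combining with the previous display yields $\sum_{i\in\Ical}[U_i(t)+\regretbound_i(n_i(t))]+c\sqrt{t\ln\tfrac{\ln t}{\delta}}\ge\sum_{k=1}^t\mu^\star_k$, which is the claim.

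There is no real analytical obstacle here: the argument is a one-line rearrangement plus the per-learner regret bound. The only point deserving care is the interplay of the two hypotheses — the concentration event $\Gcal$ must hold uniformly in $t$ so that the conclusion is valid at the (random) termination time of the epoch, and ``well-specified at time $t$'' must be read as $\regret_i(t)\le\regretbound_i(n_i(t))$ for this particular $t$. The lower-confidence-bound part of the well-specification assumption is not needed for this lemma; it is used instead for the complementary inequality $\max_{j\in\Ical}\sum_{k=1}^t B_{k,j}\le\sum_{k=1}^t\mu^\star_k$ that appears alongside this lemma in the proof of \pref{thm:adv_epoch_balancing_termination}.
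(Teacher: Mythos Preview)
Your proposal is correct and follows essentially the same approach as the paper: invoke the concentration event $\Gcal$ to replace $\sum_{i\in\Ical}U_i(t)$ by $\sum_{k=1}^t\EE[r_k\mid a_k,x_k]$ up to the $c\sqrt{\cdot}$ term, then use the well-specification hypothesis $\regret_i(t)\le\regretbound_i(n_i(t))$ together with the partition $\{T_i(t)\}_{i\in\Ical}$ of $[t]$ and the definition of $\regret_i$ to conclude. Your additional remarks about uniformity of $\Gcal$ over $t$ and about which part of well-specification is actually used are accurate and helpful, but the underlying argument is the same.
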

\begin{proof}
By \pref{lem:mu_adv_conc}, in the considered event, we have
\begin{align*}
&\sum_{i \in \Ical} [U_i(t) + \regretbound_i(n_i(t))] + c \sqrt{t \ln\frac{\ln(t)} {\delta}}\\
&\geq \sum_{i \in \Ical} \regretbound_i(n_i(t)) + \sum_{k = 1}^t \EE[r_k | a_k, x_k] &\textrm{(by \pref{lem:mu_adv_conc})}\\
&\geq \sum_{i \in \Ical} \regret_i(t) + \sum_{k = 1}^t \EE[r_k | a_k, x_k] &\textrm{(each learner is well-specified)}\\
&=\sum_{i \in \Ical} \left[ \regret_i(t) + \sum_{k \in T_i(t)} \EE[r_k | a_k, x_k] \right]
\\
&=\sum_{i \in \Ical}  \sum_{k \in T_i(t)} \mu^\star_k = \sum_{k=1}^t \mu^\star_k.& \textrm{(by definition of regret)}
\end{align*}
\end{proof}

\subsection{Regret Bound for Epoch Balancing (Proof of \pref{thm:adv_epoch_balancing_regret})}
\advepochregret*
\begin{proof}
We apply \pref{thm:adv_general_regret} which immediately yields the desired bound
\begin{align*}
    \regret(t) \leq \tilde O 
    \left( 
    \left(d_\star + \sqrt{d_\star}S_\star  + |\Ical| \right)
    \textcolor{DarkGreen}{(d_\star + \sqrt{d_\star}S_\star)\sqrt{t}}
    \right)\ .
\end{align*}
\end{proof}

\advepochregretls*
\begin{proof}
We apply \pref{thm:adv_general_regret} which yields
\begin{align*}
    \regret(t) \leq \tilde O 
    \left( 
    \left(d_\star L_\star  + \sqrt{d_\star}S_\star L_\star  + |\Ical| \right)
    \textcolor{DarkGreen}{(d_\star + \sqrt{d_\star}S_\star)L_\star \sqrt{t}} + \sum_{i \in \Ical} L_i S_i \ln \ln (t)
    \right)\ .
\end{align*}
\end{proof}

\begin{theorem}[General Regret Bound of Epoch Balancing]
\label{thm:adv_general_regret}
Assume that \pref{alg:adversarial_epoch_balancing} is run with instances of OFUL as base learners which use different dimensions $d_i, S_i, L_i, R^{\max}_i$ and regularization parameter $\lambda = 1$. Denote by $\star$ the index of the base learner so that all base learners $j \in \Ical$ with $d_j \geq d_\star$ are well-specified and their elliptical confidence sets always contain the true parameter. Then, with probability at least $1 - 2 \delta$, the regret is bounded for all rounds $t$ as
\begin{align*}
    \regret(t) & \leq (|\Ical| \sqrt{z_\star} + z_\star \sqrt{\bar M}) x(t) \sqrt{t}  + 8.12 \sum_{i \in \Ical} R^{\max}_i \ln \frac{5.2|\Ical|\ln \left( 2 t\right)}{\delta}+ 
 2 c \sqrt{t \ln\frac{\ln(t)} {\delta}}
 \\
 &\leq 
 \sqrt{(d_\star^2 + d_\star S_\star^2)}|\Ical|\left( R^{\max}_\star \wedge L_\star\right)\sqrt{t}(2 + 2c )x(t)\\
 &\qquad + (d_\star^2 + d_\star S_\star^2) \left( R^{\max}_\star \wedge L_\star\right)^2 \sqrt{\bar M t}
 (2 + 2c )x(t)\\
 &\qquad +
 8.12 \sum_{i \in \Ical} R^{\max}_i \ln \frac{5.2|\Ical|\ln \left( 2 t\right)}{\delta},
\end{align*}
where $\bar M = |\Ical|$ for general $z_i$ and $\bar M = 2$ when $z_i$ are exponentially increasing (i.e., $2z_i \leq z_{i+1}$ for all $i \in \Ical$). Here $x(t) = O(\ln\frac{tL_{\max} }{\delta} + \ln \ln (R_{\max}^{\max}t \wedge L_{\max}t)$
\end{theorem}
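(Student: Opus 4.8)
The plan is to reduce this general bound to two ingredients that are already available: the fact that \pref{alg:adversarial_epoch_balancing} has not returned (so its misspecification test was false at every round up to $t$), and the martingale concentration of realized rewards in \pref{lem:mu_adv_conc}. Write $\regret(t)=\sum_{k=1}^t\bigl(\mu^\star_k-\EE[r_k\mid a_k,x_k]\bigr)$. First I would upper-bound $\sum_{k=1}^t\mu^\star_k$: since the test has not triggered and $\star\in\Ical$, the test inequality gives $\sum_{i\in\Ical}[U_i(t)+\regretbound_i(n_i(t))]+c\sqrt{t\ln\frac{\ln t}{\delta}}\ge\max_{i\in\Ical}\sum_{k=1}^tB_{k,i}\ge\sum_{k=1}^tB_{k,\star}$. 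For the well-specified OFUL learner $\star$ whose ellipsoid always contains $\theta_\star$, the definition of $a_{k,\star}$ and optimism give $\langle\widehat\theta_{k,\star},a_{k,\star}\rangle+\beta_{k,\star}\|a_{k,\star}\|_{\Sigma_{k,\star}^{-1}}\ge\mu^\star_k$, hence $B_{k,\star}\ge\mu^\star_k-2\bigl(\beta_{k,\star}\|a_{k,\star}\|_{\Sigma_{k,\star}^{-1}}\wedge R^{\max}_\star\bigr)$ (the clip at $-R^{\max}_\star$ only helps, since $\mu^\star_k\le R^{\max}_\star$). Substituting, and then using $\sum_{i\in\Ical}U_i(t)=\sum_{k\le t}r_k\le\sum_{k\le t}\EE[r_k\mid a_k,x_k]+c\sqrt{t\ln\frac{\ln t}{\delta}}$ from \pref{lem:mu_adv_conc}, the reward sums cancel and leave
\[
\regret(t)\ \le\ \sum_{i\in\Ical}\regretbound_i(n_i(t))\ +\ 2c\sqrt{t\ln\tfrac{\ln t}{\delta}}\ +\ 2\sum_{k=1}^t\bigl(\beta_{k,\star}\|a_{k,\star}\|_{\Sigma_{k,\star}^{-1}}\wedge R^{\max}_\star\bigr).
\]

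Next I would bound the two non-trivial sums. For $\sum_{i\in\Ical}\regretbound_i(n_i(t))$, recall $\regretbound_i(n_i(t))=2\sum_{k\in T_i(t)}(\beta_{k,i}\|a_{k,i}\|_{\Sigma_{k,i}^{-1}}\wedge R^{\max}_i)$; Cauchy--Schwarz together with the deterministic elliptical potential bound (as in \pref{lem:LinUCB_regret_guarantee}) over the rounds $i$ is actually played gives $\regretbound_i(n_i(t))\lesssim\beta_{\max,i}\sqrt{d_i\,n_i(t)}\,x(t)+R^{\max}_i\,\mathrm{polylog}\lesssim\sqrt{z_i\,n_i(t)}\,x(t)+R^{\max}_i\,\mathrm{polylog}$. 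To turn this into a bound involving $z_\star$ only, note that within an epoch $\one\{i_k=i\}$ has constant conditional mean $p_i=(1/z_i)/\sum_{j\in\Ical}(1/z_j)$, so a Bernstein/Freedman bound gives $n_i(t)\le tp_i+\mathrm{l.o.}$, hence $z_i\,n_i(t)\le t\,z_ip_i+\mathrm{l.o.}=t/\sum_{j\in\Ical}(1/z_j)+\mathrm{l.o.}\le t\,z_\star+\mathrm{l.o.}$ since $1/z_\star$ is one of the summands. Summing over $i\in\Ical$ yields $\sum_{i\in\Ical}\regretbound_i(n_i(t))\lesssim|\Ical|\sqrt{z_\star\,t}\,x(t)+\sum_{i\in\Ical}R^{\max}_i\,\mathrm{polylog}$, which supplies the $|\Ical|\sqrt{z_\star}\,x(t)\sqrt t$ and $\sum_iR^{\max}_i\ln(\cdots)$ terms of the theorem.

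The main obstacle is the last sum $\sum_{k=1}^t(\beta_{k,\star}\|a_{k,\star}\|_{\Sigma_{k,\star}^{-1}}\wedge R^{\max}_\star)$: it ranges over all $t$ rounds, but $\star$'s covariance $\Sigma_{k,\star}$ only advances on the (rare) rounds where $\star$ is selected, so the ordinary elliptical potential lemma does not apply. This is exactly what the randomized elliptical potential lemma \pref{lem:elliptical_random} is for: using that $\star$ is sampled with probability $p_\star$ at each round, it yields, with probability at least $1-\delta$, $\sum_{k=1}^t(\|a_{k,\star}\|_{\Sigma_{k,\star}^{-1}}^2\wedge R^{\max}_\star)\lesssim\frac1{p_\star}\bigl(d_\star\ln(\cdots)+R^{\max}_\star\ln\tfrac1\delta\bigr)$, its proof pairing the deterministic potential bound on the played subset with a Freedman-type martingale step (which can also be used to supply the $n_i(t)$ counts above, keeping the failure probability at $2\delta$). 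Cauchy--Schwarz then gives $\sum_k(\beta_{k,\star}\|a_{k,\star}\|_{\Sigma_{k,\star}^{-1}}\wedge R^{\max}_\star)\le\beta_{\max,\star}\sqrt{t\sum_k(\|a_{k,\star}\|_{\Sigma_{k,\star}^{-1}}^2\wedge R^{\max}_\star)}\lesssim\beta_{\max,\star}\sqrt{\tfrac{t}{p_\star}d_\star}\,\mathrm{polylog}$, and since $1/p_\star=z_\star\sum_{j\in\Ical}(1/z_j)\le z_\star\bar M$ — with $\bar M=|\Ical|$ in general, and $\bar M=2$ when the $z_j$ double, using $z_j\ge1$ — while $\beta_{\max,\star}\sqrt{d_\star}$ combined with the problem constants is $\sqrt{z_\star}\,\mathrm{polylog}$, this term is $\lesssim z_\star\sqrt{\bar M\,t}\,x(t)$. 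Assembling the three pieces under a union bound over the events of \pref{lem:mu_adv_conc} and \pref{lem:elliptical_random} gives the first displayed inequality (the single round where the test finally fires contributes at most $2R^{\max}_\star$, absorbed into $\sum_iR^{\max}_i$); the second displayed inequality follows by plugging in $\beta_{\max,\star}\lesssim\sqrt{d_\star\ln(\cdots)}+S_\star$ and the definition $z_\star=(d_\star^2+d_\star S_\star^2)(R^{\max}_\star\wedge L_\star^2)$.
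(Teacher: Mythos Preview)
Your proposal is correct and follows essentially the same approach as the paper: the same three-term decomposition $\regret(t)\le \sum_{k}[\mu^\star_k-B_{k,\star}]+\sum_{i}\regretbound_i(n_i(t))+2c\sqrt{t\ln(\ln t/\delta)}$ obtained from the untriggered test plus \pref{lem:mu_adv_conc}; the same use of the standard elliptical potential together with a Bernstein bound on $n_i(t)$ (the paper packages this as \pref{lem:regret_bounds_balanced_prob} and \pref{lem:playcount_ub}) to get $\sum_i\regretbound_i(n_i(t))\lesssim|\Ical|\sqrt{z_\star t}\,x(t)$; and the same application of the randomized elliptical potential \pref{lem:elliptical_random} to handle the $\star$-term over all rounds, followed by $1/p_\star=z_\star\sum_j 1/z_j\le z_\star\bar M$. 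The only cosmetic slip is in your Cauchy--Schwarz display, where the cap inside the squared sum should be $b_\star^2=(R^{\max}_\star/\beta_{t,\star})^2\wedge L_\star^2/\lambda$ rather than $R^{\max}_\star$; this does not affect the argument.
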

\begin{proof}
Since learner $i_\star$ is well-specified and its elliptical confidence set contains $\theta^\star$, it holds that 
\begin{align*}
    \sum_{k=1}^t \mu^\star_k \leq \sum_{k=1}^t \max_{a \in \Acal_k} \left[ \langle  \wh \theta_{k, \star}, a \rangle + \beta_{k, \star} \|a\|_{\Sigma_{k, \star}^{-1}} \right]
    = \sum_{k=1}^t  \langle  \wh \theta_{k, \star}, a_{k, \star} \rangle + \beta_{k, \star} \|a_{k, \star}\|_{\Sigma_{k, \star}^{-1}}.
\end{align*}
Thus, we can write the total regret up to round $t$ as
\begin{align*}
    \regret(t) &=
    \sum_{k=1}^t \left[ \mu^\star_k - \EE[ r_k | a_k, x_k] \right]
    = \sum_{k=1}^t \mu^\star_k - \sum_{k=1}^t \EE[ r_k | a_k, x_k] \\
    & \leq \sum_{k=1}^t \mu^\star_k - \sum_{i \in \Ical} U_i(n_i(t)) + c \sqrt{t \ln\frac{\ln(t)} {\delta}},
\end{align*}
where the inequality holds in event $\Gcal$ of \pref{lem:mu_adv_conc}. If \pref{alg:adversarial_epoch_balancing} does not stop in iteration $t$, then the misspecification test does not trigger for any learner, and in particular for learner $i_\star$. This implies that
\begin{align*}
    \sum_{i \in \Ical} [U_i(t) + \regretbound_i(n_i(t))] + c \sqrt{t \ln\frac{\ln(t)} {\delta}}
    \geq 
    \sum_{k=1}^t B_{k, \star} %
\end{align*}
Rearranging terms and plugging this inequality back into the regret bound from above yields
\begin{align}
\regret(t) &
\leq \sum_{k=1}^t \left[\mu^\star_k - B_{k, \star}\right] + \sum_{i \in \Ical} R_i(n_i(t)) + 2 c \sqrt{t \ln\frac{\ln(t)} {\delta}}
     \label{eqn:regretbound_adv_epoch1}
\end{align}
We bound the first term in \pref{eqn:regretbound_adv_epoch1} as 
\begin{align*}
    &\sum_{k=1}^t \left[\mu^\star_k - B_{k, \star}\right]
    \\&\overset{(i)}{\leq}  \sum_{k=1}^t \left[R^{\max}_\star \wedge (\langle  \wh \theta_{k, \star}, a_{k, \star} \rangle +  \beta_{k, \star} \|a_{k, \star}\|_{\Sigma_{k, \star}^{-1}}) - (-R^{\max}_\star \vee (\langle  \wh \theta_{k, \star}, a_{k, \star} \rangle -  \beta_{k, \star} \|a_{k, \star}\|_{\Sigma_{k, \star}^{-1}}))\right]\\
    &\leq \sum_{k=1}^t \left[2R^{\max}_\star \wedge 2\beta_{k, \star} \|a_{k, \star}\|_{\Sigma_{k, \star}^{-1}}\right]
    \leq 2 \beta_{t, \star} \sum_{k=1}^t \left[\frac{R^{\max}_\star}{\beta_{t, \star}} \wedge \|a_{k, \star}\|_{\Sigma_{k, \star}^{-1}}\right]\\
    &\overset{(ii)}{\leq} 2 \beta_{t, \star} \sqrt{t\sum_{k=1}^t \left[\left(\frac{R^{\max}_\star}{\beta_{t, \star}}\right)^2 \wedge \frac{L^2}{\lambda_i}\wedge \|a_{k, \star}\|^2_{\Sigma_{k, \star}^{-1}}\right]}
\end{align*}
where $(i)$ follows from the definition of $B_{k, i}$ and the fact that the ellipsoid confidence set of $\star$ contain the true parameter and $(ii)$ applies the Cauchy-Schwarz inequality. We now apply a randomized version of the elliptical potential lemma which we prove in \pref{lem:elliptical_random}. This yields
\begin{align*}
    \sum_{k=1}^t \left[\mu^\star_k - B_{\star, k}\right]
    &\leq
    4 \beta_{t, \star} \sqrt{
    \frac{t}{p_\star}(1 + b_\star^2) \ln \frac{5.2\ln(2b_\star^2 t \vee 2) \det \Sigma_{t, \star}}{\delta \det \Sigma_{0, \star}}
    }\\
    &\leq
    4 \beta_{t, \star} \sqrt{
    \frac{t d_\star}{p_\star}(1 + b_\star^2) \ln \frac{5.2\ln(2b_\star^2 t \vee 2) (d_\star \lambda_\star + t L_\star^2)}{\delta d_\star \lambda_\star}
    }
\end{align*}
where $b_\star = \frac{R^{\max}_\star}{\beta_{t, \star}} \wedge \frac{L_\star}{\sqrt{\lambda_\star}}$.
For the second term in \pref{eqn:regretbound_adv_epoch1}, we apply
 \pref{lem:regret_bounds_balanced_prob} with $\alpha = \delta$ as
 \begin{align*}
\sum_{i \in \Ical} R_i(n_i(t))
& \leq 
    8.12 \sum_{i \in \Ical} R^{\max}_i \ln \frac{5.2|\Ical|\ln \left( 2 t\right)}{\delta} %
 + 2 \sum_{i \in \Ical} \beta_{t, i} \sqrt{3 d_i p_i t \left( 1 + b_i^2\right) \ln \frac{d_i \lambda_i +  t p_i L^2_i }{d_i \lambda_i}}.
\end{align*}
Combining the terms for both bounds, we arrive at the regret bound
\begin{align*}
    \regret(t) &
\leq 
4 \beta_{t, \star} \sqrt{
    \frac{t d_\star}{p_\star}(1 + b_\star^2) \ln \frac{5.2\ln(2b_\star^2 t \vee 2) (d_\star \lambda_\star + t L_\star^2)}{\delta d_\star \lambda_\star}
    }\\
    & \qquad + 2 \sum_{i \in \Ical} \beta_{t, i} \sqrt{3 d_i p_i t \left( 1 + b_i^2\right) \ln \frac{d_i \lambda_i +  t p_i L^2_i }{d_i \lambda_i}}
    \\
    & \qquad + 
    8.12 \sum_{i \in \Ical} R^{\max}_i \ln \frac{5.2|\Ical|\ln \left( 2 t\right)}{\delta}
    + 
 2 c \sqrt{t \ln\frac{\ln(t)} {\delta}}\\
 & \leq x(t) \sqrt{\frac{z_\star t}{p_\star}} + x \sum_{i \in \Ical}\sqrt{z_i p_i t} 
    + 
    8.12 \sum_{i \in \Ical} R^{\max}_i \ln \frac{5.2|\Ical|\ln \left( 2 t\right)}{\delta}+ 
 2 c \sqrt{t \ln\frac{\ln(t)} {\delta}}
\end{align*}
where
\begin{align*}
z_i &= ( \sigma^2 d_i + \lambda_i S_i^2) d_i (1 + b_i^2) \leq 2 (d_i^2 + d_i S_i^2) \left( R^{\max}_i \wedge L_i\right)^2
& \textrm{and}\\
x(t) &= 12 \max_{i \in \Ical}\sqrt{\ln \left(\frac{1 + t L_i^2 / \lambda_i}{\delta} \right) \ln \frac{5.2\ln(2b_i^2 t \vee 2) (d_i \lambda_i + t L_i^2)}{\delta d_i \lambda_i}}\\
&\leq 12 \max_{i \in \Ical}\sqrt{\ln \left(\frac{1 + t L_i^2}{\delta} \right) \ln \frac{10.4\ln(2\left( R^{\max}_i \wedge L_i\right) t) (1 + t L_i^2)}{\delta}}\\
&\leq
12 \ln \frac{10.4 (1 + t L_{\max}^2)\ln(2\left( R^{\max}_{\max} \wedge L_{\max}\right) t)}{\delta}.
\end{align*}
We now use the definition of $p_i \propto \frac{1}{z_i}$ 
and bound
\begin{align*}
     \sum_{i \in \Ical}\sqrt{z_i p_i} 
     = \sum_{i \in \Ical} \sqrt{\frac{1}{\sum_{i \in \Ical} z_i^{-1}}}
     = \frac{|\Ical|}{\sqrt{\sum_{i \in \Ical} z_i^{-1}}}
     \leq \frac{|\Ical|}{\sqrt{z_\star^{-1}}} = |\Ical| \sqrt{z_\star}
\end{align*}
where the inequality uses the fact that $\star \in \Ical$. Further
\begin{align*}
    \sqrt{\frac{z_\star}{p_\star}} 
    = z_\star \sqrt{\sum_{i \in \Ical} \frac{1}{z_i} } \leq z_\star \sqrt{|\Ical|}
\end{align*}
holds for any $z_i$ but if we know that $z_1 \leq 2 z_2 \leq 4 z_4 \dots M z_M$, then 
\begin{align*}
    \sqrt{\frac{z_\star}{p_\star}} 
    = z_\star \sqrt{\sum_{i \in \Ical} \frac{1}{z_i} } \leq 2z_\star.
\end{align*}
Thus, we can bound the total regret as
\begin{align*}
    \regret(t) & \leq (|\Ical| \sqrt{z_\star} + z_\star \sqrt{\bar M}) x(t) \sqrt{t}  + 8.12 \sum_{i \in \Ical} R^{\max}_i \ln \frac{5.2|\Ical|\ln \left( 2 t\right)}{\delta}+ 
 2 c \sqrt{t \ln\frac{\ln(t)} {\delta}}
 \\
 &\leq 
 \sqrt{(d_\star^2 + d_\star S_\star^2)}|\Ical|\left( R^{\max}_\star \wedge L_\star\right)\sqrt{t}(2 + 2c )x(t)\\
 &\qquad + (d_\star^2 + d_\star S_\star^2) \left( R^{\max}_\star \wedge L_\star\right)^2 \sqrt{\bar M t}
 (2 + 2c )x(t)\\
 &\qquad +
 8.12 \sum_{i \in \Ical} R^{\max}_i \ln \frac{5.2|\Ical|\ln \left( 2 t\right)}{\delta},
\end{align*}
where $\bar M = |\Ical|$ for general $z_i$ and $\bar M = 2$ when $z_i$ are exponentially increasing.
Note that since this bound holds in the penultimate round of \pref{alg:adversarial_epoch_balancing} and the regret in the final round can be at most $1$, this bound holds for all rounds $t$  played by \pref{alg:adversarial_epoch_balancing}, including the last.
\end{proof}

\begin{lemma}[Regret bounds are balanced]
\label{lem:regret_bounds_balanced_prob}
Let $\alpha \in (0,1)$ be arbitrary but fixed. With probability at least $1 - \alpha$, the sum of regret bounds satisfy in all iterations $t$ of \pref{alg:adversarial_epoch_balancing} the following upper-bound 
\begin{align*}
\sum_{i \in \Ical} R_i(n_i(t))
& \leq 
    8.12 \sum_{i \in \Ical} R^{\max}_i \ln \frac{5.2|\Ical|\ln \left( 2 t\right)}{\alpha} %
 + 2 \sum_{i \in \Ical} \beta_{t, i} \sqrt{3 d_i p_i t \left( 1 + b_i^2\right) \ln \frac{\lambda_i d_i +  3tp_i L^2_i}{\lambda_i d_i}}
\end{align*}
where $b_i = \frac{R^{\max}_i}{2 \beta_{t, i}} \wedge \frac{L_i}{\sqrt{\lambda_i}}$.
\end{lemma}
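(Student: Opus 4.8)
The plan is to bound the realized regret‑bound sum $\sum_{i\in\Ical} R_i(n_i(t))$ learner by learner. Fix $i\in\Ical$. Because the OFUL ellipsoid radius $\beta_{k,i}$ is non‑decreasing in the number of plays, $\beta_{k,i}\le\beta_{t,i}$ for all $k\le t$; combining this with $\|a_{k,i}\|_2\le L_i$ and $\Sigma_{k,i}\succeq\lambda_i\mathbb{I}$, each summand of $R_i(n_i(t))=2\sum_{k\in T_i(t)}\big(\beta_{k,i}\|a_{k,i}\|_{\Sigma_{k,i}^{-1}}\wedge R^{\max}_i\big)$ is at most $2\beta_{t,i}\big(\|a_{k,i}\|_{\Sigma_{k,i}^{-1}}\wedge b_i\big)$ with $b_i$ as in the statement (the factor $2$ in $b_i$ simply absorbs the usual slack of the elliptical‑potential argument). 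Cauchy--Schwarz then yields
\begin{align*}
R_i(n_i(t))\ \le\ 2\beta_{t,i}\sqrt{n_i(t)}\,\sqrt{\sum_{k\in T_i(t)}\Big(\|a_{k,i}\|^2_{\Sigma_{k,i}^{-1}}\wedge b_i^2\Big)}~.
\end{align*}

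The inner sum is exactly what the randomized elliptical potential lemma \pref{lem:elliptical_random} is designed for: since in each round learner $i$ is selected with probability $p_i$ independently of the history, that lemma gives, with probability at least $1-\tfrac{\alpha}{2|\Ical|}$ and simultaneously for all $t$,
\begin{align*}
\sum_{k\in T_i(t)}\Big(\|a_{k,i}\|^2_{\Sigma_{k,i}^{-1}}\wedge b_i^2\Big)\ \le\ d_i\,(1+b_i^2)\,\ln\frac{\lambda_i d_i+3tp_i L_i^2}{\lambda_i d_i}~,
\end{align*}
i.e. the determinant/log term is controlled by the \emph{expected} number of plays $p_i t$ rather than by $n_i(t)$ or $t$ --- this is precisely the gain over the deterministic potential lemma that later lets us sum the $\sqrt{z_i p_i}$ terms favourably in \pref{thm:adv_general_regret}.

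It remains to replace the leading $\sqrt{n_i(t)}$. Within an epoch the sampling law $p$ is fixed, so $n_i(t)=\sum_{k=1}^t\one\{i_k=i\}$ is a sum of independent $\mathrm{Bernoulli}(p_i)$ variables, and a time‑uniform multiplicative deviation inequality (e.g.\ the Bernoulli boundary of \citet{howard2018uniform}) gives, with probability at least $1-\tfrac{\alpha}{2|\Ical|}$ and for all $t$, that either $n_i(t)\le 3p_i t$, or $n_i(t)$ lies in a ``fluctuation'' regime of size $O\!\big(\ln\tfrac{5.2|\Ical|\ln(2t)}{\alpha}\big)$; in the latter case we bound $R_i(n_i(t))\le 2R^{\max}_i n_i(t)$ directly, which produces the additive $8.12\,R^{\max}_i\ln\tfrac{5.2|\Ical|\ln(2t)}{\alpha}$ term. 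In the former regime the three displays combine to $R_i(n_i(t))\le 2\beta_{t,i}\sqrt{3 d_i p_i t(1+b_i^2)\ln\tfrac{\lambda_i d_i+3tp_i L_i^2}{\lambda_i d_i}}$. A union bound over the $|\Ical|$ base learners (each bad event charged $\tfrac{\alpha}{2|\Ical|}$; the $\ln\ln(2t)$ inside the log coming from the time‑uniform boundaries) and summing over $i\in\Ical$ gives the claimed inequality. The main obstacle is \pref{lem:elliptical_random} itself: one must show that $\ln\det\Sigma_{t,i}$, accumulated only over the randomly subsampled rounds of an adaptively chosen subsequence, concentrates around the value it would have with $p_i t$ plays --- uniformly over all $t$ --- and then mesh this with the small‑$p_i t$ fluctuation regime so the residual is no larger than the additive $R^{\max}_i$ term.
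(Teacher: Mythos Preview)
Your overall structure matches the paper's proof: bound each $R_i(n_i(t))$ via Cauchy--Schwarz and an elliptical potential argument, then use a time-uniform play-count concentration (the paper's \pref{lem:playcount_ub}) to replace $n_i(t)$ by either $3p_i t$ or the $O(\ln\tfrac{|\Ical|\ln t}{\alpha})$ fluctuation term, the latter producing the additive $R^{\max}_i$ contribution. So the skeleton is right.

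The one real issue is the elliptical-potential step. You invoke the \emph{randomized} elliptical potential lemma (\pref{lem:elliptical_random}) for the inner sum $\sum_{k\in T_i(t)}\big(\|a_{k,i}\|^2_{\Sigma_{k,i}^{-1}}\wedge b_i^2\big)$, but that lemma is stated for a sum over \emph{all} rounds $k=1,\ldots,t$ while the covariance matrix is updated only at the randomly selected ones; its conclusion carries an unavoidable $1/p$ factor and does not yield your claimed bound. Here the sum ranges only over $k\in T_i(t)$, and $\Sigma_{k,i}$ is updated at exactly those rounds, so the \emph{standard} elliptical potential lemma (\pref{lem:elliptical}) applies directly and gives
\[
\sum_{k\in T_i(t)}\big(\|a_{k,i}\|^2_{\Sigma_{k,i}^{-1}}\wedge b_i^2\big)\ \le\ d_i(1+b_i^2)\ln\frac{\lambda_i d_i+n_i(t)L_i^2}{\lambda_i d_i}\,,
\]
a purely deterministic bound in $n_i(t)$. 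The $3tp_i$ inside the logarithm then comes from the same play-count concentration you already use for the leading $\sqrt{n_i(t)}$ factor, not from any randomized potential argument. This is exactly what the paper does. The randomized elliptical potential lemma is needed elsewhere in \pref{thm:adv_general_regret}, for the term $\sum_{k=1}^t[\mu^\star_k-B_{k,\star}]$, which \emph{is} a sum over all rounds with subsampled updates; it is not needed for the present lemma, so your identified ``main obstacle'' is a red herring here. With this correction your argument coincides with the paper's.
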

\begin{proof}
By the choice of regret bounds we have
\begin{align*}
    R_i(n_i(t)) & =  \sum_{k \in T_i(t)}\left[ 2\beta_{ k, i} \|a_{k, i}\|_{\Sigma_{k, i}^{-1}}
     \wedge R^{\max}_i\right]
    \\
    &\leq  
    R^{\max}_i n_i(t) \wedge 2  \beta_{ t, i}
    \sum_{k \in T_i(t)}\left(  \|a_{k, i}\|_{\Sigma_{k, i}^{-1}} \wedge \frac{R^{\max}_i}{2 \beta_{ t, i}} \right)
    \\
    &\leq
        R^{\max}_i n_i(t) \wedge 2  \beta_{ t, i}
    \sqrt{n_i(t) \sum_{k \in T_i(t)}\left(  \|a_{k, i}\|^2_{\Sigma_{k, i}^{-1}} \wedge \left(\frac{R^{\max}_i}{2 \beta_{t, i}}\right)^2 \wedge \frac{L_i^2}{\lambda_i}\right)}
    \\
   & \leq R^{\max}_i n_i(t) \vee 2 \beta_{t, i} \sqrt{d_i n_i(t) \left( 1 + b_i^2\right) \ln \frac{\lambda_i +  n_i(t) L_i^2 / d_i}{\lambda_i}} 
\end{align*} 
where $b_i = \frac{R^{\max}_i}{2 \beta_{t, i}} \wedge \frac{L_i}{\sqrt{\lambda_i}}$ and the last inequality follows from of \pref{lem:elliptical}. 
To control the the number of times each learner was chosen, we use \pref{lem:playcount_ub}. This gives with probability at least $1 - \alpha$ for all iterations $t$ simultaneously
$ n_i(t) \leq 3 t p_i \vee 8.12 \ln \frac{5.2|\Ical|\ln \left( 2 t\right)}{\alpha}$.
This yields a regret bound of
\begin{align*}
    R_i(n_i(t)) & \leq 
    8.12 R^{\max}_i \ln \frac{5.2|\Ical|\ln \left( 2 t\right)}{\alpha} 
    \quad \vee  \quad 2 \beta_{t, i} \sqrt{3 d_i p_i t \left( 1 + b_i^2\right) \ln \frac{\lambda_i +  3 t p_i L^2_i / d_i}{\lambda_i}}.
\end{align*} 
Summing over $R_i$ and plugging in $\beta_{t, i}$ yields
\begin{align*}
\sum_{i \in \Ical} R_i(n_i(t))
& \leq 
    8.12 \sum_{i \in \Ical} R^{\max}_i \ln \frac{5.2|\Ical|\ln \left( 2 t\right)}{\alpha} %
 + 2 \sum_{i \in \Ical} \beta_{t, i} \sqrt{3 d_i p_i t \left( 1 + b_i^2\right) \ln \frac{\lambda_i +  3 t p_i L^2_i / d_i}{\lambda_i}}
\end{align*}

\end{proof}

\begin{lemma}\label{lem:playcount_ub}
The number of times each a learner $i \in \Ical$ has been played in \pref{alg:adversarial_epoch_balancing} after $t$ iterations is bounded with probability at least $1 - \delta$ for all $t \in \NN$ and $i \in \Ical$ as
\begin{align*}
    n_i(t) \leq \frac{3}{2}tp_i +4.06 \ln \frac{5.2|\Ical|\ln \left( 2 t\right)}{\delta}
    \leq 3 t p_i \vee 8.12 \ln \frac{5.2|\Ical|\ln \left( 2 t\right)}{\delta}
\end{align*}
\end{lemma}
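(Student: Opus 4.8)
The plan is to exploit the fact that, during a single execution of \pref{alg:adversarial_epoch_balancing}, the active set $\Ical$ and every quantity entering Equation~\eqref{eqn:prob_def} are fixed, so the sampled indices $i_1, i_2, \dots$ are i.i.d.\ draws from $\operatorname{Categorical}(p)$. Consequently, for a fixed learner $i$ the indicator sequence $Y_k = \one\{i_k = i\}$ consists of i.i.d.\ Bernoulli$(p_i)$ variables, $n_i(t) = \sum_{k=1}^t Y_k$, and $\EE[n_i(t)] = t p_i$. The task then reduces to a uniform-in-time upper-tail bound on a sum of bounded i.i.d.\ random variables with small variance; this is precisely the regime where a variance-adaptive (Bernstein/Bennett-type) concentration inequality is needed in order to obtain the multiplicative term $\tfrac{3}{2} t p_i$ rather than an additive $\sqrt{t}$ term.

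First I would center the process, setting $X_k = Y_k - p_i \in [-p_i,\, 1-p_i]$, which is a martingale-difference sequence with conditional variance $\operatorname{Var}(X_k \mid \Fcal_{k-1}) = p_i(1-p_i) \le p_i$; hence the partial sums $S_t = n_i(t) - t p_i$ form a sub-Bernoulli (in particular sub-gamma-type) process in the terminology of \citet{howard2018uniform}, with variance process bounded by $V_t \le t p_i$. Applying the stitched line-crossing inequality of \citet{howard2018uniform} with the appropriate variance-adaptive family and the polynomial-stitching boundary of their Equation~(11) yields, with probability at least $1-\delta'$ and simultaneously for all $t \in \NN$, a bound of the form
\begin{align*}
S_t \le a_1 \sqrt{t p_i \left(\ln\ln(2 t p_i \vee 2) + \ln(a_2/\delta')\right)} + a_3 \left(\ln\ln(2 t p_i \vee 2) + \ln(a_2/\delta')\right)
\end{align*}
for absolute constants $a_1, a_2, a_3$. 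I would then use $\ln\ln(2 t p_i \vee 2) \le \ln\ln(2t)$ (since $p_i \le 1$), combine $\ln\ln(2t) + \ln(a_2/\delta') = \ln\!\big(a_2 \ln(2t)/\delta'\big) =: \ell$, and apply AM--GM in the form $a_1\sqrt{t p_i \,\ell} \le \tfrac12 t p_i + \tfrac{a_1^2}{2}\ell$ to the square-root term, obtaining $S_t \le \tfrac12 t p_i + (\tfrac{a_1^2}{2}+a_3)\ell$ and therefore $n_i(t) = t p_i + S_t \le \tfrac32 t p_i + (\tfrac{a_1^2}{2}+a_3)\ell$.

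To make the conclusion hold simultaneously for all $i \in \Ical$, I would union bound over the $|\Ical|$ learners, i.e.\ set $\delta' = \delta / |\Ical|$, which replaces $\ell$ by $\ln\!\big(a_2 |\Ical| \ln(2t)/\delta\big)$; matching the Howard--et--al.\ constant to $a_2 = 5.2$ and bounding $\tfrac{a_1^2}{2}+a_3 \le 4.06$ gives the first claimed inequality $n_i(t) \le \tfrac32 t p_i + 4.06\ln\frac{5.2|\Ical|\ln(2t)}{\delta}$. The second inequality is then purely arithmetic: writing $\ell = \ln\frac{5.2|\Ical|\ln(2t)}{\delta}$, if $4.06\,\ell \le \tfrac32 t p_i$ then $\tfrac32 t p_i + 4.06\,\ell \le 3 t p_i$, while otherwise $\tfrac32 t p_i + 4.06\,\ell < 8.12\,\ell$, so in either case $\tfrac32 t p_i + 4.06\,\ell \le 3 t p_i \vee 8.12\,\ell$. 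The only real obstacle is bookkeeping of constants --- tracking the prefactors through the stitching bound and the AM--GM step so that $3/2$, $4.06$ and $5.2$ come out exactly as stated; conceptually this is the same uniform concentration argument already used in \pref{lem:mustar_conc} and \pref{lem:mu_adv_conc}, only in its Bernstein rather than Hoeffding form so that the variance $V_t \le t p_i$ is exploited.
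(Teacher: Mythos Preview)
Your proposal is correct and follows essentially the same route as the paper: the paper also centers the Bernoulli indicators into a martingale-difference sequence, invokes the Bennett/sub-$\psi_P$ case of \citet{howard2018uniform} (via the paper's \pref{lem:uniform_emp_bernstein}) with variance process $W_t = t p_i(1-p_i)$, applies the same AM--GM step to absorb the $\sqrt{t p_i\,\ell}$ term into $\tfrac12 t p_i$, then union-bounds over $i \in \Ical$ and finishes with exactly your case split for the second inequality. The only cosmetic difference is that the paper packages the stitched-boundary constants into its own \pref{lem:uniform_emp_bernstein} (derived from Equation~(10) of \citet{howard2018uniform}) rather than citing Equation~(11) directly, which is what produces the specific prefactors $1.44$, $0.41$, $5.2$ that roll up into $4.06$.
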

\begin{proof}
Fix an $i \in \Ical$ and consider the martingale difference sequence $X_t = \one\{i_t = i\} - p_i$ with variance. The process $S_t = \sum_{k=1}^t X_k$ with variance process $W_t = t p_i(1-p_i)$ satisfies the sub-$\psi_P$ condition of \citet{howard2018uniform} with constant $c = 1$ (see Bennett case in Table~3 of \citet{howard2018uniform}). By \pref{lem:uniform_emp_bernstein}, the bound
\begin{align*}
   S_t \leq &~ 1.44 \sqrt{(W_t \vee m) \left( 1.4 \ln \ln \left(2 (W_t/m \vee 1)\right) + \ln \frac{5.2}{\delta}\right)}\\
    &+ 0.41 \frac{L^2}{\lambda}  \left( 1.4 \ln \ln \left( 2 (W_t/m \vee 1)\right) + \ln \frac{5.2}{\delta}\right)
\end{align*}
holds for all $t \in \NN$ with probability at least $1 - \delta$. We set $m = t p_i$ and upper-bound the RHS further as
\begin{align*}
   S_t &\leq 1.44 \sqrt{t p_i \left( 1.4 \ln \ln \left(2 t\right) + \ln \frac{5.2}{\delta}\right)}
    + 0.41 \left( 1.4 \ln \ln \left( 2 t\right) + \ln \frac{5.2}{\delta}\right)
    \\ &\leq \frac{t p_i}{2} +  1.45 \left( 1.4 \ln \ln \left( 2 t\right) + \ln \frac{5.2}{\delta}\right),
\end{align*}
where used the AM-GM inequality in the final step.
We therefore get that with probability at least $1 - \delta$, the following upper-bound in the number of times learner $i$ was selected by time $t$ holds for all $i \in \Ical$ and $t \in \NN$:
\begin{align*}
    n_i(t) \leq\frac{3}{2}tp_i + 2.9 \left( 1.4 \ln \ln \left( 2 t\right) + \ln \frac{5.2|\Ical|}{\delta}\right)  \leq \frac{3}{2}tp_i +4.06 \ln \frac{5.2|\Ical|\ln \left( 2 t\right)}{\delta}.
\end{align*}
We can now distinguish between two cases: 
When $\frac{3}{2} t p_i  \leq 4.06 \ln \frac{5.2|\Ical|\ln \left( 2 t\right)}{\delta}$, then
\begin{align*}
    n_i(t) \leq 8.12 \ln \frac{5.2|\Ical|\ln \left( 2 t\right)}{\delta}
\end{align*}
and otherwise
$n_i(t) \leq 3 t p_i$.
\end{proof}

\section{Ancillary Technical Lemmas}\label{app:ancillary}
\begin{lemma}[Regret Bound for \oful] \label{lem:LinUCB_regret_guarantee}
Assume \oful (\pref{alg:OFUL}) uses regularization parameter $\lambda > 0$ 
chooses the each action as 
\begin{align*}
    a_t \in \argmax_{a \in \Acal_t} \langle \wh \theta_t, a\rangle + \beta_t \|a\|_{V_t^{-1}},
\end{align*} 
where $\theta_t$ is a parameter estimate, $\beta_t \in \RR$ is a confidence width and $V_t \succcurlyeq \lambda I + \sum_{l=1}^{t-1} a_l a_l^\top$ is a covariance matrix. In the event that the true parameter $\theta_\star$ was contained at all times in the confidence ellipsoid, that is, $\|\theta_\star - \hat \theta_t\|_{V_{t}} \leq \beta_t$ for all $t \in [T]$, the (pseudo-)regret is bounded as
\begin{align*}
    \regret(T) 
    \leq 2 \beta_{\max}
    \sqrt{dT \left(1 + \frac{L^2}{\lambda}\right) \ln \frac{d \lambda + TL^2}{d \lambda}},
\end{align*}
where $\beta_{\max} = \max_{t \in [T]} \beta_t$ is the largest confidence width during all rounds and $L = \max_{a \in \bigcup_t\Acal_t}\|a\|_2$ be a bound on the action norms.
\end{lemma}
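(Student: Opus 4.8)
The plan is the standard optimism-plus-elliptical-potential analysis of \oful, the only extra bookkeeping being that $V_t$ is merely assumed to dominate the regularized design matrix $\bar V_t := \lambda I + \sum_{l=1}^{t-1} a_l a_l^\top$. \textbf{Step 1 (per-round optimism).} Work on the event $\|\theta_\star - \hat\theta_t\|_{V_t} \le \beta_t$ for all $t$. Fix $t$. Since $a_t$ maximizes $a \mapsto \langle\hat\theta_t,a\rangle+\beta_t\|a\|_{V_t^{-1}}$ over $\Acal_t \ni a_t^\star$, and Cauchy--Schwarz in the $V_t$-inner product gives $\langle a_t^\star,\theta_\star-\hat\theta_t\rangle \le \|a_t^\star\|_{V_t^{-1}}\|\theta_\star-\hat\theta_t\|_{V_t} \le \beta_t\|a_t^\star\|_{V_t^{-1}}$, one gets
\[
\langle a_t^\star,\theta_\star\rangle \;\le\; \langle\hat\theta_t,a_t^\star\rangle+\beta_t\|a_t^\star\|_{V_t^{-1}} \;\le\; \langle\hat\theta_t,a_t\rangle+\beta_t\|a_t\|_{V_t^{-1}} \;\le\; \langle a_t,\theta_\star\rangle + 2\beta_t\|a_t\|_{V_t^{-1}},
\]
the last step again by Cauchy--Schwarz, $\langle a_t,\hat\theta_t-\theta_\star\rangle \le \beta_t\|a_t\|_{V_t^{-1}}$. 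So the round-$t$ regret is at most $2\beta_{\max}\|a_t\|_{V_t^{-1}}$, and summing over $t$ and applying Cauchy--Schwarz across the $T$ rounds,
\[
\regret(T) \;\le\; 2\beta_{\max}\sum_{t=1}^T\|a_t\|_{V_t^{-1}} \;\le\; 2\beta_{\max}\sqrt{T\textstyle\sum_{t=1}^T\|a_t\|_{V_t^{-1}}^2}.
\]

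\textbf{Step 2 (elliptical potential).} Since $V_t \succcurlyeq \bar V_t \succ 0$, we have $V_t^{-1}\preccurlyeq\bar V_t^{-1}$, hence $\|a_t\|_{V_t^{-1}}^2 \le \|a_t\|_{\bar V_t^{-1}}^2$, so it suffices to bound $\sum_t\|a_t\|_{\bar V_t^{-1}}^2$; this is exactly \pref{lem:elliptical}, but self-contained the argument runs as follows. Each term satisfies $\|a_t\|_{\bar V_t^{-1}}^2 \le L^2/\lambda$, so by concavity of $x\mapsto\ln(1+x)$ on $[0,L^2/\lambda]$ and the elementary bound $L^2/\lambda \le (1+L^2/\lambda)\ln(1+L^2/\lambda)$ we get $\|a_t\|_{\bar V_t^{-1}}^2 \le (1+L^2/\lambda)\ln\!\big(1+\|a_t\|_{\bar V_t^{-1}}^2\big)$. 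The matrix-determinant lemma gives $\det\bar V_{t+1} = \det\bar V_t\,\big(1+\|a_t\|_{\bar V_t^{-1}}^2\big)$, so $\sum_{t=1}^T\ln(1+\|a_t\|_{\bar V_t^{-1}}^2) = \ln\frac{\det\bar V_{T+1}}{\det\bar V_1}$; and AM--GM on the eigenvalues of $\bar V_{T+1}$ together with $\mathrm{tr}\,\bar V_{T+1} \le d\lambda + TL^2$ and $\det\bar V_1 = \lambda^d$ yields $\ln\frac{\det\bar V_{T+1}}{\det\bar V_1} \le d\ln\frac{d\lambda+TL^2}{d\lambda}$. Combining, $\sum_t\|a_t\|_{V_t^{-1}}^2 \le d\,(1+L^2/\lambda)\ln\frac{d\lambda+TL^2}{d\lambda}$.

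\textbf{Step 3 (conclude) and the one delicate point.} Plugging Step 2 into Step 1 gives exactly $\regret(T) \le 2\beta_{\max}\sqrt{dT(1+L^2/\lambda)\ln\frac{d\lambda+TL^2}{d\lambda}}$. The argument is essentially mechanical; I do not expect a genuine obstacle, but the spot deserving care is the per-step conversion $\|a_t\|_{\bar V_t^{-1}}^2 \le (1+L^2/\lambda)\ln(1+\|a_t\|_{\bar V_t^{-1}}^2)$, which is precisely what is needed in the regime $L^2 > \lambda$ (where individual potentials may exceed $1$) to recover the stated $(1+L^2/\lambda)$ factor rather than the naive $L^2/\lambda$; the other thing to get right, though trivial, is that passing from the algorithm's $V_t$ to $\bar V_t$ in the potential sum only helps, so no factor is lost there.
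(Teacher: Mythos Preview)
Your proof is correct and follows essentially the same route as the paper's: per-round optimism via the confidence ellipsoid and the algorithm's action choice, Cauchy--Schwarz across rounds, the passage from $V_t$ to $\bar V_t$ (the paper's $\Sigma_t$) using $V_t\succcurlyeq\bar V_t$, and the elliptical potential lemma with the $(1+L^2/\lambda)$ constant. The only cosmetic difference is that the paper moves from $V_t$ to $\Sigma_t$ already in the per-round bound before summing, whereas you do it after Cauchy--Schwarz inside the potential sum; both orderings are fine.
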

\begin{remark}
This regret bound for OFUL holds for any, possibly random, sequence of confidence widths as long as the true parameter is contained in the confidence ellipsoid. It does not assume any specific form or monotonicity or $\beta_t \geq 1$. It also does not prescribe that the covariance matrix exactly matches $\lambda I + \sum_{l=1}^{t-1} a_l a_l^\top$. This makes this regret bounds applicable to the case where $\hat \theta_t$ includes additional observations besides the ones from previous rounds played by the algorithm.
\end{remark}
\begin{proof}
The immediate regret at time $t$ (defined as the difference of the expected reward of the optimal action choice $a^\star_t \in \argmax_{a \Acal_t} \langle \theta_\star, a\rangle$ and the action $a_t$ taken by the algorithm) is bounded as
\begin{align*}
    \langle \theta_\star, a_t^\star - a_t\rangle
    & \overset{(i)}{\leq} \langle \wh \theta_t, a_t^\star \rangle
    + \beta_t\|a_t^\star \|_{V_{t}^{-1}}
    - \langle \theta_\star, a_t\rangle
    \\
    & \overset{(ii)}{\leq} \langle \wh \theta_t, a_t \rangle
    + \beta_t\|a_t \|_{V_{t}^{-1}}
    - \langle \theta_\star, a_t\rangle
        \\
    & \overset{(iii)}{\leq} 2\beta_t\|a_t \|_{V_{t}^{-1}} 
    \overset{(iv)}{\leq} 2\beta_t\|a_t \|_{\Sigma_{t}^{-1}},
\end{align*}
where $\Sigma_t = \lambda I + \sum_{l=1}^{t-1} a_l a_l^\top$. 
Step $(i)$ follows from $\|\theta_\star - \hat \theta_t\|_{V_{t}} \leq \beta_t$, step $(ii)$ from the algorithm's action choice and step $(iii)$ again from the confidence ellipsoid $\|\theta_\star - \hat \theta_t\|_{V_{t}} \leq \beta_t$. Finally, step $(iv)$ follows from the assumption that $V_t \succcurlyeq \lambda I + \sum_{l=1}^{t-1} a_l a_l^\top = \Sigma_t$.

Since $L$ is a bound of the action norm and $\Sigma_t \succcurlyeq \lambda I$, we have $\|a_t \|_{\Sigma_{t}^{-1}} = \|\Sigma_{t}^{-1/2} a_t\|_2 \leq  \frac{L}{\sqrt{\lambda}}$.
Thus, we can bound the regret as
\begin{align*}
    \regret(T) 
    & \leq 
    2\sum_{t=1}^T  \beta_t \|a_t \|_{\Sigma_{t}^{-1}}\\
        &\leq 2\sqrt{\sum_{t=1}^T \beta_t^2 } \sqrt{ \sum_{t=1}^T \|a_t \|_{\Sigma_{t}^{-1}}^2} && \textrm{(Cauchy-Schwarz)}\\
    &\leq 2 \beta_{\max} \sqrt{T \sum_{i=1}^T \frac{L^2}{\lambda} \wedge \|a_t \|_{\Sigma_{t}^{-1}}^2} \\
    &\leq 
    2 \beta_{\max} \sqrt{T \left(1 + \frac{L^2}{\lambda}\right) \ln \frac{\det \Sigma_{T+1}}{\det \Sigma_1}}
    && \textrm{(\pref{lem:elliptical} below)}
    \\
    & \leq 2
    \beta_{\max} \sqrt{dT \left(1 + \frac{L^2}{\lambda}\right) \ln \frac{d \lambda + TL^2}{d \lambda}}.
\end{align*}
\end{proof}

\begin{lemma}[Elliptical potential]
\label{lem:elliptical}
Let $x_1, \dots, x_n \in \mathbb R^d$ and $V_t = V_0 + \sum_{i=1}^t x_i x_i^\top$ and $b > 0$ then
\begin{align*}
    \sum_{t=1}^n b \wedge \|x_t \|_{V_{t-1}^{-1}}^2 \leq \frac{b}{\ln(b + 1)} \ln \frac{\det V_n}{\det V_0} \leq (1 + b) \ln \frac{\det V_n}{\det V_0}.
\end{align*}
\end{lemma}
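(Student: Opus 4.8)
The plan is to reduce the sum to a log-determinant ratio via the standard rank-one update identity, and then bound each truncated quadratic form by a logarithm using concavity. First I would recall the matrix determinant lemma: since $V_t = V_{t-1} + x_t x_t^\top$ and $V_0 \succ 0$ (so every $V_t$ is invertible, as is the case in all applications where $V_0 = \lambda I$ with $\lambda > 0$), we have $\det V_t = \det V_{t-1}\,(1 + \|x_t\|_{V_{t-1}^{-1}}^2)$. Taking logarithms and telescoping over $t = 1, \dots, n$ gives
\[
\sum_{t=1}^n \ln\!\left(1 + \|x_t\|_{V_{t-1}^{-1}}^2\right) = \ln\frac{\det V_n}{\det V_0}.
\]

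Second, I would establish the scalar inequality that for every $u \ge 0$ and every $b > 0$,
\[
b \wedge u \le \frac{b}{\ln(1+b)}\,\ln(1+u).
\]
For $u \in [0,b]$ this follows from concavity of $u \mapsto \ln(1+u)$: the graph lies above the chord joining $(0,0)$ and $(b, \ln(1+b))$, i.e.\ $\ln(1+u) \ge \frac{\ln(1+b)}{b}\,u$, which rearranges to the claim. For $u > b$ we have $b \wedge u = b = \frac{b}{\ln(1+b)}\ln(1+b) \le \frac{b}{\ln(1+b)}\ln(1+u)$ since $\ln$ is increasing. Applying this with $u = \|x_t\|_{V_{t-1}^{-1}}^2$ for each $t$, summing, and substituting the telescoped identity yields
\[
\sum_{t=1}^n b \wedge \|x_t\|_{V_{t-1}^{-1}}^2 \le \frac{b}{\ln(1+b)} \sum_{t=1}^n \ln\!\left(1 + \|x_t\|_{V_{t-1}^{-1}}^2\right) = \frac{b}{\ln(1+b)}\,\ln\frac{\det V_n}{\det V_0},
\]
which is the first inequality of the lemma.

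Finally, for the second inequality I would show $\frac{b}{\ln(1+b)} \le 1 + b$, equivalently $\frac{b}{1+b} \le \ln(1+b)$. Writing $s = 1 + b \ge 1$, this is $1 - \tfrac{1}{s} \le \ln s$, which follows because $g(s) = \ln s - 1 + \tfrac{1}{s}$ satisfies $g(1) = 0$ and $g'(s) = \tfrac{s-1}{s^2} \ge 0$ for $s \ge 1$. I do not anticipate a genuine obstacle here; the only points needing care are the concavity argument for the scalar inequality and the implicit positive-definiteness of $V_0$ that makes the norms $\|\cdot\|_{V_{t-1}^{-1}}$ well-defined.
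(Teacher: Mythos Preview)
Your proof is correct and follows essentially the same approach as the paper: the paper's proof sketch reduces to the standard elliptical potential argument (matrix determinant lemma plus telescoping) together with the scalar inequality $b \wedge u \le c\ln(1+u)$ for $c = \frac{b}{\ln(1+b)}$, which it justifies by concavity and monotonicity of $\ln(1+\cdot)$ and checking the critical point $u = b$. You have filled in all the details of this outline, including the final inequality $\frac{b}{\ln(1+b)} \le 1+b$, which the paper leaves implicit.
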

\begin{proof}[Proof Sketch]
The proof is identical to the usual elliptical potential lemma \citep[Lemma~19.4]{lattimore2018bandit} where $b = 1$ except that we need to argue that for any $b > 0$
\begin{align*}
    b \wedge u \leq c\ln(u + 1)
\end{align*}
holds whenever $c \geq \frac{b}{\ln(1 + b)}$. Since $\ln(1 + \cdot)$ is strictly concave and strictly monotonically increasing, it is sufficient for us to check that this inequality holds at the critical point $u = b$ which is the case.
\end{proof}

\begin{lemma}[Randomized elliptical potential]
\label{lem:elliptical_random}
Let $x_1, x_2, \dots \in \mathbb R^d$ and $I_1, I_2, \dots \in \{0,1\}$ and $V_0 \in \RR^{d \times d}$ be random variables so that
$\EE[I_k | x_1, I_1, \dots, x_{k-1}, I_{k-1}, x_k, V_0] = p$ for all $k \in \NN$. Further, let  $V_t = V_0 + \sum_{i=1}^t I_i x_i x_i^\top$. Then
\begin{align*}
    \sum_{t=1}^n b \wedge \|x_t \|_{V_{t-1}^{-1}}^2 
    &\leq 1 \vee
    2.9 \frac{b}{p}  \left( 1.4 \ln \ln \left( 2bn \vee 2\right) + \ln \frac{5.2}{\delta}\right) + \frac{2}{p} \left(1 + b\right) \ln \frac{\det V_n}{\det V_0}
    \\
    & =  \frac{4}{p}(1 + b) \ln \frac{\ln(2bn \vee 2) 5.2\det V_n}{\delta \det V_0}
\end{align*}
holds with probability at least $1 - \delta$ for all $n$ simultaneously.
\end{lemma}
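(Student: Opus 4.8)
The plan is to reduce this to two ingredients: (a) the ordinary elliptical potential lemma \pref{lem:elliptical} applied along the \emph{random subsequence} of rounds that actually updated the covariance matrix, and (b) a time-uniform Bernstein-type concentration that transfers the resulting bound from those rounds to all rounds. Write $Y_t := b \wedge \|x_t\|_{V_{t-1}^{-1}}^2$ and $L_n := \sum_{t=1}^n Y_t$ (the quantity to be bounded), and let $\Fcal_{t-1} := \sigma(x_1,I_1,\dots,x_{t-1},I_{t-1},x_t,V_0)$ be the $\sigma$-field relative to which $I_t$ has conditional mean $p$. The first observation is that $V_{t-1}=V_0+\sum_{i<t}I_i x_i x_i^\top$ and $x_t$ are both $\Fcal_{t-1}$-measurable, hence so is $Y_t$; therefore $D_t := (I_t-p)Y_t$ is a martingale-difference sequence with $|D_t|\le b$ and conditional variance $\EE[D_t^2\mid\Fcal_{t-1}] = p(1-p)Y_t^2 \le p\,b\,Y_t$ (using $Y_t\le b$).

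Next I would bound $\sum_{t=1}^n I_t Y_t$ deterministically. Enumerating the rounds with $I_t=1$ as $t_1<t_2<\cdots$, the covariance matrix is constant between consecutive such rounds, so $V_{t_j-1}=V_{t_{j-1}}$; writing $\tilde V_j := V_{t_j}$ with $\tilde V_0 := V_0$ gives $\tilde V_j = \tilde V_{j-1}+x_{t_j}x_{t_j}^\top$, which is exactly the structure \pref{lem:elliptical} needs. Applying it to $(x_{t_j})_j$ and $(\tilde V_j)_j$ yields $\sum_{t=1}^n I_t Y_t = \sum_{j:\,t_j\le n}\big(b\wedge\|x_{t_j}\|_{\tilde V_{j-1}^{-1}}^2\big) \le (1+b)\ln\frac{\det V_n}{\det V_0}$, since $\tilde V_J=V_n$ for the largest index $J$ with $t_J\le n$ (no updates occur after round $t_J$).

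Then I would apply a time-uniform empirical-Bernstein inequality \citep{howard2018uniform} --- the same tool already used in the proof of \pref{lem:playcount_ub} --- to the martingale $-\sum_{t\le n}D_t = p L_n - \sum_{t\le n}I_t Y_t$, whose predictable variance process is at most $p\,b\,L_n$ and whose increments are bounded by $b$, choosing the free intrinsic-time scale of order $pb$ so that the iterated logarithm becomes $\ln\ln(2bn\vee2)$ (note $p b L_n/(pb)=L_n\le nb$). Together with the deterministic bound this gives, with probability at least $1-\delta$ and simultaneously for all $n$, an inequality of the shape $p L_n \le (1+b)\ln\frac{\det V_n}{\det V_0} + 1.44\sqrt{(pbL_n\vee pb)\,\ell_n} + 0.41\,b\,\ell_n$ with $\ell_n := 1.4\ln\ln(2bn\vee2)+\ln\frac{5.2}{\delta}$. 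Finally I would close this self-bounding inequality: if $L_n<1$ the claimed bound holds trivially (this is the $1\vee(\cdot)$ branch and also the regime where the $\vee\,pb$ inside the root is active); otherwise $pbL_n\vee pb = pbL_n$, and applying $\sqrt{uv}\le u/2+v/2$ with $u=pL_n$ peels $\tfrac{pL_n}{2}$ off the square root, which I move to the left and divide by $p/2$. Collecting the constants ($\tfrac12\cdot1.44^2+0.41\le1.45$, then $2\cdot1.45\le2.9$) gives the first displayed bound $L_n\le\big(1\vee\tfrac{2.9b}{p}\ell_n\big)+\tfrac{2}{p}(1+b)\ln\tfrac{\det V_n}{\det V_0}$, and the second line (stated as an equality) then follows from the crude estimates $p\le1$, $b\le1+b$, and $\ln\!\big(\ln(2bn\vee2)\tfrac{5.2}{\delta}\tfrac{\det V_n}{\det V_0}\big)\ge1$, which make $\tfrac4p(1+b)\ln(\cdots)$ dominate every term.

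The main obstacle is the bookkeeping in the last step: one has to pick the intrinsic-time parameter of the uniform Bernstein bound so that the $\ln\ln$ argument comes out as exactly $2bn\vee2$ rather than something larger; verify that the low-variance/small-$L_n$ regime (which is what forces the $1\vee$) does not spoil the constants; and track $1.44$ and $0.41$ through the AM--GM split. One should also check carefully that the Howard et al.\ inequality is genuinely \emph{time-uniform}, i.e.\ holds with $n$ ranging over all of $\NN$ at once, since that is precisely why a fixed-$n$ Bernstein bound would not yield the ``for all $n$ simultaneously'' conclusion. The martingale setup and the subsequence reduction to \pref{lem:elliptical} are routine.
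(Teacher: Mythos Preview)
Your proposal is correct and is essentially the paper's own proof. Your decomposition $pL_n=\sum_t I_tY_t-\sum_t D_t$ is algebraically the same as the paper's splitting $L_n=\frac1p\sum_t I_tY_t+\frac1p\sum_t(p-I_t)Y_t$; the paper bounds the first piece by applying \pref{lem:elliptical} with $I_tx_t$ in place of $x_t$ (equivalent to your subsequence enumeration), handles the second piece via the same Howard et al.\ uniform Bernstein bound (their scale choice $m=b/p$ matches your $pb$ after accounting for the $1/p$ normalization of their martingale), and closes with the identical AM--GM self-bounding step and constants $2\cdot(1.44^2/2+0.41)\le2.9$.
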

\begin{proof}
We decompose the sum of squares as
\begin{align}
\label{eqn:sumsq1}
    \sum_{t=1}^n b \wedge \|x_t \|_{V_{t-1}^{-1}}^2
    = \frac{1}{p}\sum_{t=1}^n  (b I_t \wedge \|I_t x_t \|_{V_{t-1}^{-1}}^2) + 
    \frac{1}{p}\sum_{t=1}^n (p - I_t) (b \wedge \|x_t \|_{V_{t-1}^{-1}}^2 ) 
\end{align}
The first term can be controlled using the standard elliptical potential lemma in \pref{lem:elliptical} as
\begin{align*}
   \frac{1}{p}\sum_{t=1}^n  (b I_t \wedge \|I_t x_t \|_{V_{t-1}^{-1}}^2)
   \leq \frac{1}{p}\sum_{t=1}^n  (b \wedge \|I_t x_t \|_{V_{t-1}^{-1}}^2)
    \leq \frac{1}{p} \left(1 + b\right) \ln \frac{\det V_n}{\det V_0}.
\end{align*}
For the second term, we apply an empirical variance uniform concentration bound. 
Let $\Fcal_{i-1} = \sigma(V_0, x_1, I_1, \dots, x_{i-1}, I_{i-1}, x_i)$ be the sigma-field up to before the $i$-th indicator. 
Let $Y_i = \frac{1}{p} (p - I_i) \left(\|x_i \|^2_{V_{i-1}^{-1}} \wedge b\right)$ which is a martingale difference sequence because $\EE[Y_i | \Fcal_{i-1}] = 0$ and consider the process $S_t =  \sum_{i=1}^t Y_i$ with variance process
\begin{align*}
W_t &= \sum_{i=1}^t \EE[Y_i^2 | \Fcal_{i-1}] 
= \sum_{i=1}^t  \frac{1}{p^2} \left(\|x_i \|^2_{V_{i-1}^{-1}} \wedge b\right)^2\EE[(p - I_i)^2 | \Fcal_{i-1}]\\
&= \frac{1-p}{p} \sum_{i=1}^t \left(\|x_i \|^2_{V_{i-1}^{-1}} \wedge b\right)^2 \leq \frac{b}{p } \sum_{i=1}^t \left(\|x_i \|^2_{V_{i-1}^{-1}} \wedge b\right) \leq \frac{tb^2}{p}.
\end{align*}
Note that $Y_t \leq b$ and therefore, $S_t$ satisfies with variance process $W_t$ the sub-$\psi_P$ condition of  \citet{howard2018uniform} with constant $c = b$ (see Bennett case in Table~3 of \citet{howard2018uniform}). By \pref{lem:uniform_emp_bernstein} below, the bound
\begin{align*}
   S_t \leq &~ 1.44 \sqrt{(W_t \vee m) \left( 1.4 \ln \ln \left(2 (W_t/m \vee 1)\right) + \ln \frac{5.2}{\delta}\right)}\\
    &+ 0.41 b  \left( 1.4 \ln \ln \left( 2 (W_t/m \vee 1)\right) + \ln \frac{5.2}{\delta}\right)
\end{align*}
holds for all $t \in \NN$ with probability at least $1 - \delta$. We set $m = \frac{b}{p}$ and upper-bound the RHS further as
\begin{align*}
   & 1.44 \sqrt{\frac{b}{p}\left(1 \vee \sum_{i=1}^t \left(b \wedge \|x_i \|^2_{V_{i-1}^{-1}}\right) \right)  \left( 1.4 \ln \ln \left(2bt \vee 2\right) + \ln \frac{5.2}{\delta}\right)}\\
&    + 0.41 b  \left( 1.4 \ln \ln \left(2bt \vee 2\right) + \ln \frac{5.2}{\delta}\right)\\
&\leq   \frac{1}{2}\left(1 \vee \sum_{i=1}^t \left(b \wedge \|x_i \|^2_{V_{i-1}^{-1}}\right) \right)  + 1.45 \frac{b}{p}  \left( 1.4 \ln \ln \left(2bt \vee 2\right) + \ln \frac{5.2}{\delta}\right),
\end{align*}
where the inequality is an application of the AM-GM inequality. Thus, we have shown that with probability at least $1 - \delta$, for all $n$, the second term in \eqref{eqn:sumsq1} is bounded as
\begin{align*}
     \frac{1}{p}\sum_{t=1}^n (p - I_t) (b \wedge \|x_t \|_{V_{t-1}^{-1}}^2 )
     \leq \frac{1}{2}\left(1 \vee \sum_{i=1}^n \left(\|x_i \|^2_{V_{i-1}^{-1}} \wedge b\right) \right)  + Z.
\end{align*}
where $Z = 1.45 \frac{b}{p}  \left( 1.4 \ln \ln \left( 2bn \vee 2\right) + \ln \frac{5.2}{\delta}\right)$.
And when combining all bounds on the sum of squares term in \eqref{eqn:sumsq1}, we get that either $\sum_{i=1}^n \left(\|x_i \|^2_{V_{i-1}^{-1}} \wedge b\right) \leq 1$ or
\begin{align*}
    \sum_{i=1}^n \left(\|x_i \|^2_{V_{i-1}^{-1}} \wedge b\right) &\leq 2Z + \frac{2}{p} \left(1 + b\right) \ln \frac{\det V_n}{\det V_0}\\
    & \leq 
    \frac{4}{p}(1 + b) \ln \frac{\ln(2bn \vee 2) 5.2\det V_n}{\delta \det V_0}
\end{align*}
which gives the desired statement.
\end{proof}

\begin{lemma}[Uniform empirical Bernstein bound]
\label{lem:uniform_emp_bernstein}
In the terminology of \citet{howard2018uniform}, let $S_t = \sum_{i=1}^t Y_i$ be a sub-$\psi_P$ process with parameter $c > 0$ and variance process $W_t$. Then with probability at least $1 - \delta$ for all $t \in \NN$
\begin{align*}
    S_t &\leq  1.44 \sqrt{(W_t \vee m) \left( 1.4 \ln \ln \left(2 \left(\frac{W_t}{m} \vee 1\right)\right) + \ln \frac{5.2}{\delta}\right)}\\
   & \qquad + 0.41 c  \left( 1.4 \ln \ln \left(2 \left(\frac{W_t}{m} \vee 1\right)\right) + \ln \frac{5.2}{\delta}\right)
\end{align*}
where $m > 0$ is arbitrary but fixed.
\end{lemma}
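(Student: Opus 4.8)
The plan is to obtain this bound as a direct specialization of the polynomial-stitched line-crossing inequality of \citet{howard2018uniform} for sub-$\psi_P$ processes, followed by simplification of the constants. Recall that $S_t$ being sub-$\psi_P$ with parameter $c>0$ and variance process $W_t$ means that $\exp\{\lambda S_t - \psi_P(\lambda) W_t\}$ is a supermartingale for every $\lambda \in [0,1/c)$, where $\psi_P(\lambda) = c^{-2}\bigl(-\ln(1-c\lambda) - c\lambda\bigr)$ is the sub-Poisson cumulant generating function — this is exactly the Bennett row of Table~3 in \citet{howard2018uniform}. Consequently, their Theorem~1 on stitched boundaries applies to $S_t$ with this $\psi_P$ and this intrinsic time $W_t$, and we use only its one-sided (upper) half.

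Concretely, I would instantiate the stitched boundary with geometric grid ratio $\eta = 2$ and exponent $s = 1.4$. Their construction then yields, with probability at least $1-\delta$ and simultaneously for all $t$,
\[
S_t \;\le\; k_1\sqrt{(W_t\vee m)\,\ell_t} \;+\; k_2\, c\, \ell_t,
\qquad
\ell_t \coloneqq 1.4\,\ln\ln\!\Bigl(2\bigl(\tfrac{W_t}{m}\vee 1\bigr)\Bigr) + \ln\frac{\zeta(1.4)}{\delta} + c_0,
\]
where $k_1,k_2,c_0$ are the explicit numerical constants produced by the stitching argument for $(\eta,s)=(2,1.4)$ (the same choice that underlies the sub-Gaussian boundary used in \pref{lem:mustar_conc}; indeed $1.44\sqrt{1.4}\approx 1.70$ and $1/1.4\approx 0.72$ reproduce the constants there). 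The remaining steps are purely arithmetic: bound the Riemann-zeta factor $\zeta(1.4)$, absorb it together with the additive constant $c_0$ (which is $(\eta-1)^{-1}$-type) into $\ln(5.2/\delta)$, and check $k_1 \le 1.44$ and $k_2 \le 0.41$. The floors $W_t\vee m$ and $\tfrac{W_t}{m}\vee 1$ are handled in the standard way: the stitched boundary function is defined for intrinsic-time arguments $\ge m$ and is nondecreasing, so the crossing event for $W_t < m$ is contained in the event evaluated at the clipped value $m$.

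I expect the only real obstacle to be this constant bookkeeping: one must follow the normalized sub-$\psi$ calculations of \citet{howard2018uniform} (their Section on stitched boundaries together with Table~3) carefully enough to certify the precise numbers $1.44$, $0.41$, $1.4$, and $5.2$ — tracking $\eta^{1/2}$, the value $\zeta(1.4)\approx 3.1$, and the grid-induced additive term. This is routine but error-prone, and is where I would spend the effort; everything structural (the supermartingale property, the union bound over the geometric grid, the optimization of $\lambda$ within each epoch) is inherited verbatim from \citet{howard2018uniform}.
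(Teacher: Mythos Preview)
Your approach is essentially the same as the paper's: instantiate the polynomial stitched boundary of \citet{howard2018uniform} with $(\eta,s)=(2,1.4)$ and read off the constants. The one point you do not identify is \emph{how} the paper obtains the constant $0.41$ on the $c$-term. The paper does not apply the stitched boundary directly for sub-$\psi_P$; instead it first records the boundary as a sub-$\psi_G$ boundary (which carries constant $1.21\,c$ on the additive term), and then invokes the boundary conversion table (Table~1 in \citet{howard2018uniform}) to turn a sub-$\psi_G$ boundary with parameter $c$ into a sub-$\psi_P$ boundary with parameter $c/3$. That division by $3$ is what produces $1.21/3 \approx 0.41$. So your ``constant bookkeeping'' step is not purely arithmetic in the way you suggest: if you try to derive a sub-$\psi_P$ stitched boundary directly you will not land on $0.41$ without this $\psi_G \to \psi_P$ conversion, which is the one structural ingredient you should add to your sketch.
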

\begin{proof}
Setting $s = 1.4$ and $\eta = 2$ in the polynomial stitched  boundary in Equation~(10) of \citet{howard2018uniform} shows that $u_{c, \delta}(v)$ is a sub-$\psi_G$ boundary for constant $c$ and level $\delta$ where 
\begin{align*}
    u_{c, \delta}(v) &= 1.44 \sqrt{(v \vee 1) \left( 1.4 \ln \ln \left(2 (v \vee 1)\right) + \ln \frac{5.2}{\delta}\right)}\\
   &\quad  + 1.21 c  \left( 1.4 \ln \ln \left( 2 (v \vee 1)\right) + \ln \frac{5.2}{\delta}\right).
\end{align*}
By the boundary conversions in Table~1 in \citet{howard2018uniform} $u_{c/3, \delta}$ is also a sub-$\psi_P$ boundary for constant $c$ and level $\delta$. The desired bound then follows from Theorem~1 by \citet{howard2018uniform}.
\end{proof}

\end{document}